\DeclareMathOperator*{\R}{R}
\theoremstyle{plain}
\declaretheorem{lemma}
\theoremstyle{remark}
\newtheorem{remark}{Remark}
\theoremstyle{definition}
\newtheorem{definition}{Definition}
\newtheorem{assumption}{Assumption}
\DeclareMathOperator*{\E}{\mathbb{E}}
\DeclareMathOperator*{\Var}{\mathrm{Var}}
\DeclareMathOperator*{\Tr}{Tr}
\newcommand{\conlinucbsk}{{CLiSK}\xspace}
\newcommand{\conlinucbme}{{CLiME}\xspace}
\newcommand{\conlinucbskme}{{CLiSK-ME}\xspace}
\newcommand{\doubletilde}[1]{\Tilde{\Tilde{#1}}}
\newcommand\numberthis{\addtocounter{equation}{1}\tag{\theequation}}
\renewcommand{\vec}[1]{\bm{#1}}
\newcommand{\cmark}{\ding{51}}%
\newcommand{\xmark}{\ding{55}}%
\newcommand{\1}{\mathbb{I}}
\newcommand{\inprod}[2]{\left\langle #1, #2 \right\rangle}
\newcommand{\sign}{{\rm sign}}
\begin{document}

\title{Leveraging the Power of Conversations: Optimal Key Term Selection in Conversational Contextual Bandits}

\author{Maoli Liu}
\orcid{0000-0002-6321-6576}
\affiliation{%
  \institution{The Chinese University of Hong Kong}
  \city{Hong Kong}
  \state{}
  \country{China}
}
\email{mlliu@cse.cuhk.edu.hk}

\author{Zhuohua Li}
\authornote{Zhuohua Li is the corresponding author.}
\orcid{0000-0002-1390-0222}
\affiliation{%
  \institution{Guangzhou Institute of Technology, Xidian University}
  \city{Guangzhou}
  \state{Guangdong}
  \country{China}
}
\additionalaffiliation{%
  \institution{The Chinese University of Hong Kong}
  \city{Hong Kong}
  \state{}
  \country{China}
}
\email{zhli@cse.cuhk.edu.hk}

\author{Xiangxiang Dai}
\orcid{0000-0003-0179-196X}
\affiliation{%
  \institution{The Chinese University of Hong Kong}
  \city{Hong Kong}
  \state{}
  \country{China}
}
\email{xxdai23@cse.cuhk.edu.hk}

\author{John C.S. Lui}
\orcid{0000-0001-7466-0384}
\affiliation{%
  \institution{The Chinese University of Hong Kong}
  \city{Hong Kong}
  \state{}
  \country{China}
}
\email{cslui@cse.cuhk.edu.hk}

\begin{abstract}
Conversational recommender systems proactively query users with relevant ``\textit{key terms}'' and leverage the feedback to elicit users' preferences for personalized recommendations.
Conversational contextual bandits, a prevalent approach in this domain,  aim to optimize preference learning by balancing exploitation and exploration.
However, several limitations hinder their effectiveness in real-world scenarios.
First, existing algorithms employ key term selection strategies with insufficient exploration, often failing to thoroughly probe users' preferences and resulting in suboptimal preference estimation.
Second, current algorithms typically rely on deterministic rules to initiate conversations, causing unnecessary interactions when preferences are well-understood and missed opportunities when preferences are uncertain.
To address these limitations, we propose three novel algorithms: \conlinucbsk, \conlinucbme, and \conlinucbskme.
\conlinucbsk introduces \textit{smoothed key term contexts} to enhance exploration in preference learning, \conlinucbme \textit{adaptively initiates conversations} based on preference uncertainty, and \conlinucbskme integrates both techniques.
We theoretically prove that all three algorithms achieve a tighter regret upper bound of $\mathcal{O}(\sqrt{dT\log{T}})$ with respect to the time horizon $T$, improving upon existing methods.
Additionally, we provide a matching lower bound $\Omega(\sqrt{dT})$ for conversational bandits, demonstrating that our algorithms are nearly minimax optimal.
Extensive evaluations on both synthetic and real-world datasets show that our approaches achieve at least a 14.6\% improvement in cumulative regret.
\end{abstract}

\begin{CCSXML}
<ccs2012>
<concept>
<concept_id>10002951.10003317.10003347.10003350</concept_id>
<concept_desc>Information systems~Recommender systems</concept_desc>
<concept_significance>500</concept_significance>
</concept>
<concept>
<concept_id>10003752.10003809.10010047.10010048</concept_id>
<concept_desc>Theory of computation~Online learning algorithms</concept_desc>
<concept_significance>500</concept_significance>
</concept>
<concept>
<concept_id>10003752.10010070.10010071.10010079</concept_id>
<concept_desc>Theory of computation~Online learning theory</concept_desc>
<concept_significance>500</concept_significance>
</concept>
</ccs2012>
\end{CCSXML}

\ccsdesc[500]{Information systems~Recommender systems}
\ccsdesc[500]{Theory of computation~Online learning algorithms}
\ccsdesc[500]{Theory of computation~Online learning theory}

\keywords{Conversational Recommendation, Preference Learning, Contextual Bandits, Online Learning}

\maketitle

\newcommand\kddavailabilityurl{https://doi.org/10.5281/zenodo.15490021}

\ifdefempty{\kddavailabilityurl}{}{
\begingroup\small\noindent\raggedright\textbf{KDD Availability Link:}\\
The source code of this paper has been made publicly available at \url{\kddavailabilityurl}.
\endgroup
}

\section{Introduction}
\label{sec:intro}

Recommender systems play a crucial role in applications like movie recommendations, online advertising, and personalized news feeds, where providing relevant and engaging content is essential for user satisfaction. To cater to diverse user interests, recommender systems are designed to interact with users and continuously learn from their feedback. For instance, in product and news recommendations, the system can monitor users' real-time click rates and accordingly refine its recommendations.
Modern recommender systems incorporate advanced online learning techniques to adapt in real time and uncover previously unknown user preferences.

A fundamental challenge in recommender systems is the trade-off between \emph{exploration} (i.e., recommending new items to uncover users' unknown preferences) and \emph{exploitation} (i.e., recommending items that align with users' historical preferences).
Contextual bandits~\cite{li2010contextual} address this trade-off by enabling the system to learn from user interactions continuously while optimizing recommendations without compromising the user experience.
In this framework, each item to be recommended is treated as an ``\textit{arm}'', represented by a feature vector.
At each round, the agent (i.e., the recommender system) recommends an arm to the user based on historical interactions and the context of each arm, and then receives feedback/rewards (e.g., clicks).
The objective of the algorithm executed by the agent is to design an arm recommendation strategy that maximizes cumulative reward (or equivalently, minimizes cumulative regret) over time.

\begin{figure}[thb]
    \centering
    \includegraphics[width=0.99\linewidth]{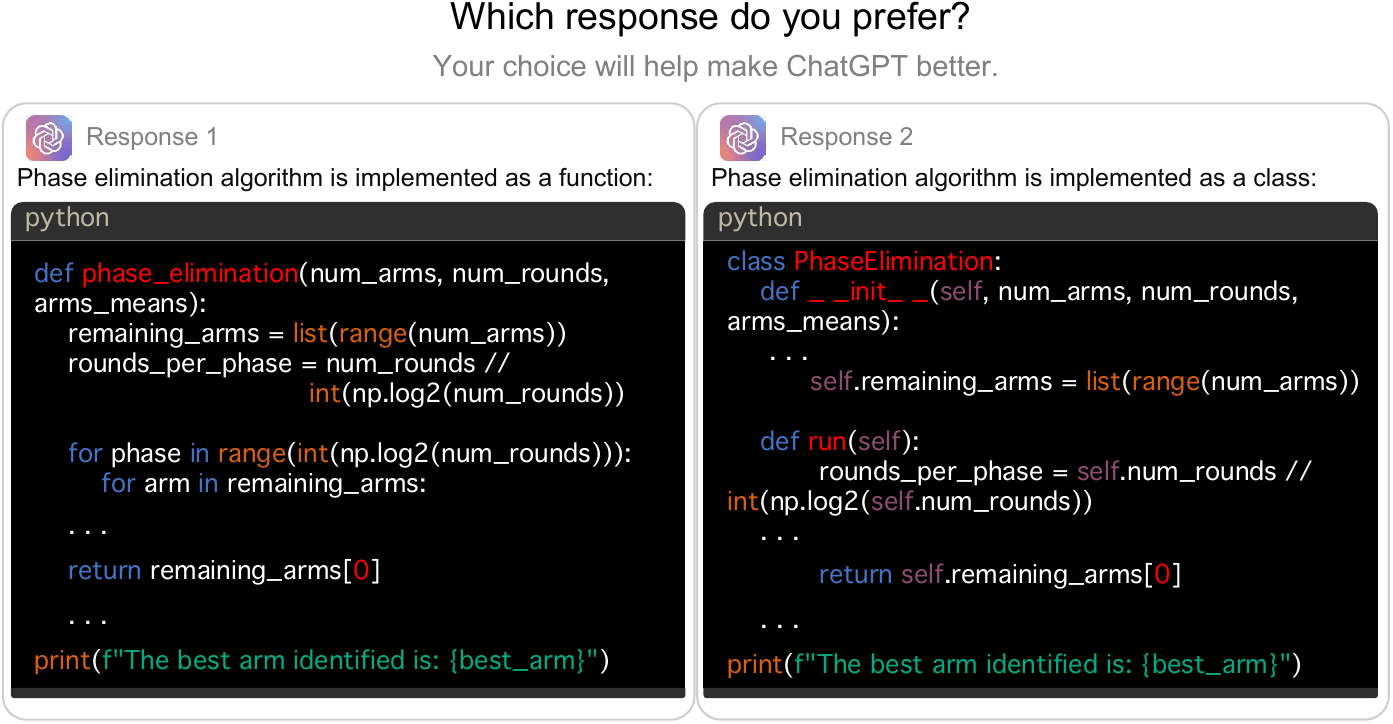}
    \caption{Illustration of conversational recommendation by ChatGPT, where users select their preferred response from presented options.}
    \label{fig:conversation}
\end{figure}

Another major challenge in recommender systems is the ``\textit{cold start}'' problem, where the system initially lacks sufficient data about new users' preferences, making accurate recommendations difficult.
Conversational recommender systems (CRSs)~\cite{christakopoulou2016towards, sun2018conversational, zhang2018towards, gao2021advances} have emerged as a promising solution.
Unlike traditional systems that rely solely on feedback from recommended items, CRSs can actively initiate queries with users to collect richer feedback and quickly infer their preferences.
For example, as shown in~\Cref{fig:conversation}, platforms like ChatGPT 
occasionally present users with multiple response options and allow them to select their preferred one.
Through these interactions, ChatGPT can refine its understanding and improve future responses to better align with user preferences.
To model these interactions, conversational contextual bandits~\cite{zhang2020conversational} are proposed as a natural extension of contextual bandits.
In this framework, besides recommending items (arms) and observing arm-level feedback, the agent can proactively prompt users with questions about key terms and receive key term-level feedback.
The key terms are related to a subset of arms, providing valuable insights into users' preferences and improving recommendation quality.

Despite recent advances in conversational contextual bandits~\cite{xie2021comparison,wang2023efficient,yang2024conversational}, existing approaches still face the following limitations:
\begin{itemize}[leftmargin=*]
    \item \textbf{Insufficient Exploration in Key Term Selection}: Existing studies about conversational bandits fail to sufficiently explore key terms, limiting their effectiveness in preference learning.
    \citet{zhang2020conversational} introduce the ConUCB algorithm with a regret upper bound of \(\mathcal{O}(d\sqrt{T}\log{T})\), where \(d\) is the dimension and \(T\) is the number of rounds.
    However, despite incorporating additional queries about key terms, the method does not yield substantial improvement over non-conversational approaches.
    Since then, improving regret through conversational interactions has remained an open problem in the field.
    \citet{wang2023efficient} and \citet{yang2024conversational} introduce an additional assumption that the key term set spans \(\mathbb{R}^{d}\) and propose the ConLinUCB-BS and ConDuel algorithms, respectively.
    The two algorithms reduce a \(\sqrt{\log{T}}\) term in the regret, but worsen the dependence on \(d\) (as discussed in \Cref{subsec:diss}), resulting in a suboptimal regret bound.
    To achieve optimal regret, more explorative key term selection strategies are needed to efficiently gather informative user feedback and improve learning efficiency.
    \item \textbf{Inflexible Conversation Mechanism}: Existing conversational bandit algorithms~\cite{zhang2020conversational,xie2021comparison} often use a deterministic function to control the frequency of conversations. 
    Specifically, the agent can only initiate \(Q\) conversations at once per \(P\) rounds, where \(P\) and \(Q\) are fixed integers. 
    However, this rigid approach is impractical and insufficient in real-world scenarios.
    For example, in a music streaming service, a fixed-frequency approach may cause unnecessary interactions when users' preferences are already well-understood, disrupting the listening experience.
    Conversely, it may fail to collect feedback when the uncertainty is high, leading to suboptimal recommendations. 
    To address these limitations, a more adaptive conversation mechanism is needed to adjust the interaction frequency based on the preference uncertainty.
\end{itemize}

Motivated by these observations, we develop three algorithms aimed at improving conversational contextual bandits.
To start, we introduce the concept of  ``\textit{smoothed key term contexts}'', inspired by the smoothed analysis for contextual bandits~\cite{kannan2018smoothed}, and propose the \underline{Con}versational \underline{Li}nUCB with \underline{S}moothed \underline{K}ey terms (\conlinucbsk) algorithm.
Specifically, \conlinucbsk launches conversations at a fixed frequency, similar to~\citet{zhang2020conversational}, but greedily selects key terms that are slightly perturbed by Gaussian noise.
For example, in movie recommendations, instead of asking directly about a genre like ``comedy'' or ``drama'',  \conlinucbsk blends elements of related genres, such as ``comedy-drama'' or ``dark comedy''.
This approach helps the system explore users' preferences in a more nuanced manner.
We will show that these small perturbations have \textit{strong theoretical implications}, allowing the agent to explore the feature space more effectively and speed up the learning process.

We next develop the \underline{Con}versational \underline{Li}nUCB with \underline{M}inimum \underline{E}igenvalues (\conlinucbme) algorithm, which introduces an \textit{adaptive conversational mechanism} driven by preference uncertainty.
Unlike the fixed-frequency approach of~\citet{zhang2020conversational,wang2023efficient}, \conlinucbme assesses preference uncertainty and initiates conversations only when the uncertainty is high, thereby maximizing information gain while avoiding unnecessary interactions.
When a conversation is triggered, \conlinucbme selects key terms that target the areas of highest uncertainty within the feature space, rapidly refining user preferences.
This adaptive approach not only ensures that conversations are timely and relevant, but also improves the user experience.
Additionally, we design a family of \textit{uncertainty checking functions} to determine when to assess the uncertainty, offering greater flexibility and better alignment with diverse applications.

The smoothed key term contexts approach in \conlinucbsk and the adaptive conversation technique in \conlinucbme are orthogonal, allowing them to be applied independently or in combination.
Therefore, we further propose the \conlinucbskme algorithm, which integrates both techniques to maximize exploration efficiency and adaptively adjust user interactions.
By leveraging the strengths of both methods, \conlinucbskme enhances exploration efficiency and optimizes user interactions for improved preference learning.

Our algorithms introduce advanced key term selection strategies, significantly enhancing the efficiency of conversational contextual bandits.
Theoretically, we prove that \conlinucbsk achieves a regret upper bound of \(\mathcal{O}(\sqrt{dT\log{T}}+d)\), while \conlinucbme and \conlinucbskme achieve a regret upper bound of \(\mathcal{O}(\sqrt{dT\log{T}})\).
Notably, all three algorithms reduce the dependence on \(T\) by a factor of \(\sqrt{d}\) compared to prior studies.
To the best of our knowledge, our work is the first to achieve the \(\widetilde{\mathcal{O}}(\sqrt{dT})\) regret in the conversational bandit literature.
In addition, we establish a matching lower bound of \(\Omega(\sqrt{dT})\), showing that our algorithms are minimax optimal up to logarithmic factors.

In summary, our contributions are listed as follows.
\begin{itemize}[leftmargin=*]
    \item We propose three novel conversational bandit algorithms: \conlinucbsk with smoothed key term contexts, \conlinucbme with an adaptive conversation mechanism, and \conlinucbskme, which integrates both for improved preference learning.
    \item We establish the minimax optimality of our algorithms by proving regret upper bounds of \(\mathcal{O}(\sqrt{dT\log{T}}+d)\) for \conlinucbsk and \(\mathcal{O}(\sqrt{dT\log{T}})\) for \conlinucbme and \conlinucbskme, along with a matching lower bound of \(\Omega(\sqrt{dT})\). These results underscore the theoretical advancements achieved by our methods.
    \item We conduct extensive evaluations on both synthetic and real-world datasets, showing that our algorithms reduce regret by over 14.6\% compared to baselines.
\end{itemize}

\section{Problem Formulation}
\label{sec:formulation}

In conversational contextual bandits, an agent interacts with a user over \(T\in\mathbb{N}_{+}\) rounds.
The user's preferences are represented by a fixed but \textit{unknown} vector \(\bm{\theta}^{*}\in\mathbb{R}^{d}\),
where \(d\) is the dimension.
The agent's goal is to learn \(\bm{\theta}^{*}\) to recommend items that align with the user's preferences.
There exists a finite arm set denoted by \(\mathcal{A}\), where each arm \(a \in \mathcal{A}\) represents an item and 
is associated with a feature vector \(\bm{x}_a \in \mathbb{R}^d\).
We denote \([T] = \{1,2,\dots, T\}\).
At each round \(t\in[T]\), the agent is given a subset of arms \(\mathcal{A}_t \subseteq \mathcal{A}\). 
The agent then selects an arm \(a_t \in \mathcal{A}_t\) and receives a reward \(r_{a_t,t}\).
The reward is assumed to be linearly related to the preference vector and the feature vector of the arm, i.e., \(r_{a_t,t} =  \bm{x}_{a_t}^{\top}\bm{\theta}^{*} + \eta_t\),
where \(\eta_t\) is a random noise term.

Let \(a_t^*\) be the optimal arm at round \(t\), i.e., \(a_t^* = \arg\max_{a \in \mathcal{A}_t} \bm{x}_{a}^{\top}\bm{\theta}^{*} \).
The agent's objective is to minimize the cumulative regret, which is defined as the total difference between the rewards of the optimal arms and the rewards obtained by the agent, i.e., 
\begin{align*}
\R(T) = 
\sum_{t=1}^{T}\ab(\bm{x}_{a_t^*}^{\top}\bm{\theta}^{*} - \bm{x}_{a_t}^{\top}\bm{\theta}^{*}).
\end{align*}

Beyond observing the user's preference information through arm recommendations, the agent can gather additional feedback by launching conversations involving key terms.
Specifically, a ``key term''  represents a category or keyword associated with a subset of arms.
For example, in movie recommendations, key terms might include genres like ``comedy'' or ``thriller'', and themes such as ``romance'' or ``sci-fi''.
Let \(\mathcal{K}\) denote the finite set of key terms, where each key term \(k \in \mathcal{K}\) corresponds to a context vector \(\tilde{\bm{x}}_k \in \mathbb{R}^d\).
At round \(t\), if a conversation is initiated, the agent selects a key term \(k \in \mathcal{K}\), queries the user, and receives key-term level feedback \(\tilde{r}_{k,t}\).
We follow the formulation of~\citet{wang2023efficient} that the user's preference vector \(\bm{\theta}^{*}\) remains consistent across both arms and key terms.
The relationship between key terms and the user's preference is also linear, i.e., \(\tilde{r}_{k,t} = \tilde{\bm{x}}_{k}^{\top}\bm{\theta}^{*} + \tilde{\eta}_t\),
where \(\tilde{\eta}_t\) is a random noise term.

We list and explain our assumptions as follows.
Both Assumptions~\ref{assumption:normalized-vector} and~\ref{assumption:subgaussian-noise} are consistent with previous works on conversational contextual bandits~\cite{zhang2020conversational,wang2023efficient} and linear contextual bandits~\cite{abbasi2011improved,li2010contextual}.

\begin{assumption} 
    \label{assumption:normalized-vector}
    We assume that the feature vectors for both arms and key terms are normalized, i.e., \(\|\bm{x}_a\|_2 = 1\) and \(\|\tilde{\bm{x}}_k\|_2 = 1\) for all \(a \in \mathcal{A}\) and \(k \in \mathcal{K}\). 
    We also assume the unknown preference vector \(\bm{\theta}^{*}\) is bounded, i.e., \(\|\bm{\theta}^{*}\|_2 \leq 1\).
\end{assumption} 

\begin{assumption}
    \label{assumption:subgaussian-noise} 
    We assume the noise terms \(\eta_t\), \(\tilde{\eta}_t\) are conditionally independent and 1-sub-Gaussian across \(T\) rounds.
\end{assumption}

\section{Algorithm Design}
\label{sec:algorithms}
In this section, we introduce our proposed algorithms, outlining their key components and implementation details.

\subsection{\conlinucbsk Algorithm}
\label{sec:conlinucbsk}

To enhance the exploration of users' preferences, we introduce the \textit{smoothed key term contexts} and propose the \conlinucbsk algorithm, detailed in Algorithm~\ref{alg:conlinucbsk}. 
The algorithm consists of two main modules: key term selection (Lines~\ref{line:smooth-key-term} to~\ref{line:decrease-q}) and arm selection (Lines~\ref{line:update-theta} to~\ref{line:update-b-arm}).
Specifically, in each round \(t\), the agent first determines whether to initiate a conversation based on a predefined query budget (Lines~\ref{line:query-budget} and~\ref{line:query-budget-loop}).
If a conversation is initiated, the agent selects a key term \(k\) (Line~\ref{line:select-key-term}) and queries the user about it.
Subsequently, the agent updates its estimate of the preference vector \(\bm{\theta}_t\) (Line~\ref{line:update-theta}) and selects an arm \(a_t\) for recommendation (Line~\ref{line:select-arm}).
The strategies for key term selection and arm selection are elaborated as follows.

\begin{algorithm}[htb]
    \DontPrintSemicolon
    \SetKwComment{Comment}{$\triangleright$\ }{}
    \SetKwInput{KwInit}{Initialization}
    \KwIn{\(\mathcal{A}\), \(\mathcal{K}\), \(b(t)\), \(\lambda\), \(\{\alpha_t\}_{t>0}\)}
    \KwInit{\(\bm{M}_1 = \lambda \bm{I}_d\), \(\bm{b}_1 = \bm{0}_d\)}
    \For{\(t = 1, \dots, T\) }{
        \(q_t = \lfloor b(t) \rfloor - \lfloor b(t-1) \rfloor\)\; \label{line:query-budget}
            \While{\(q_t > 0\)}{\label{line:query-budget-loop}
                Smooth the key term contexts to get \(\{\doubletilde{\bm{x}}_{k}\}_{k\in \mathcal{K}}\)\; \label{line:smooth-key-term}
                Select a key term \(k = \arg\max_{k \in \mathcal{K}} \doubletilde{\bm{x}}_k^{\top}\bm{\theta}_t\) \; \label{line:select-key-term}
                Query the user's feedback for \(k\)\; \label{line:query-key-term-feedback}
                Receive the key term-level feedback \(\tilde{r}_{k,t}\)\; \label{line:receive-key-term-feedback}
                \(\bm{M}_t = \bm{M}_{t} + \doubletilde{\bm{x}}_{k,t}\doubletilde{\bm{x}}_{k,t}^{\top}\)\; \label{line:update-M-key-term}
                \(\bm{b}_t = \bm{b}_{t} + \tilde{r}_{k,t}\doubletilde{\bm{x}}_{k,t}\)\; \label{line:update-b-key-term}
                \(q_t = q_t -1\)\; \label{line:decrease-q}
            }
        \(\bm{\theta}_t = \bm{M}_t^{-1}\bm{b}_t\)\; \label{line:update-theta}
        Select \(a_t = \argmax_{a \in \mathcal{A}_t} \bm{x}_a^{\top} \bm{\theta}_t  + \alpha_t \|\bm{x}_a\|_{M_t^{-1}}\)\; \label{line:select-arm}
        Ask the user's preference for arm \(a_t\)\; \label{line:ask-arm}
        Observe the reward \(r_{a_t,t}\)\; \label{line:observe-reward}
        \(\bm{M}_{t+1} = \bm{M}_t + \bm{x}_{a_t}\bm{x}_{a_t}^{\top}\)\; \label{line:update-M-arm}
        \(\bm{b}_{t+1} = \bm{b}_t + r_{a_t,t}\bm{x}_{a_t}\)\; \label{line:update-b-arm}
    }
    \caption{\conlinucbsk} \label{alg:conlinucbsk}
  \end{algorithm}

\subsubsection{Intuition Overview}
Building on insights from~\citet{kannan2018smoothed} and~\citet{raghavan2023greedy}, we add small perturbations to the key term contexts to deepen the exploration of users' preferences.
These perturbations increase data diversity and help uncover preferences that might be overlooked when selecting key terms directly. 
For instance, instead of using ``comedy'' alone, variations like ``romantic comedy'' or ``dark comedy'' can reveal more specific preferences.

Below is the formal definition of smoothed key term contexts, where the perturbations are modeled as Gaussian noise.

\begin{definition}[Smoothed Key Term Contexts]
\label{def:smoothed-key-term-contexts}
Given a key term set \(\mathcal{K}\), the smoothed key term contexts are defined as \(\{\doubletilde{\bm{x}}_{k}\}_{k \in \mathcal{K}}\), where \(\doubletilde{\bm{x}}_{k} = \tilde{\bm{x}}_{k} + \bm{\varepsilon}_{k}\) for each \(k \in \mathcal{K}\).
The noise vector \(\bm{\varepsilon}_{k}\) is independently drawn from a truncated multivariate Gaussian distribution \(\mathcal{N}(\bm{0}, \rho^2 \cdot \bm{I}_d)\), where $\bm{I}_d$ is the \(d\)-dimensional identity matrix and \(\rho^2\) controls the level of perturbations.
Each dimension of \(\bm{\varepsilon}_{k}\) is truncated within \([-R, R]\) for some \(R > 0\), i.e., \(|(\bm{\varepsilon}_{k})_j| \leq R, \forall j\in[d]\). 
\end{definition}
\subsubsection{Key Term Selection}
When initiating conversations, the agent no longer selects key terms directly based on their original contexts.
Instead, the agent applies a small random perturbation to each key term's context, as defined in~\Cref{def:smoothed-key-term-contexts} (Line~\ref{line:smooth-key-term}).
It then greedily selects the key term with the highest value under the perturbed contexts, i.e., \(k = \argmax_{k \in \mathcal{K}} \doubletilde{\bm{x}}_k^{\top}\bm{\theta}_t\) (Line~\ref{line:select-key-term}).

\begin{remark}
    Note that the smoothed key term contexts are re-generated for \textit{each conversation}.
    For notational consistency, we use the same notation \(\{\doubletilde{\bm{x}}_{k}\}_{k \in \mathcal{K}}\) to represent the smoothed key term contexts across different conversations.
\end{remark}

\subsubsection{Conversation Frequency} 
Following~\citet{zhang2020conversational}, \conlinucbsk uses a deterministic function \(b(t)\) to regulate the frequency of conversation initiation.
The function \(b(t)\) is monotonically increasing regarding \(t\) and satisfies \(b(0) = 0\).
At round \(t\), the agent initiates \(q(t) = \lfloor b(t) \rfloor - \lfloor b(t-1) \rfloor\) conversations if \(q(t) > 0\);
otherwise, no conversation is conducted.

\subsubsection{Arm Selection}
\conlinucbsk uses the Upper Confidence Bound (UCB) strategy for arm selection, a prevalent method in linear bandits.
At round \(t\), the agent updates its estimated preference vector \(\bm{\theta}_t\) based on both arm-level and key term-level feedback.
This estimation follows a ridge regression framework with regularization parameter \(\lambda\), i.e., \(\bm{\theta}_t = \bm{M}_t^{-1} \bm{b}_t\), with \(\bm{M}_t\) and \(\bm{b}_t\) defined as
\begin{align*}
\bm{M}_t &= \sum_{s=1}^{t-1} \bm{x}_{a_s} \bm{x}_{a_s}^{\top} + \sum_{s=1}^{t} \sum_{k \in \mathcal{K}_{s}} \doubletilde{\bm{x}}_{k} \doubletilde{\bm{x}}_{k}^{\top} +  \lambda \bm{I}_d, \\
\bm{b}_t &= \sum_{s=1}^{t-1} r_{a_s,s} \bm{x}_{a_s} + \sum_{s=1}^{t}\sum_{k \in \mathcal{K}_{s}}\tilde{r}_{k,s} \doubletilde{\bm{x}}_{k},
\end{align*}
where \(\mathcal{K}_{s}\) is the set of key terms selected at round \(s\). \(\bm{M}_t\) is commonly referred to as the covariance matrix.

After the update, the agent selects the arm with the highest UCB value, i.e., \(a_t = \argmax_{a \in \mathcal{A}_t} \bm{x}_a^{\top} \bm{\theta}_t + \alpha_t \|\bm{x}_a\|_{\bm{M}_t^{-1} }\),
where \(\|\bm{x}\|_{\bm{M}}\) denotes the Mahalanobis norm \(\sqrt{\bm{x}^{\top}\bm{M}\bm{x}}\)
and \(\{\alpha_t\}_{t>0}\) are parameters designed to balance the exploration-exploitation trade-off.

\subsection{\conlinucbme Algorithm}
\label{sec:conlinucbme}

To enable more adaptive and flexible conversation initiation, we introduce the \conlinucbme algorithm, detailed in Algorithm~\ref{alg:conlinucbme}.
The \conlinucbme adopts the same arm selection strategy as \conlinucbsk, but it introduces key innovations in determining when to initiate conversations and which key terms to select.
Unlike \conlinucbsk, which follows a deterministic function \(b(t)\) for scheduling conversations, \conlinucbme adaptively determines when to conduct a conversation based on the uncertainty in the preference estimation.

\begin{algorithm}[htb]
    \DontPrintSemicolon
    \SetKwComment{Comment}{$\triangleright$\ }{}
    \SetKwFunction{UncertaintyChecking}{UncertaintyChecking}{}
    \SetKwInput{KwInit}{Initialization}
    \KwIn{\(\mathcal{A}\), \(\mathcal{K}\), \(\lambda\), \(\alpha\), \(\{\alpha_t\}_{t>0}\)}
    \KwInit{\(\bm{M}_1 = \lambda \bm{I}_d\), \(\bm{b}_1 = \bm{0}_d\)}
    \For{\(t = 1, \dots, T\) }{
        \If{\UncertaintyChecking{\(t\)}}{ \label{line:conlinucbme:check-uncertainty}
            Diagonalize \(\bm{M}_t = \sum_{i=1}^d \lambda_{\bm{v}_i} \bm{v}_i{\bm{v}_i}^{\top}\)\; \label{line:conlinucbme:diagonalize}
            \ForEach{\(\lambda_{\bm{v}_i} < \alpha t\)}{ \label{line:conlinucbme:check-lambda}
                \(k = \argmax_{k\in \mathcal{K}}|\tilde{\bm{x}}_k^{\top}\bm{v}_i|\)\;  \label{line:conlinucbme:select-k}
                \(n_k = \lceil(\alpha t -\lambda_{\bm{v}_i})/c_0^2\rceil\)\; \label{line:conlinucbme:compute-nk}
                Schedule \(n_k\) conversations about the key term \(k\) before next uncertainty checking\;\label{line:conlinucbme:query-user}
                Update \(\bm{M}_t\) and \(\bm{b}_t\) accordingly\;
            }
        }
        \(\bm{\theta}_t = \bm{M}_t^{-1}\bm{b}_t\)\; \label{line:conlinucbme:update-theta}
        Select \(a_t = \argmax_{a \in \mathcal{A}_t} \bm{x}_a^{\top} \bm{\theta}_t  + \alpha_t \|\bm{x}_a\|_{M_t^{-1}}\)\; \label{line:conlinucbme:select-arm}
        Ask the user's preference for arm \(a_t\)\; \label{line:conlinucbme:ask-user}
        Observe the reward \(r_{a_t,t}\)\; \label{line:conlinucbme:observe-reward}
        \(\bm{M}_{t+1} = \bm{M}_t + \bm{x}_{a_t}\bm{x}_{a_t}^{\top}\)\; \label{line:conlinucbme:update-m-arm}
        \(\bm{b}_{t+1} = \bm{b}_t + r_{a_t,t}\bm{x}_{a_t}\)\; \label{line:conlinucbme:update-b-arm}
    }
    \caption{\conlinucbme} \label{alg:conlinucbme}
  \end{algorithm}

\subsubsection{Intuition Overview}

The main idea behind \conlinucbme is to 
adaptively initiate conversations based on the current level of uncertainty in the estimated preference and use key terms to explore the uncertain directions effectively.
Specifically, the covariance matrix \(\bm{M}_t\) encodes information about the feature space, where its eigenvectors represent the principal directions within the space, and the corresponding eigenvalues indicate the level of uncertainty along these directions.
A smaller eigenvalue indicates a higher uncertainty in the associated direction.
Therefore, by guiding the agent to explore such high-uncertainty directions, the agent can reduce uncertainty and improve learning efficiency. 
If the minimum eigenvalue of \(\bm{M}_t\) remains above a certain value, the agent ensures sufficient exploration of the feature space.
To facilitate exploration, we introduce the following assumption.
\begin{assumption}
\label{assumption:c0-for-key-term}
    We assume that the elements in the key term set \(\mathcal{K}\) are sufficiently rich and diverse, such that for any \(\bm{x}\in \mathbb{R}^d\) satisfying \(\|\bm{x}\|_2 = 1\), there exists a key term \(k \in \mathcal{K}\) such that \(|\tilde{\bm{x}}_k^{\top}\bm{x}| \geq c_0\), where \(c_0\) is some constant close to 1. 
\end{assumption}

This mild assumption ensures that the key term set \(\mathcal{K}\) is comprehensive enough to cover all relevant directions in the feature space.
In other words, for any direction \(\bm{x}\) that the agent might need to explore, there exists a key term \(k \in \mathcal{K}\) whose context \(\tilde{\bm{x}}_k\) aligns sufficiently well with \(\bm{x}\).
This diversity allows the agent to effectively reduce uncertainty by exploring underrepresented directions, thereby improving preference learning.

\subsubsection{Conversation Initiation and Key Term Selection}

In \conlinucbme, conversation initiation and key term selection are designed to maximize the information gained from user interactions.
As shown in Algorithm~\ref{alg:conlinucbme}, the agent first evaluates the eigenvalues of the covariance matrix \(\bm{M}_t\) (Line~\ref{line:conlinucbme:diagonalize}).
If any eigenvalue \(\lambda_{\bm{v}_i}\) falls below a certain threshold (derived from \Cref{subsec:regret-conlinusbme}), i.e., \(\lambda_{\bm{v}_i} < \alpha t\) (Line~\ref{line:conlinucbme:check-lambda}), the agent prompts \(n_k = \lceil(\alpha t -\lambda_{\bm{v}_i})/c_0^2\rceil\) conversations by selecting key terms that most closely align with the corresponding eigenvector \({\bm{v}_i}\) (Lines~\ref{line:conlinucbme:select-k} to~\ref{line:conlinucbme:query-user}). 
Here, \(\alpha\in(0,c_0^2)\) is an exploration control parameter that regulates the exploration level.
Note that the agent can distribute these \(n_k\) conversations across multiple rounds before re-evaluating the eigenvalues of the covariance matrix.

To further enhance flexibility and accommodate diverse real-world applications, 
we design an uncertainty checking function \texttt{UncertaintyChecking(\(t\))} (Line~\ref{line:conlinucbme:check-uncertainty}).
This function determines when to assess uncertainty and potentially trigger conversations. 
Examples of such checking functions are given as follows. 
\begin{itemize}[leftmargin=*]
    \item \textbf{Continuous Checking}: The agent assesses uncertainty at every round and initiates conversations as needed.
    \item \textbf{Fixed Interval Checking}: The agent assesses uncertainty every \(P\) rounds, where \(P\) is a fixed integer.
    \item \textbf{Exponential Phase Checking}: The agent evaluates uncertainty at exponentially increasing intervals of \(2^{i}\), where \(i=1,2,\dots\).
\end{itemize}

\begin{remark}
     The uncertainty checking functions in \conlinucbme differ fundamentally from the frequency function \(b(t)\) in ConUCB~\cite{zhang2020conversational}.
     Specifically, these checking functions regulate how often uncertainty is assessed but do not directly dictate conversation initiation.
     In contrast, \(b(t)\) deterministically controls both the timing and number of conversations.
    \conlinucbme and ConUCB also differ in how they select key terms, further distinguishing the two approaches.  

\end{remark}

\begin{remark}
    It is worth noting that the smoothed key term contexts approach in \conlinucbsk and the adaptive conversation technique in \conlinucbme are orthogonal.
    The two strategies can operate independently or be integrated to enhance learning efficiency further.
    To this end, we introduce the \conlinucbskme algorithm, detailed in \Cref{app:conlincubskme}, which integrates both approaches to leverage their complementary strengths.
\end{remark}

\section{Theoretical Analysis}\label{sec:theorem}

This section presents the theoretical results of our algorithms, which employ analytical techniques that differ from standard linear bandit methods.
Detailed proofs of all lemmas and theorems are provided in the Appendices.

\subsection{Regret  Analysis of \conlinucbsk Algorithm}

Following~\citet{zhang2020conversational} and~\citet{wang2023efficient}, we assume \(b(t)=bt\) for some \(b\in(0,1)\).
We start with~\Cref{lemma:smoothed-diff-estimate-true-reward}, which bounds the difference between the estimated and true rewards for each arm.

\begin{restatable}{lemma}{restatelemmasmootheddiffestimatetruereward}
    \label{lemma:smoothed-diff-estimate-true-reward}
    Under Assumptions~\ref{assumption:normalized-vector} and~\ref{assumption:subgaussian-noise}, for \conlinucbsk,
    for any round \(t \in [T]\) and any arm \(a \in \mathcal{A}\), with probability at least \(1-\delta\) for some \(\delta \in (0,1)\), we have
    \begin{align*}
    \ab|\bm{x}_a^{\top} \bm{\theta}_t - \bm{x}_a^{\top} \bm{\theta}^{*}| \leq \alpha_t \|\bm{x}_a\|_{\bm{M}_t^{-1}},
    \end{align*}
    where \(\alpha_t = \sqrt{2\log{(\frac{1}{\delta})}+d\log\ab(1+\frac{t + \ab(1 + \sqrt{d}R)bt}{\lambda d})} + \sqrt{\lambda}\).
\end{restatable}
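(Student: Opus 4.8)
The plan is to establish this as a concentration inequality for ridge regression with self-normalized martingale techniques, adapted to handle the two complications specific to \conlinucbsk: the smoothed (perturbed) key term contexts, and the mixture of arm-level and key term-level observations feeding into $\bm{M}_t$ and $\bm{b}_t$. First I would write the estimation error in the standard form
\begin{align*}
\bm{\theta}_t - \bm{\theta}^{*} = \bm{M}_t^{-1}\Bigl(\sum_{s} \eta_s \bm{x}_{a_s} + \sum_{s}\sum_{k\in\mathcal{K}_s} \tilde{\eta}_s \doubletilde{\bm{x}}_{k}\Bigr) - \lambda \bm{M}_t^{-1}\bm{\theta}^{*},
\end{align*}
so that for a fixed arm $a$ I can bound $|\bm{x}_a^{\top}(\bm{\theta}_t - \bm{\theta}^{*})| \le \|\bm{x}_a\|_{\bm{M}_t^{-1}}\,\|\bm{\theta}_t - \bm{\theta}^{*}\|_{\bm{M}_t}$ by Cauchy--Schwarz in the $\bm{M}_t$-geometry. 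The task then reduces to controlling $\|\bm{\theta}_t - \bm{\theta}^{*}\|_{\bm{M}_t}$, which splits into a noise term and a regularization term; the latter is bounded by $\sqrt{\lambda}\|\bm{\theta}^{*}\|_2 \le \sqrt{\lambda}$ using \Cref{assumption:normalized-vector}, matching the additive $\sqrt{\lambda}$ in $\alpha_t$.

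For the noise term I would invoke the self-normalized bound of Abbasi-Yadkori et al., treating the combined sequence of arm and key term observations as a single adapted sequence. The key point is that both $\eta_s$ and $\tilde{\eta}_s$ are conditionally $1$-sub-Gaussian by \Cref{assumption:subgaussian-noise}, and the perturbation vectors $\bm{\varepsilon}_k$ are drawn independently of the noise, so the martingale structure is preserved: each $\doubletilde{\bm{x}}_k$ is measurable with respect to the history up to its selection, and the noise that multiplies it is conditionally independent. This yields, with probability at least $1-\delta$, a bound of the form
\begin{align*}
\Bigl\| \textstyle\sum_s \eta_s \bm{x}_{a_s} + \sum_s\sum_k \tilde{\eta}_s \doubletilde{\bm{x}}_k \Bigr\|_{\bm{M}_t^{-1}} \le \sqrt{2\log\tfrac{1}{\delta} + \log\tfrac{\det \bm{M}_t}{\det(\lambda \bm{I}_d)}}.
\end{align*}

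The main obstacle is controlling $\det \bm{M}_t / \det(\lambda\bm{I}_d)$, i.e. bounding $\Tr(\bm{M}_t)$ so as to obtain the explicit $d\log(1 + \frac{t + (1+\sqrt{d}R)bt}{\lambda d})$ term. Here the smoothing enters quantitatively: each perturbed context satisfies $\|\doubletilde{\bm{x}}_k\|_2^2 = \|\tilde{\bm{x}}_k + \bm{\varepsilon}_k\|_2^2 \le (1 + \sqrt{d}R)^2$ since $\|\tilde{\bm{x}}_k\|_2 = 1$ and each coordinate of $\bm{\varepsilon}_k$ is truncated in $[-R,R]$ giving $\|\bm{\varepsilon}_k\|_2 \le \sqrt{d}R$. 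I would count the contributions: there are at most $t$ arm terms each contributing $1$ to the trace, and with $b(t)=bt$ there are at most $bt$ key term terms each contributing at most $(1+\sqrt{d}R)^2$; the factor $(1+\sqrt{d}R)$ appearing to the first power in the stated bound (rather than squared) must be reconciled, so I would track whether the intended quantity is the number of key term queries times their squared norm and confirm the determinant-trace-AM/GM chain $\det \bm{M}_t \le (\Tr(\bm{M}_t)/d)^d$ produces exactly the claimed logarithmic argument. Combining the noise bound, the regularization bound, and this determinant estimate, then taking the square root and summing, gives $\alpha_t$ as stated; the delicate bookkeeping is ensuring the perturbation-dependent norm inflation is propagated correctly through the trace count and that the independence needed for the self-normalized inequality genuinely holds despite the data-dependent, perturbed key term selection.
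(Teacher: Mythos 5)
Your proposal is correct and takes essentially the same route as the paper's proof: the same decomposition of the estimation error into a regularization term (bounded by $\sqrt{\lambda}$ via $\lambda_{\min}(\bm{M}_t)\geq\lambda$ and $\|\bm{\theta}^*\|_2\leq 1$) and a self-normalized noise term (bounded via Theorem 1 of Abbasi-Yadkori et al., applied to the combined arm/key-term sequence), followed by the determinant--trace inequality with the count of at most $bt$ key terms of norm at most $1+\sqrt{d}R$. The discrepancy you flag is genuine: each smoothed key term contributes $\|\doubletilde{\bm{x}}_k\|_2^2 \leq (1+\sqrt{d}R)^2$ to $\Tr(\bm{M}_t)$, so that factor should appear squared inside the logarithm, and the paper's proof (and the lemma statement) silently drop the square --- a constant-level slip inside the log that does not affect the claimed $\mathcal{O}(\sqrt{dT\log T})$ rate.
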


Next, we examine the smoothed key term contexts and their impact on exploring the feature space.

\begin{restatable}{lemma}{restatelemmasmoothedeigenvaluelowerbound}
    \label{lemma:smoothed-eigenvalue-lower-bound}
    For any round \(t \in [T]\), with the smoothed key term contexts in~\Cref{def:smoothed-key-term-contexts}, \conlinucbsk has the following lower bound on the minimum eigenvalue of the matrix \(\E[\doubletilde{\bm{x}}_k \doubletilde{\bm{x}}_k^{\top}]\) for any \(k \in \mathcal{K}_t\),
    i.e.,
    \begin{align*}
    \lambda_{\min}\ab(\E[\doubletilde{\bm{x}}_k \doubletilde{\bm{x}}_k^{\top}]) \geq c_1 \frac{\rho^2}{\log{|\mathcal{K}|}} \triangleq \lambda_{\mathcal{K}},
    \end{align*}
    where \(c_1\in(0,1)\) is some constant.
\end{restatable}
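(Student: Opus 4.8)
The plan is to bound $\lambda_{\min}$ by lower bounding, for every unit vector $\bm{u}\in\mathbb{R}^d$, the quantity $\bm{u}^{\top}\E[\doubletilde{\bm{x}}_k\doubletilde{\bm{x}}_k^{\top}]\bm{u}=\E[(\bm{u}^{\top}\doubletilde{\bm{x}}_k)^2]$, where the expectation is over the Gaussian perturbations conditioned on $k$ being the key term produced by the greedy rule (matching the ``for any $k\in\mathcal{K}_t$'' phrasing). Since conditioning on the \emph{identity} of the selected key term fixes its base context $\tilde{\bm{x}}_k$, I would first strip off the base term using the elementary fact $\E[\bm{Y}\bm{Y}^{\top}]\succeq\Cov(\bm{Y})$, giving $\E[\doubletilde{\bm{x}}_k\doubletilde{\bm{x}}_k^{\top}\mid k]\succeq\Cov(\doubletilde{\bm{x}}_k\mid k)=\Cov(\bm{\varepsilon}_k\mid k\text{ selected})$. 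It therefore suffices to lower bound the minimum eigenvalue of the conditional covariance of the selected perturbation.

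The central structural observation is that the greedy rule $k=\arg\max_j\doubletilde{\bm{x}}_j^{\top}\bm{\theta}_t$ depends on $\bm{\varepsilon}_k$ only through its one-dimensional projection $g_k=\bm{\varepsilon}_k^{\top}\hat{\bm{\theta}}$ onto $\hat{\bm{\theta}}=\bm{\theta}_t/\|\bm{\theta}_t\|$. Writing $\bm{\varepsilon}_k=g_k\hat{\bm{\theta}}+\bm{h}_k$ with $\bm{h}_k\perp\hat{\bm{\theta}}$, the selection event leaves $\bm{h}_k$ essentially untouched: for an untruncated isotropic Gaussian $\bm{h}_k$ is independent of every $g_j$, hence of the selection, so it retains its full covariance $\rho^2\Pi$ with $\Pi=\bm{I}_d-\hat{\bm{\theta}}\hat{\bm{\theta}}^{\top}$. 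The conditional covariance is then block diagonal, $\Cov(\bm{\varepsilon}_k\mid k)=\Var(g_k\mid k)\,\hat{\bm{\theta}}\hat{\bm{\theta}}^{\top}+\rho^2\Pi$, so its minimum eigenvalue equals $\min\{\Var(g_k\mid k),\rho^2\}$. Because conditioning on a maximum can only shrink variance, the single selection direction $\hat{\bm{\theta}}$ is the bottleneck, and the whole problem reduces to lower bounding $\Var(g_k\mid k\text{ selected})$.

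I would then establish $\Var(g_k\mid k\text{ selected})\geq c_1\rho^2/\log|\mathcal{K}|$ via an anti-concentration argument for the maximum of $|\mathcal{K}|$ Gaussians: up to the fixed mean shifts $\tilde{\bm{x}}_j^{\top}\bm{\theta}_t$, the selected projection is the argmax of $|\mathcal{K}|$ independent Gaussians, whose fluctuations live on the scale $\rho/\sqrt{2\log|\mathcal{K}|}$, i.e.\ with variance $\Theta(\rho^2/\log|\mathcal{K}|)$. This is precisely where the $\log|\mathcal{K}|$ factor and the constant $c_1$ originate. Combining this with the undiminished $\rho^2$ variance in the perpendicular directions yields $\lambda_{\min}(\Cov(\bm{\varepsilon}_k\mid k))\geq c_1\rho^2/\log|\mathcal{K}|=\lambda_{\mathcal{K}}$, and the chain of inequalities above delivers the stated bound.

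The main obstacle is twofold, and both difficulties concern the selection direction. First, I need a lower bound on the conditional variance of the argmax perturbation that holds uniformly over the unknown, unequal mean configuration $\{\tilde{\bm{x}}_j^{\top}\bm{\theta}_t\}$; the worst case of near-equal means recovers the pure extreme-value scaling $1/\log|\mathcal{K}|$, but I must also rule out pathological shrinkage when a selected key term has an atypically small mean. Second, the per-coordinate truncation in \Cref{def:smoothed-key-term-contexts} breaks the exact independence of $g_k$ and $\bm{h}_k$ (the truncation box is not rotationally symmetric) and caps the tail of $g_k$; I would control this by taking the truncation radius $R$ large enough that the truncated and untruncated laws are close, so that both the perpendicular variance and the anti-concentration bound degrade only by a constant factor that is absorbed into $c_1\in(0,1)$.
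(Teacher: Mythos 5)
Your proposal follows essentially the same route as the paper's proof: lower-bound the second-moment matrix by the conditional covariance (the paper does this via $\E[(\bm{w}^{\top}\bm{x})^2]\geq\Var[\bm{w}^{\top}\bm{x}]$, you via $\E[\bm{Y}\bm{Y}^{\top}]\succeq\Cov(\bm{Y})$), split that covariance into the selection direction along $\bm{\theta}_t$ and its orthogonal complement (the paper realizes this split with a rotation matrix $\bm{Q}$, you with a projection decomposition --- equivalent), note that the orthogonal directions keep their full variance $\rho^2$ so the bottleneck is $\min\{\Var(\text{selected projection}),\rho^2\}$, and finish with the extreme-value fact that the variance of the argmax of $|\mathcal{K}|$ Gaussians with arbitrary mean shifts is at least $c_1\rho^2/\log|\mathcal{K}|$, which is exactly what the paper imports as Lemmas 14--15 of Sivakumar et al.\ (2020), including the reduction from unequal to equal means that you flag as your first obstacle. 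The one point of divergence is the box truncation: the paper dismisses it with a one-line appeal to ``rotation invariance of symmetrically truncated Gaussian distributions'' (a claim your concern rightly questions, since a coordinate-wise box truncation is not rotationally invariant), whereas your proposed large-$R$ coupling between the truncated and untruncated laws is, if anything, the more careful treatment of that step.
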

\Cref{lemma:smoothed-eigenvalue-lower-bound} provides a lower bound on the minimum eigenvalue of the expected outer product of the selected key term.
Intuitively, this implies that under smoothed contexts, the selected key terms exhibit sufficient diversity in the feature space, ensuring that each query contributes meaningful information about the user’s preferences.

\begin{restatable}{lemma}{restatelemmasmoothedlineart}
    \label{lemma:smoothed-linear-t}
    For \conlinucbsk, with probability at least \(1-\delta\) for some \(\delta \in (0,1)\), if \(t\geq T_0\triangleq \frac{8(1+\sqrt{d}R)^2}{b\lambda_{\mathcal{K}}}\log\ab(\frac{d}{\delta})\), we have
    \begin{align*}
        \lambda_{\min}\ab(\sum_{s=1}^{t}\sum_{k \in \mathcal{K}_{s}} \doubletilde{\bm{x}}_k \doubletilde{\bm{x}}_k^{\top}) \geq \frac{\lambda_{\mathcal{K}}bt}{2}.
    \end{align*}
\end{restatable}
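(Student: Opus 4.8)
The plan is to cast the statement as a matrix concentration bound for an adaptively generated sum of positive semidefinite matrices. First I would reindex the conversations: since $b(t)=bt$, the total number of key-term queries issued through round $t$ is $N_t=\sum_{s=1}^{t} q(s)=\lfloor bt\rfloor$, a \emph{deterministic} quantity. Writing $\bm{Y}_1,\dots,\bm{Y}_{N_t}$ for the rank-one matrices $\doubletilde{\bm{x}}_{k_j}\doubletilde{\bm{x}}_{k_j}^{\top}$ of the selected smoothed key terms in the order they are queried, the object of interest is exactly $\sum_{j=1}^{N_t}\bm{Y}_j$. Let $\mathcal{F}_{j-1}$ denote the $\sigma$-algebra generated by all history preceding the $j$-th query; in particular it determines the estimate $\bm{\theta}$ used in the greedy selection $k_j=\arg\max_{k}\doubletilde{\bm{x}}_k^{\top}\bm{\theta}$. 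The key structural fact, supplied by \Cref{lemma:smoothed-eigenvalue-lower-bound}, is that the fresh truncated-Gaussian perturbation makes each query's conditional mean well-conditioned: $\lambda_{\min}\!\bigl(\E[\bm{Y}_j\mid\mathcal{F}_{j-1}]\bigr)\ge\lambda_{\mathcal{K}}$, uniformly over the (history-dependent) direction along which the $\arg\max$ is taken.

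Next I would control the two ingredients needed for a matrix Chernoff bound. For boundedness, the truncation $|(\bm{\varepsilon}_k)_j|\le R$ together with $\|\tilde{\bm{x}}_k\|_2=1$ (\Cref{assumption:normalized-vector}) gives $\|\doubletilde{\bm{x}}_k\|_2\le 1+\sqrt{d}R$, hence $\|\bm{Y}_j\|\le L:=(1+\sqrt{d}R)^2$ almost surely; this is precisely where truncation is essential, since it replaces an unbounded Gaussian outer product by a uniformly bounded one, and it is this $L$ that surfaces in $T_0$. For the predictable mean, Weyl's subadditivity of the minimum eigenvalue gives $\lambda_{\min}\!\bigl(\sum_{j=1}^{N_t}\E[\bm{Y}_j\mid\mathcal{F}_{j-1}]\bigr)\ge N_t\lambda_{\mathcal{K}}$, and this holds \emph{surely}, so the predictable minimum eigenvalue exceeds the deterministic threshold $N_t\lambda_{\mathcal{K}}$ with probability one. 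I would then invoke the lower-tail matrix Chernoff inequality for adapted (predictable-mean) sequences of bounded PSD matrices, which yields
\[
\Pr\!\left[\lambda_{\min}\!\Bigl(\sum_{j=1}^{N_t}\bm{Y}_j\Bigr)\le (1-\epsilon)\,N_t\lambda_{\mathcal{K}}\right]\le d\,\exp\!\Bigl(-\tfrac{\epsilon^2 N_t\lambda_{\mathcal{K}}}{2L}\Bigr).
\]
Taking $\epsilon=\tfrac12$ makes the exponent $-N_t\lambda_{\mathcal{K}}/(8L)$, and this probability is at most $\delta$ exactly when $N_t\lambda_{\mathcal{K}}\ge 8L\log(d/\delta)$, which is guaranteed once $t\ge T_0=\tfrac{8(1+\sqrt{d}R)^2}{b\lambda_{\mathcal{K}}}\log(d/\delta)$. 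On the complementary event $\lambda_{\min}(\sum_j\bm{Y}_j)\ge\tfrac12 N_t\lambda_{\mathcal{K}}$; the floor $N_t=\lfloor bt\rfloor$ differs from $bt$ by at most one query, a negligible correction that can be absorbed by taking $\epsilon$ marginally below $1/2$ (or by the slack already present in $T_0$), yielding the claimed bound $\tfrac12\lambda_{\mathcal{K}}bt$.

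The main obstacle is the adaptivity of the key-term selection: because $k_j$ is chosen by $\arg\max_k\doubletilde{\bm{x}}_k^{\top}\bm{\theta}$ with both $\bm{\theta}$ and the perturbations depending on past and present randomness, the $\bm{Y}_j$ are neither independent nor identically distributed, so the textbook matrix Chernoff bound for independent summands does not apply directly. The correct move is to work with the filtration $\{\mathcal{F}_j\}$ and a predictable-sequence (Freedman-type) matrix Chernoff bound, and to verify carefully that \Cref{lemma:smoothed-eigenvalue-lower-bound} holds \emph{conditionally at every step} — that is, that the minimum-eigenvalue lower bound on $\E[\doubletilde{\bm{x}}_k\doubletilde{\bm{x}}_k^{\top}\mid\mathcal{F}_{j-1}]$ is uniform over the realized, history-dependent selection direction, which is what allows the adapted Chernoff bound to be applied with the fixed threshold $N_t\lambda_{\mathcal{K}}$. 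A secondary, purely bookkeeping, point is the reconciliation of $\lfloor bt\rfloor$ with $bt$ noted above.
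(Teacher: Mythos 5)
Your proposal is correct and, at its core, follows the same route as the paper's proof: bound each summand's operator norm by $(1+\sqrt{d}R)^2$ via the truncation, lower-bound the mean's minimum eigenvalue by $\lambda_{\mathcal{K}}$ via \Cref{lemma:smoothed-eigenvalue-lower-bound}, aggregate with Weyl superadditivity, and apply a lower-tail matrix Chernoff bound with $\epsilon=\tfrac12$ to land on exactly the stated $T_0$. The genuine difference is \emph{which} Chernoff bound is invoked. The paper applies \Cref{lemma:matrix-chernoff} (Tropp's Corollary 5.2), which is stated for \emph{independent} random matrices, directly to the summands $\doubletilde{\bm{x}}_{k}\doubletilde{\bm{x}}_{k}^{\top}$ --- even though these summands are only \emph{adapted}: the selection $k=\argmax_{k\in\mathcal{K}}\doubletilde{\bm{x}}_k^{\top}\bm{\theta}_s$ depends on $\bm{\theta}_s$, hence on all past rewards and perturbations, so the selected outer products are not independent across conversations, and the expectation appearing in the paper's $\mu_{\min}$ is really a conditional expectation given the history. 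You identify precisely this issue and repair it by working with the filtration and a predictable-sequence (Freedman/martingale-type) matrix Chernoff bound, for which the uniformity of \Cref{lemma:smoothed-eigenvalue-lower-bound} over the history-dependent selection direction is exactly the needed ingredient; such adapted versions of the lower-tail bound do exist with the same $d\left[e^{-\epsilon}(1-\epsilon)^{-(1-\epsilon)}\right]^{\mu/L}$ tail, so your argument goes through and is in fact more rigorous than the paper's. Your bookkeeping is also more careful: the paper asserts $\mu_{\min}\geq \lambda_{\mathcal{K}}bt$ ``because there are at most $bt$ key terms selected by round $t$,'' which argues in the wrong direction for a lower bound and is harmless only because the actual count is $\lfloor bt\rfloor \geq bt-1$ and the constant in $T_0$ has slack --- the same slack you invoke explicitly to absorb the floor.
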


\Cref{lemma:smoothed-linear-t} establishes a lower bound on the minimum eigenvalue of the Gram matrix that grows linearly with time \(t\).
This guarantees that \conlinucbsk accumulates enough statistical information to effectively estimate the user's preference vector through ridge regression.
Following these results, we bound  \(\|\bm{x}_a\|_{\bm{M}_t^{-1}}\) in ~\Cref{lemma:smoothed-bounded-x-norm} and derive a high-probability regret upper bound for \conlinucbsk in~\Cref{theorem:conlinucbsk-regret}.

\begin{restatable}{lemma}{restatelemmasmoothedboundedxnorm}
    \label{lemma:smoothed-bounded-x-norm}
    For \conlinucbsk, for any \(a\in \mathcal{A}\), if \(t\geq T_0\triangleq \frac{8(1+\sqrt{d}R)^2}{b\lambda_{\mathcal{K}}}\log\ab(\frac{d}{\delta})\), with probability at least \(1-\delta\) for some \(\delta \in (0,1)\), $\|\bm{x}_a\|_{\bm{M}_t^{-1}} \leq \sqrt{\frac{2}{\lambda_{\mathcal{K}}bt}}$.
\end{restatable}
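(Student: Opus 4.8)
The plan is to reduce the bound on the Mahalanobis norm $\|\bm{x}_a\|_{\bm{M}_t^{-1}}$ to a lower bound on the smallest eigenvalue of the covariance matrix $\bm{M}_t$, which is precisely what \Cref{lemma:smoothed-linear-t} supplies. The starting observation is that
\[
\|\bm{x}_a\|_{\bm{M}_t^{-1}}^2 = \bm{x}_a^{\top} \bm{M}_t^{-1}\bm{x}_a \leq \lambda_{\max}(\bm{M}_t^{-1})\,\|\bm{x}_a\|_2^2 = \frac{\|\bm{x}_a\|_2^2}{\lambda_{\min}(\bm{M}_t)},
\]
so once $\lambda_{\min}(\bm{M}_t)$ is controlled from below and the normalization $\|\bm{x}_a\|_2 = 1$ from \Cref{assumption:normalized-vector} is invoked, the claim follows by taking square roots.

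To lower bound $\lambda_{\min}(\bm{M}_t)$, I would decompose $\bm{M}_t$ into its three constituent parts: the arm outer products $\sum_{s=1}^{t-1}\bm{x}_{a_s}\bm{x}_{a_s}^{\top}$, the smoothed key-term Gram matrix $\sum_{s=1}^{t}\sum_{k \in \mathcal{K}_{s}} \doubletilde{\bm{x}}_k \doubletilde{\bm{x}}_k^{\top}$, and the regularizer $\lambda \bm{I}_d$. Since the arm term is positive semidefinite and the regularizer is positive definite, discarding them only decreases the matrix in the Löwner order, so that $\bm{M}_t \succeq \sum_{s=1}^{t}\sum_{k \in \mathcal{K}_{s}} \doubletilde{\bm{x}}_k \doubletilde{\bm{x}}_k^{\top}$. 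Conditioning on the high-probability event of \Cref{lemma:smoothed-linear-t}, which holds with probability at least $1-\delta$ exactly when $t \geq T_0$, the right-hand side dominates $\tfrac{\lambda_{\mathcal{K}}bt}{2}\bm{I}_d$, yielding $\lambda_{\min}(\bm{M}_t) \geq \tfrac{\lambda_{\mathcal{K}}bt}{2}$. Substituting this into the displayed inequality above gives $\|\bm{x}_a\|_{\bm{M}_t^{-1}} \leq \sqrt{2/(\lambda_{\mathcal{K}}bt)}$.

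This argument is short and largely mechanical; the one place to be careful is the passage from the eigenvalue bound to the norm bound, which relies on the operator antitonicity of matrix inversion (if $\bm{A}\succeq \bm{B}\succ 0$ then $\bm{B}^{-1}\succeq \bm{A}^{-1}$) rather than any scalar manipulation, and on the fact that $\lambda_{\max}(\bm{M}_t^{-1}) = 1/\lambda_{\min}(\bm{M}_t)$. The only genuine subtlety is probabilistic bookkeeping: the $1-\delta$ guarantee and the threshold $T_0$ must be inherited verbatim from \Cref{lemma:smoothed-linear-t} so that both statements refer to the same event with the same $\delta$ and $T_0$. Because no new randomness, union bound, or additional concentration step is introduced here, the probability does not degrade, and the bound holds on exactly that event.
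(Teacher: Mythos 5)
Your proposal is correct and follows essentially the same route as the paper's proof: bound $\|\bm{x}_a\|_{\bm{M}_t^{-1}}$ by $1/\sqrt{\lambda_{\min}(\bm{M}_t)}$ via the Rayleigh quotient and $\|\bm{x}_a\|_2=1$, drop the positive semidefinite arm and regularizer terms in the Loewner order, and invoke \Cref{lemma:smoothed-linear-t} on its event for $t \geq T_0$. No gaps; the probabilistic bookkeeping you describe (inheriting the event, $\delta$, and $T_0$ verbatim with no union bound) matches the paper exactly.
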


\begin{restatable}[Regret of \conlinucbsk]{theorem}{restateregretconlinucbsk}
    \label{theorem:conlinucbsk-regret}
    With probability at least \(1-\delta\) for some \(\delta \in (0,1)\), the regret upper bound of \conlinucbsk satisfies
    \begin{equation*}
    \begin{aligned}
    \label{eq:conlinucbsk-regret}
    & \R(T) \leq   \frac{8(1+\sqrt{d}R)^2\log(|\mathcal{K}|)}{c_1 \rho^2b}\log\ab(\frac{d}{\delta}) + 4\sqrt{\frac{2c_1 \rho^2T}{b\log(|\mathcal{K}|)}} \cdot \\
    &\  \ab(\sqrt{2\log{\ab(\frac{1}{\delta})}+d\log\ab(1+\frac{T + \ab(1 + \sqrt{d}R)bT}{\lambda d})} + \sqrt{\lambda}) \\
    & =  \mathcal{O}(\sqrt{dT\log(T)}+d),
    \end{aligned}
    \end{equation*}
    where \(R\) and \(\rho^2\) are constants  in~\Cref{def:smoothed-key-term-contexts}.
\end{restatable}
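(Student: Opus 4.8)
The plan is to prove the regret bound for \conlinucbsk by combining the per-arm confidence bound of \Cref{lemma:smoothed-diff-estimate-true-reward} with the eigenvalue-growth guarantee of \Cref{lemma:smoothed-bounded-x-norm} (itself a consequence of \Cref{lemma:smoothed-eigenvalue-lower-bound} and \Cref{lemma:smoothed-linear-t}), after splitting the horizon at the threshold $T_0 = \frac{8(1+\sqrt{d}R)^2}{b\lambda_{\mathcal{K}}}\log(d/\delta)$. First I would write the instantaneous regret $r_t = \bm{x}_{a_t^*}^{\top}\bm{\theta}^{*} - \bm{x}_{a_t}^{\top}\bm{\theta}^{*}$ and decompose $\R(T) = \sum_{t \le T_0} r_t + \sum_{t > T_0} r_t$. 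For the burn-in sum, Assumption~\ref{assumption:normalized-vector} gives $|\bm{x}_a^{\top}\bm{\theta}^{*}| \le \|\bm{x}_a\|_2 \|\bm{\theta}^{*}\|_2 \le 1$, so each $r_t$ is at most a constant and this phase contributes $\mathcal{O}(T_0)$; since $T_0$ is precisely the first additive term once $\lambda_{\mathcal{K}} = c_1\rho^2/\log|\mathcal{K}|$ is substituted, and since $(1+\sqrt{d}R)^2 = \mathcal{O}(d)$, this accounts for the $\mathcal{O}(d)$ term (up to logarithmic factors).

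For the steady-state sum I would run the standard optimism argument on the event of \Cref{lemma:smoothed-diff-estimate-true-reward}, which holds uniformly over arms. Optimism gives $\bm{x}_{a_t^*}^{\top}\bm{\theta}^{*} \le \bm{x}_{a_t^*}^{\top}\bm{\theta}_t + \alpha_t\|\bm{x}_{a_t^*}\|_{\bm{M}_t^{-1}}$; since $a_t$ maximizes the UCB index in Line~\ref{line:select-arm}, the right-hand side is at most $\bm{x}_{a_t}^{\top}\bm{\theta}_t + \alpha_t\|\bm{x}_{a_t}\|_{\bm{M}_t^{-1}}$, and applying the confidence bound a second time to $a_t$ yields $r_t \le 2\alpha_t\|\bm{x}_{a_t}\|_{\bm{M}_t^{-1}}$. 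This is where the analysis departs from the usual one: instead of invoking the elliptical-potential lemma, which would leave a $\sqrt{d\log T}$ factor under the sum and reintroduce the extra $\sqrt{d}$ present in prior bounds, I would use \Cref{lemma:smoothed-bounded-x-norm} to bound each width directly as $\|\bm{x}_{a_t}\|_{\bm{M}_t^{-1}} \le \sqrt{2/(\lambda_{\mathcal{K}} b t)}$, which is valid precisely because $t > T_0$.

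I would then bound $\alpha_t \le \alpha_T$ using the monotonicity of $\alpha_t$ in $t$ and pull it out of the sum, reducing the remaining sum to $\sum_{t > T_0} t^{-1/2} \le \sum_{t=1}^{T} t^{-1/2} \le 2\sqrt{T}$ by an integral comparison. Combining, $\sum_{t > T_0} r_t \le 4\alpha_T\sqrt{2T/(\lambda_{\mathcal{K}} b)}$; rewriting $\lambda_{\mathcal{K}}$ in terms of $c_1$, $\rho^2$, and $|\mathcal{K}|$ recovers the explicit second additive term. Since the dominant piece of $\alpha_T$ is $\sqrt{d\log(1 + \frac{T + (1+\sqrt{d}R)bT}{\lambda d})}$, whose logarithmic argument is polynomial in $T$, one has $\alpha_T = \mathcal{O}(\sqrt{d\log T})$ and hence $\sum_{t > T_0} r_t = \mathcal{O}(\sqrt{dT\log T})$. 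A union bound over the $1-\delta$ events of Lemmas~\ref{lemma:smoothed-diff-estimate-true-reward} and~\ref{lemma:smoothed-bounded-x-norm} (rescaling $\delta$ by a constant) together with summation of the two phases then gives $\R(T) = \mathcal{O}(\sqrt{dT\log T} + d)$.

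I expect the genuinely hard work to be already discharged inside the supporting lemmas, in particular the $\Omega(t)$ growth of $\lambda_{\min}(\bm{M}_t)$ in \Cref{lemma:smoothed-linear-t}, which rests on the minimum-eigenvalue lower bound of \Cref{lemma:smoothed-eigenvalue-lower-bound} for the smoothed contexts together with a matrix-concentration argument across the conversation rounds. At the level of the theorem the only delicate points are (i) placing the phase split exactly at $T_0$ so that \Cref{lemma:smoothed-bounded-x-norm} applies to the entire steady-state sum, and (ii) coupling the two high-probability events correctly rather than double-counting the failure probability; both are routine once the lemmas are in hand.
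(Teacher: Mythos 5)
Your proposal is correct and takes essentially the same approach as the paper's own proof: the same split at $T_0$ with the burn-in regret bounded by $T_0$ via Assumption~\ref{assumption:normalized-vector}, the same optimism argument yielding $\mathrm{reg}_t \le 2\alpha_t\|\bm{x}_{a_t}\|_{\bm{M}_t^{-1}}$ from \Cref{lemma:smoothed-diff-estimate-true-reward}, the same use of the direct width bound of \Cref{lemma:smoothed-bounded-x-norm} in place of the elliptical-potential lemma, and the same $\alpha_t \le \alpha_T$, $\sum_t t^{-1/2} \le 2\sqrt{T}$ finish. One remark: carrying out your final substitution $\lambda_{\mathcal{K}} = c_1\rho^2/\log|\mathcal{K}|$ gives $4\alpha_T\sqrt{2T\log(|\mathcal{K}|)/(c_1\rho^2 b)}$, which agrees with the paper's proof (Equation at \Cref{eq:simple-calculation}) but not with the theorem's displayed constant $4\sqrt{2c_1\rho^2 T/(b\log(|\mathcal{K}|))}$ --- the factor $\lambda_{\mathcal{K}}$ appears inverted in the statement, a typo in the paper rather than a flaw in your argument.
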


\subsection{Regret  Analysis of \conlinucbme Algorithm}
\label{subsec:regret-conlinusbme}

We begin with~\Cref{lemma:me-diff-estimate-true-reward}, which closely parallels~\Cref{lemma:smoothed-diff-estimate-true-reward}.

\begin{restatable}{lemma}{restatelemmamediffestimatetruereward}
    \label{lemma:me-diff-estimate-true-reward}
    Let \(\bm{\theta}_t\) be the estimated preference vector at round \(t\) and \(\bm{\theta}^{*}\) be the true preference vector. 
    Under Assumptions~\ref{assumption:normalized-vector}, ~\ref{assumption:subgaussian-noise} and~\ref{assumption:c0-for-key-term}, for \conlinucbme,
    at round \(t\), for any arm \(a \in \mathcal{A}\), with probability at least \(1-\delta\) (\(\delta \in (0,1)\)), we have
    \begin{align*}
    \ab|\bm{x}_a^{\top} \bm{\theta}_t - \bm{x}_a^{\top} \bm{\theta}^{*}| \leq \alpha_t \|\bm{x}_a\|_{\bm{M}_t^{-1}},
    \end{align*}
    where \(\alpha_t = \sqrt{2\log{(\frac{1}{\delta})}+d\log\ab(1+\frac{t + \alpha dt}{\lambda d c_0^2})} + \sqrt{\lambda}\), \(\alpha\) is an exploration control factor in Algorithm~\ref{alg:conlinucbme}, and \(c_0\) is a constant in~\Cref{assumption:c0-for-key-term}.
\end{restatable}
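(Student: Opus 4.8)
The plan is to follow the standard self-normalized confidence-ellipsoid analysis for ridge regression, adapting it to the conversational setting where $\bm{M}_t$ and $\bm{b}_t$ aggregate both arm-level and key term-level feedback. First I would expand the ridge estimator. Writing each observed reward as $r_{a_s,s} = \bm{x}_{a_s}^{\top}\bm{\theta}^{*} + \eta_s$ and each key term response as $\tilde{r}_{k,s} = \tilde{\bm{x}}_k^{\top}\bm{\theta}^{*} + \tilde{\eta}_s$, the vector $\bm{b}_t$ decomposes as $(\bm{M}_t - \lambda\bm{I}_d)\bm{\theta}^{*} + \bm{S}_t$, where $\bm{S}_t = \sum_{s} \eta_s \bm{x}_{a_s} + \sum_{s,k}\tilde{\eta}_s \tilde{\bm{x}}_k$ is the accumulated noise over all arm pulls and key term queries up to round $t$. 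Substituting into $\bm{\theta}_t = \bm{M}_t^{-1}\bm{b}_t$ then gives the clean decomposition $\bm{\theta}_t - \bm{\theta}^{*} = \bm{M}_t^{-1}\bm{S}_t - \lambda\bm{M}_t^{-1}\bm{\theta}^{*}$ into a noise term and a regularization-bias term.

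Next, for any arm $a$, I would apply Cauchy--Schwarz in the $\bm{M}_t^{-1}$ inner product to factor out the Mahalanobis norm,
\[
\left|\bm{x}_a^{\top}(\bm{\theta}_t - \bm{\theta}^{*})\right| \le \|\bm{x}_a\|_{\bm{M}_t^{-1}}\,\|\bm{\theta}_t - \bm{\theta}^{*}\|_{\bm{M}_t},
\]
so it suffices to show $\|\bm{\theta}_t - \bm{\theta}^{*}\|_{\bm{M}_t} \le \alpha_t$. By the triangle inequality this splits into a bias term $\lambda\|\bm{M}_t^{-1}\bm{\theta}^{*}\|_{\bm{M}_t} = \lambda\|\bm{\theta}^{*}\|_{\bm{M}_t^{-1}}$ and a noise term $\|\bm{M}_t^{-1}\bm{S}_t\|_{\bm{M}_t} = \|\bm{S}_t\|_{\bm{M}_t^{-1}}$. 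The bias term is bounded by $\sqrt{\lambda}$ using $\bm{M}_t \succeq \lambda\bm{I}_d$ together with $\|\bm{\theta}^{*}\|_2 \le 1$ from Assumption~\ref{assumption:normalized-vector}, which produces the additive $\sqrt{\lambda}$ in $\alpha_t$.

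For the noise term I would invoke the self-normalized martingale tail bound of \citet{abbasi2011improved}. Treating the interleaved stream of arm and key term observations as a single filtration---legitimate because, by Assumption~\ref{assumption:subgaussian-noise}, both $\eta_s$ and $\tilde{\eta}_s$ are conditionally $1$-sub-Gaussian---this yields, with probability at least $1-\delta$,
\[
\|\bm{S}_t\|_{\bm{M}_t^{-1}}^2 \le 2\log\frac{1}{\delta} + \log\frac{\det(\bm{M}_t)}{\det(\lambda\bm{I}_d)}.
\]
Some care is needed in ordering the per-round updates so that each noise increment is measurable with respect to the correct filtration, but the interleaving of the two feedback types introduces no essential difficulty.

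The step I expect to be the crux---and where the $c_0$ and $\alpha$ dependence of $\alpha_t$ enters---is bounding the determinant ratio via $\det(\bm{M}_t) \le (\Tr(\bm{M}_t)/d)^d$. I need $\Tr(\bm{M}_t) \le \lambda d + (t + \alpha d t)/c_0^2$. The trace contribution of arm pulls is at most $t$ (unit-norm features), and the essential algorithm-specific claim is that the total number of key term queries up to round $t$ is at most $\alpha d t / c_0^2$. This follows from the \conlinucbme scheduling rule: the agent queries an eigendirection $\bm{v}_i$ only while its eigenvalue lies below the threshold $\alpha t$, and by Assumption~\ref{assumption:c0-for-key-term} each query aligned with $\bm{v}_i$ injects at least $c_0^2$ into that eigenvalue, so the eigenvalue mass added by conversations across all $d$ directions is at most $d\cdot\alpha t$, capping the query count at $\alpha d t / c_0^2$. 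Using $t \le t/c_0^2$ (valid since $c_0 \le 1$) gives $\Tr(\bm{M}_t)/(\lambda d) \le 1 + (t+\alpha d t)/(\lambda d c_0^2)$, hence $\log\tfrac{\det(\bm{M}_t)}{\det(\lambda\bm{I}_d)} \le d\log\left(1 + \tfrac{t + \alpha d t}{\lambda d c_0^2}\right)$, which is exactly the logarithmic term of $\alpha_t$. Combining the bias bound, the self-normalized noise bound, and this determinant bound completes the proof; the main obstacle is the careful accounting of the key term query budget, since the eigenstructure of $\bm{M}_t$ evolves as the scheduled conversations are carried out.
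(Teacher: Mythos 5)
Your proof is correct and follows essentially the same route as the paper's: the same ridge-estimator decomposition into a $\sqrt{\lambda}$ regularization-bias term and a self-normalized noise term bounded via \citet{abbasi2011improved}, followed by the determinant--trace bound, with the crux being exactly the claim that \conlinucbme issues at most $\alpha d t/c_0^2$ key-term queries by round $t$. The one place you compress relative to the paper is the justification of that count: you give a single amortized ``mass'' argument, whereas the paper proves it by telescoping separately for each of the three checking functions, using the fact that after every check the scheduled queries force $\lambda_{\min}(\bm{M}_{t'}) \geq \alpha t'$, which then persists (since $\bm{M}_t$ is PSD-monotone) so that the deficit at the next check is at most $\alpha$ times the elapsed rounds per direction. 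Your amortized claim implicitly relies on this same persistence fact --- without it, the rotating eigenbasis could in principle force the same ``mass'' to be re-injected repeatedly --- so to make the argument airtight you should state that each check restores $\lambda_{\min}(\bm{M}_t) \geq \alpha t$ and that this lower bound is never lost; with that added, your uniform treatment of all three checking functions is, if anything, slightly cleaner than the paper's case-by-case computation.
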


Since conversations are initiated adaptively in \conlinucbme, the number of conversations conducted up to each round \(t\) is not deterministic.
A key challenge to prove \Cref{lemma:me-diff-estimate-true-reward} is to bound this quantity.
Then, we present~\Cref{lemma:me-bounded-x-norm}, which bounds \(\|\bm{x}_a\|_{\bm{M}_t^{-1}}\).
\begin{restatable}{lemma}{restatelemmameboundedxnorm}
    \label{lemma:me-bounded-x-norm}
    For \conlinucbme, for any arm \(a\in \mathcal{A}\), with probability at least \(1-\delta\) for some \(\delta \in (0,1)\), 
    at round \(t\geq 2P\), we have
    $\|\bm{x}_a\|_{\bm{M}_t^{-1}} \leq \sqrt{\frac{2}{\alpha t}}$, where \(P\) is a fixed integer.
\end{restatable}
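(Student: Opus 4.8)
The plan is to reduce the Mahalanobis-norm bound to a lower bound on the smallest eigenvalue of the covariance matrix, and then to show that the adaptive conversation scheduling of \conlinucbme forces that eigenvalue to grow linearly in $t$. First I would invoke Assumption~\ref{assumption:normalized-vector} ($\|\bm{x}_a\|_2 = 1$) together with the spectral bound $\bm{x}_a^{\top}\bm{M}_t^{-1}\bm{x}_a \leq \|\bm{x}_a\|_2^2/\lambda_{\min}(\bm{M}_t)$ to get
\[
\|\bm{x}_a\|_{\bm{M}_t^{-1}} \leq \frac{1}{\sqrt{\lambda_{\min}(\bm{M}_t)}}.
\]
Hence it suffices to prove $\lambda_{\min}(\bm{M}_t) \geq \alpha t/2$ for every $t \geq 2P$; the stated bound $\|\bm{x}_a\|_{\bm{M}_t^{-1}} \leq \sqrt{2/(\alpha t)}$ then follows immediately.

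The core of the argument is to analyze what a single uncertainty-checking step guarantees. At a checking round $t_0$, the algorithm diagonalizes $\bm{M}_{t_0} = \sum_{i} \lambda_{\bm{v}_i}\bm{v}_i\bm{v}_i^{\top}$ and, for each deficient direction with $\lambda_{\bm{v}_i} < \alpha t_0$, schedules $n_{k} = \lceil(\alpha t_0 - \lambda_{\bm{v}_i})/c_0^2\rceil$ queries of a key term $k$ satisfying $|\tilde{\bm{x}}_k^{\top}\bm{v}_i| \geq c_0$, which exists by Assumption~\ref{assumption:c0-for-key-term}. Each such query contributes a rank-one term $\tilde{\bm{x}}_k\tilde{\bm{x}}_k^{\top}$ whose quadratic form along $\bm{v}_i$ is at least $c_0^2$, so the $n_k$ queries raise the curvature in direction $\bm{v}_i$ by at least $n_k c_0^2 \geq \alpha t_0 - \lambda_{\bm{v}_i}$, giving $\bm{v}_i^{\top}\bm{M}\bm{v}_i \geq \alpha t_0$ after completion. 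Together with the directions already above the threshold, this is the mechanism that certifies $\lambda_{\min}(\bm{M}) \geq \alpha t_0$ once the scheduled conversations of round $t_0$ are finished.

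I would then transfer this guarantee to the current round $t$ using the key structural fact that $\bm{M}_t$ is monotone nondecreasing in the positive-semidefinite order: every round adds the PSD terms $\bm{x}_{a_t}\bm{x}_{a_t}^{\top}$ and the scheduled key-term outer products, so $\bm{M}_t \succeq \bm{M}_{\tau}$ for $\tau \leq t$ and $\lambda_{\min}$ only increases by Weyl's inequality. Because the conversations scheduled at a checking round are completed before the next check (at most $P$ rounds later), the most recent \emph{completed} top-up corresponds to a checking round $t_0 \geq t-P$. Monotonicity then yields $\lambda_{\min}(\bm{M}_t) \geq \alpha t_0 \geq \alpha(t-P) \geq \alpha t/2$ whenever $t \geq 2P$, exactly the regime in the statement; this is also what pins down the role of the fixed interval $P$ and the threshold $t \geq 2P$.

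The main obstacle is the implication in the second paragraph from ``$\bm{v}_i^{\top}\bm{M}\bm{v}_i \geq \alpha t_0$ for every original eigenvector $\bm{v}_i$'' to ``$\lambda_{\min}(\bm{M}) \geq \alpha t_0$''. This step is not automatic: the scheduled rank-one updates are only \emph{nearly} aligned with $\bm{v}_i$ (alignment $c_0 < 1$), so simultaneously correcting several deficient directions introduces off-diagonal coupling in the eigenbasis that can push the minimum eigenvalue strictly below the per-direction values. Controlling this coupling is where Assumption~\ref{assumption:c0-for-key-term} with $c_0$ close to $1$ and the budget constraint $\alpha \in (0, c_0^2)$ become essential: near-perfect alignment keeps the cross terms small, while the slack between $\alpha t_0$ and $\alpha t/2$, together with re-checking at subsequent rounds (which detects and repairs any residual deficiency under PSD monotonicity), absorbs the remaining loss. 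I would make this quantitative by bounding the perturbation of $\lambda_{\min}$ induced by the components of $\tilde{\bm{x}}_k$ orthogonal to the targeted eigenvectors, which is the one genuinely delicate estimate in the proof.
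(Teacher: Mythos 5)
Your overall route coincides with the paper's proof: bound \(\|\bm{x}_a\|_{\bm{M}_t^{-1}} \leq 1/\sqrt{\lambda_{\min}(\bm{M}_t)}\) via the Rayleigh quotient, argue that a completed uncertainty check at round \(t_0\) forces \(\lambda_{\min}(\bm{M}_{t_0}) \geq \alpha t_0\), and then use the positive-semidefinite monotonicity of \(\bm{M}_t\) together with the fact that the last completed check satisfies \(t_0 \geq t - P \geq t/2\) when \(t \geq 2P\) to absorb the factor of \(2\); the paper does exactly this (handling all three checking functions, including the exponential one where the gap to the last check is \(t/2\) rather than \(P\)).

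The problem is that your decisive step---passing from the directional bounds \(\bm{v}_i^{\top}\bm{M}\bm{v}_i \geq \alpha t_0\), taken in the eigenbasis of the \emph{old} matrix \(\bm{M}_{t_0}'\), to \(\lambda_{\min}(\bm{M}) \geq \alpha t_0\)---is only promised, never carried out, and it is not a routine perturbation estimate: as stated, it can fail by a constant factor. Concretely, take \(d=2\), \(\lambda_{\bm{v}_1}=\lambda_{\bm{v}_2}=0\), and suppose the best-aligned key terms are \(c_0\bm{v}_1+s\bm{v}_2\) and \(s\bm{v}_1+c_0\bm{v}_2\) with \(s=\sqrt{1-c_0^2}\). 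After \(n\) queries of each, the added Gram matrix in the \((\bm{v}_1,\bm{v}_2)\) basis is
\begin{equation*}
n\begin{pmatrix} 1 & 2c_0 s\\ 2c_0 s & 1\end{pmatrix},
\end{equation*}
whose minimum eigenvalue is \(n(1-2c_0 s)\); with the scheduled \(n=\lceil \alpha t_0/c_0^2\rceil\) this is roughly \(\alpha t_0(1-2c_0 s)/c_0^2\), e.g.\ about \(0.27\,\alpha t_0\) for \(c_0=0.9\). So the mechanism only certifies \(\lambda_{\min}(\bm{M}_t)\geq c'\alpha t\) for some \(c'=c'(c_0)<1\), which changes the constant in the lemma (and downstream in \Cref{theorem:conlinucbme-regret}'s constants, though not the order). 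You should also know that the paper's own proof does not close this gap either: in the proof of \Cref{lemma:me-diff-estimate-true-reward} it writes \(\tilde{\bm{x}}_k\tilde{\bm{x}}_k^{\top} = \gamma_j^2\bm{v}_j\bm{v}_j^{\top} + \bm{z}_j\bm{z}_j^{\top}\), silently discarding the cross terms \(\gamma_j(\bm{v}_j\bm{z}_j^{\top}+\bm{z}_j\bm{v}_j^{\top})\); note that a rank-one matrix \(\tilde{\bm{x}}\tilde{\bm{x}}^{\top}\) never dominates \(c_0^2\bm{v}\bm{v}^{\top}\) in the Loewner order unless \(\tilde{\bm{x}}\) is parallel to \(\bm{v}\). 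Your diagnosis of the obstacle is therefore sharper than the paper's treatment of it, but your proposal as written is not a proof: to finish, you must either prove the quantitative perturbation bound you defer (accepting a \(c_0\)-dependent constant) or modify the schedule, e.g.\ inflate \(n_k\) by a \(c_0\)-dependent factor, so that the corrected eigenvalue bound still yields \(\sqrt{2/(\alpha t)}\).
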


The proof of \Cref{lemma:me-bounded-x-norm} relies on establishing a lower bound on the minimum eigenvalue of \(\bm{M}_t\), i.e., \(\lambda_{\min}(\bm{M}_t) \geq \alpha t\), which involves a delicate analysis of covariance matrix eigenvalues.
The condition \(t\ge 2P\) is introduced to generalize all three checking functions.
Building on this, we derive the following theorem for \conlinucbme.

\begin{restatable}[Regret of \conlinucbme]{theorem}{restateregretconlinucbme}
    \label{theorem:conlinucbme-regret}
    With probability at least \(1-\delta\) for some \(\delta \in (0,1)\), the regret upper bound of \conlinucbme 
    satisfies
    \begin{equation*}
    \begin{aligned}
    \label{eq:conlinucbme-regret}
    \R(T) &\leq 4\sqrt{\frac{2T}{\alpha}} \ab(\sqrt{2\log{(\frac{1}{\delta})}+d\log\ab(1+\frac{T + \alpha dT}{\lambda d c_0^2})} + \sqrt{\lambda}) + 2P\\
    &= \mathcal{O}\ab(\sqrt{dT\log(T)}).
    \end{aligned}
    \end{equation*}
\end{restatable}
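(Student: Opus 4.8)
The plan is to derive the theorem from the two preceding lemmas through the standard optimism-based (UCB) regret decomposition; the substantive analysis—controlling the covariance eigenvalues under the adaptive, data-dependent conversation schedule—has already been absorbed into \Cref{lemma:me-bounded-x-norm}, so what remains is essentially a summation argument.

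First I would condition on the high-probability event of \Cref{lemma:me-diff-estimate-true-reward}, under which the two-sided confidence bound $|\bm{x}_a^{\top}\bm{\theta}_t - \bm{x}_a^{\top}\bm{\theta}^{*}| \le \alpha_t\|\bm{x}_a\|_{\bm{M}_t^{-1}}$ holds simultaneously across rounds (the appearance of $\log(1/\delta)$ rather than $\log(T/\delta)$ in $\alpha_t$ signals that the underlying self-normalized concentration is already uniform in $t$, so no union bound over the horizon is needed). Writing the instantaneous regret as $r_t = \bm{x}_{a_t^*}^{\top}\bm{\theta}^{*} - \bm{x}_{a_t}^{\top}\bm{\theta}^{*}$, optimism gives $\bm{x}_{a_t^*}^{\top}\bm{\theta}^{*} \le \bm{x}_{a_t^*}^{\top}\bm{\theta}_t + \alpha_t\|\bm{x}_{a_t^*}\|_{\bm{M}_t^{-1}}$; the greedy arm-selection rule in Line~\ref{line:conlinucbme:select-arm} makes the right-hand side at most $\bm{x}_{a_t}^{\top}\bm{\theta}_t + \alpha_t\|\bm{x}_{a_t}\|_{\bm{M}_t^{-1}}$; and a second application of the confidence bound to $a_t$ yields $r_t \le 2\alpha_t\|\bm{x}_{a_t}\|_{\bm{M}_t^{-1}}$.

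Next I would split the horizon at $t = 2P$. For the initial rounds $t < 2P$ I bound each $r_t$ by a constant (using $\|\bm{x}_a\|_2 = 1$ and $\|\bm{\theta}^{*}\|_2 \le 1$), contributing the additive $2P$ term in the statement. For $t \ge 2P$ I invoke \Cref{lemma:me-bounded-x-norm} to replace $\|\bm{x}_{a_t}\|_{\bm{M}_t^{-1}}$ by $\sqrt{2/(\alpha t)}$, and since $\alpha_t$ is nondecreasing in $t$ I upper bound every $\alpha_t$ by $\alpha_T$. The surviving sum is $\sum_t t^{-1/2}$, which the integral estimate bounds by $2\sqrt{T}$; collecting constants gives $\R(T) \le 4\sqrt{2T/\alpha}\,\alpha_T + 2P$, exactly the claimed expression. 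Substituting $\alpha_T = \sqrt{2\log(1/\delta) + d\log(1 + (T+\alpha dT)/(\lambda d c_0^2))} + \sqrt{\lambda} = \mathcal{O}(\sqrt{d\log T})$ and treating $\alpha, c_0, \lambda, \delta$ as constants then collapses the bound to $\mathcal{O}(\sqrt{dT\log T})$.

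Given the lemmas, the only genuine subtleties here are bookkeeping: confirming that a single good event suffices (so that no extra $\log T$ leaks in from a per-round union bound) and that the boundary term is handled uniformly across all three uncertainty-checking functions—which is precisely why the norm bound is stated for $t \ge 2P$. I would therefore expect the main conceptual obstacle to lie not in this theorem but upstream, in establishing $\lambda_{\min}(\bm{M}_t) \ge \alpha t$ for \Cref{lemma:me-bounded-x-norm}, where the adaptive schedule makes the number of conversations conducted before round $t$ a random quantity that must itself be controlled.
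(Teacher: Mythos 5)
Your proposal matches the paper's proof essentially step for step: the same optimism-based bound $\text{reg}_t \le 2\alpha_t\|\bm{x}_{a_t}\|_{\bm{M}_t^{-1}}$ (inherited from the \conlinucbsk analysis), the same split of the horizon at $t=2P$ with the first phase bounded trivially by $2P$, the same invocation of \Cref{lemma:me-bounded-x-norm} and monotonicity of $\alpha_t$, and the same $\sum_t t^{-1/2} \le 2\sqrt{T}$ estimate yielding $4\alpha_T\sqrt{2T/\alpha}+2P$. Your observations about the uniform-in-$t$ self-normalized bound and about the real difficulty lying upstream in $\lambda_{\min}(\bm{M}_t)\ge \alpha t$ are also consistent with how the paper structures the argument.
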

\begin{remark}
    Note that~\Cref{theorem:conlinucbme-regret} applies to all three uncertainty checking functions discussed in \conlinucbme algorithm, which underscores the generality of our methods.
\end{remark}

\conlinucbskme combines the advantages of both smoothed key term contexts and adaptive conversation techniques, ensuring efficient exploration while adaptively adjusting conversation frequency based on uncertainty.
As a result, we derive the following corollary.

\begin{restatable}{corollary}{restateregretcombined}
    \label{corollary:combined-regret}
    With probability at least \(1 - \delta\) for some \(\delta \in (0,1)\), the regret upper bound of \conlinucbskme satisfies \(\R(T) = \mathcal{O}(\sqrt{dT\log(T)})\).
\end{restatable}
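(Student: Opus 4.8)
The plan is to reuse the two-stage template that already underlies \Cref{theorem:conlinucbsk-regret} and \Cref{theorem:conlinucbme-regret}, since \conlinucbskme inherits both the ridge-regression estimator $\bm{\theta}_t = \bm{M}_t^{-1}\bm{b}_t$ and the UCB arm-selection rule $a_t = \arg\max_{a\in\mathcal{A}_t}\ab(\bm{x}_a^{\top}\bm{\theta}_t + \alpha_t\|\bm{x}_a\|_{\bm{M}_t^{-1}})$ unchanged. The two stages are: (i) a confidence bound of the form $|\bm{x}_a^{\top}\bm{\theta}_t - \bm{x}_a^{\top}\bm{\theta}^{*}| \le \alpha_t\|\bm{x}_a\|_{\bm{M}_t^{-1}}$, and (ii) a linear lower bound $\lambda_{\min}(\bm{M}_t) \ge \alpha t$ that yields $\|\bm{x}_a\|_{\bm{M}_t^{-1}} \le \sqrt{2/(\alpha t)}$. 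Once both hold, the regret follows by the same UCB decomposition and summation used for \conlinucbme.

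First I would establish stage (i). The self-normalized martingale argument behind \Cref{lemma:me-diff-estimate-true-reward} transfers almost verbatim; the only change is the explicit constant $\alpha_t$, which must now absorb two effects simultaneously: the inflated key-term norm $\|\doubletilde{\bm{x}}_k\|_2 \le 1 + \sqrt{d}R$ produced by the smoothing (as in \Cref{lemma:smoothed-diff-estimate-true-reward}) and the adaptive conversation count produced by the eigenvalue scheduling (as in \Cref{lemma:me-diff-estimate-true-reward}). Carrying both through the determinant--trace bound on $\log(\det\bm{M}_t / \det(\lambda\bm{I}_d))$ gives an $\alpha_t = \mathcal{O}(\sqrt{d\log T})$ of the same order as before. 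Next I would prove stage (ii): the adaptive mechanism of \conlinucbme still pushes every under-explored eigendirection $\bm{v}_i$ above $\alpha t$, even though the scheduled conversations now use smoothed contexts. Each such conversation contributes $(\doubletilde{\bm{x}}_k^{\top}\bm{v}_i)^2$ to the eigenvalue along $\bm{v}_i$, which I would lower-bound by combining the deterministic alignment $|\tilde{\bm{x}}_k^{\top}\bm{v}_i| \ge c_0$ from \Cref{assumption:c0-for-key-term} with the extra eigenvalue mass $\lambda_{\mathcal{K}} = c_1\rho^2/\log|\mathcal{K}|$ injected by the perturbation in \Cref{lemma:smoothed-eigenvalue-lower-bound}. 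Since the smoothing only adds to the per-direction contribution in expectation, the bound $\|\bm{x}_a\|_{\bm{M}_t^{-1}} \le \sqrt{2/(\alpha t)}$ of \Cref{lemma:me-bounded-x-norm} is preserved, and the burn-in term $2P$ from the uncertainty-checking functions carries over unchanged. Finally, combining the two stages gives $\R(T) \le \sum_{t=1}^{T} 2\alpha_t\|\bm{x}_{a_t}\|_{\bm{M}_t^{-1}} + 2P \le \sum_{t=1}^{T} 2\alpha_t\sqrt{2/(\alpha t)} + 2P$, and since $\alpha_t = \mathcal{O}(\sqrt{d\log T})$ while $\sum_{t=1}^{T} 1/\sqrt{t} = \mathcal{O}(\sqrt{T})$, this collapses to $\mathcal{O}(\sqrt{dT\log T})$.

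The main obstacle is stage (ii), specifically the interaction between the two mechanisms. The scheduled conversation counts $n_k = \lceil(\alpha t - \lambda_{\bm{v}_i})/c_0^2\rceil$ are decided from the \emph{deterministic} $c_0$-alignment of the unperturbed contexts, yet the covariance updates $\bm{M}_t \leftarrow \bm{M}_t + \doubletilde{\bm{x}}_k\doubletilde{\bm{x}}_k^{\top}$ use the \emph{random} perturbed vectors, so I must rule out the possibility that an unlucky draw of $\bm{\varepsilon}_k$ destroys the intended per-direction contribution. Resolving this requires coupling the deterministic scheduling argument of \conlinucbme with a matrix-concentration argument in the spirit of \Cref{lemma:smoothed-linear-t}, showing that with high probability the realized perturbed contributions remain within a constant factor of their expectations; a union bound over the $d$ eigendirections and the at most $T$ checking rounds then delivers the high-probability guarantee $\lambda_{\min}(\bm{M}_t) \ge \alpha t$.
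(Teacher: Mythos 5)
You have correctly reconstructed the argument the paper intends: the paper in fact gives \emph{no} explicit proof of \Cref{corollary:combined-regret} — it is asserted as an immediate consequence of the \conlinucbme analysis (\Cref{lemma:me-diff-estimate-true-reward,lemma:me-bounded-x-norm,theorem:conlinucbme-regret}), with the smoothed-context norm \(1+\sqrt{d}R\) absorbed into \(\alpha_t\) as in \Cref{lemma:smoothed-diff-estimate-true-reward}. Your two-stage plan is exactly that implicit argument, and, to your credit, you identified the one place where it is \emph{not} routine: in \conlinucbskme (Algorithm~\ref{alg:conlinucbskme}, Line~\ref{line-skme:select-k}) the scheduled counts \(n_{\bm{v}_i} = \lceil(\alpha t-\lambda_{\bm{v}_i})/c_0^2\rceil\) are justified by the deterministic alignment \(|\tilde{\bm{x}}_k^{\top}\bm{v}_i|\ge c_0\) of \Cref{assumption:c0-for-key-term}, but the covariance updates use \(\doubletilde{\bm{x}}_k=\tilde{\bm{x}}_k+\bm{\varepsilon}_k\), and \(\bm{\varepsilon}_k^{\top}\bm{v}_i\) can be negative with magnitude up to \(\sqrt{d}R\), so the per-conversation contribution \((\doubletilde{\bm{x}}_k^{\top}\bm{v}_i)^2\) has no deterministic lower bound. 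Hence the chain of PSD inequalities in \Cref{eq:lower-bound-Mt-prime-continuous}, which is what makes \Cref{lemma:me-bounded-x-norm} deterministic given the schedule, does not carry over verbatim; the paper silently ignores this interaction.

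The gap is in your proposed resolution. You claim that ``with high probability the realized perturbed contributions remain within a constant factor of their expectations'' and then union-bound over the \(d\) eigendirections and up to \(T\) checking rounds. This fails for small batches: a scheduled batch can have \(n_{\bm{v}_i}=1\), and a single draw of \((\doubletilde{\bm{x}}_k^{\top}\bm{v}_i)^2\) (whose expectation is indeed \(\ge c_0^2\), since the noise is mean-zero and the argmax only helps) falls below a constant fraction of its mean with \emph{constant} probability; the concentration failure probability is \(\exp\ab(-\Omega(n_{\bm{v}_i}c_0^4/(1+\sqrt{d}R)^4))\), which is not summable over \(T\) checks when \(n_{\bm{v}_i}=O(1)\). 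The repair is to treat batch sizes asymmetrically rather than uniformly: a small batch corresponds to a small deficit (\(\alpha t-\lambda_{\bm{v}_i}\le n_{\bm{v}_i}c_0^2\)), so even its \emph{total} failure costs only \(O(c_0^2)\) of eigenvalue mass per direction, while large batches do concentrate via Bernstein/matrix-Chernoff as in \Cref{lemma:smoothed-linear-t}. Tracking the residual deficit carried from one check to the next then gives \(\lambda_{\min}(\bm{M}_t)\ge \alpha t - O\ab((1+\sqrt{d}R)^2\log(dT/\delta))\) with high probability, i.e.\ a burn-in of length \(O\ab(d\log(dT/\delta)/\alpha)\) after which \(\|\bm{x}_a\|_{\bm{M}_t^{-1}}\le\sqrt{4/(\alpha t)}\) and your final summation goes through. (Alternatively, inflate every scheduled count by an additive \(\Theta(\log(dT/\delta))\) term so that uniform concentration does hold.) Note this honest accounting yields \(\R(T)=\mathcal{O}(\sqrt{dT\log T}+d\log(dT))\) — an additive lower-order term of the same flavor as the ``\(+d\)'' in \Cref{theorem:conlinucbsk-regret} — rather than the bare \(\mathcal{O}(\sqrt{dT\log T})\) that the corollary states.
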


\subsection{Lower Bound for Conversational Bandits}
  We establish a regret lower bound for conversational bandits with \emph{finite} and \emph{time-varying} arm sets.
  Our result is novel because the well-known lower bound \(\Omega(\sqrt{dT})\) by \citet{chu2011contextual} does not consider conversational information and thus cannot be directly applied to our setting.
  Additionally, the existing lower bound for federated conversational bandits~\citep{li2024fedconpe} is also inapplicable, as it assumes a \emph{fixed} arm set.
  The detailed proof is given in \Cref{sec:lower-bound}.
  
\begin{restatable}[Regret lower bound]{theorem}{restatelowerbound}\label{thm:lowerbound}
  For any policy that chooses at most one key term per time step, there exists an instance of the conversational bandit problem such that the expected regret is at least \(\Omega(\sqrt{dT})\).
  Furthermore, for any \(T=2^m\) with \(m \in [d]\), the regret is at least \(\Omega(\sqrt{dT\log(T)})\).
\end{restatable}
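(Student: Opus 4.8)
The plan is to establish both bounds through the information-theoretic reduction to hypothesis testing that underlies classical bandit lower bounds, with the one genuinely new element being the control of the extra observation channel created by conversations. I would fix a family of instances indexed by a hidden sign pattern, take $\bm{\theta}^*$ proportional to a sign vector scaled to respect Assumption~\ref{assumption:normalized-vector}, and---crucially, since we are free to choose the instance---set the key-term set $\mathcal{K}$ to coincide with the same hard set of directions used for the arms. This ensures a single key-term query is no more informative about $\bm{\theta}^*$ than a single arm pull. Because the policy queries at most one key term per round, over the horizon it gathers at most $T$ key-term samples on top of $T$ arm rewards, so the \emph{total} observation budget is at most $2T$ noisy, $1$-sub-Gaussian measurements distributed across the $d$ directions; this cap is what forces conversations to help only by constant (and logarithmic) factors rather than improving the rate.

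For the $\Omega(\sqrt{dT})$ bound I would use a block construction that prevents the learner from probing all coordinates at once: partition the horizon into $d$ blocks of length $T/d$, and in block $i$ present only the two arms $\pm\bm{e}_i$, so that block $i$ is a two-point mean-testing problem for $\mathrm{sign}(\theta_i^*)$ with gap $\asymp \sqrt{d/T}$. A two-armed subproblem of length $T/d$ incurs $\Omega(\sqrt{T/d})$ regret unless its sign is resolved, and resolving $\mathrm{sign}(\theta_i^*)$ below a constant error requires $\Omega(T/d)$ informative samples. Since the combined arm-plus-key-term budget is only $2T$, at most a constant fraction of the $d$ coordinates can be over-sampled; the remaining $\Omega(d)$ coordinates each contribute $\Omega(\sqrt{T/d})$, and summing yields $\Omega(d\cdot\sqrt{T/d})=\Omega(\sqrt{dT})$. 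Formally I would run this as an Assouad argument, pairing each sign pattern with its $i$-th flip, bounding the transcript KL by $O(\mathbb{E}[N_i]\cdot (d/T))$ with $\sum_i \mathbb{E}[N_i]\le 2T$, and applying Pinsker's inequality.

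To sharpen this to $\Omega(\sqrt{dT\log T})$ when $T=2^m$ with $m\le d$, I would exploit the \emph{time-varying} arm sets---precisely the feature that makes the fixed-arm-set bound of \citet{li2024fedconpe} inapplicable---to realize up to $K=\Theta(T)$ distinct, well-separated arm directions over the horizon, packing the $T=2^m$ directions into $m\le d$ coordinates so that they fit on the unit sphere of $\mathbb{R}^d$ with the required separation. The testing problem is then to identify the hidden direction among a packing of size $K=T$, and Fano's inequality over this packing introduces a $\log K=\log T$ factor into the minimax error; re-optimizing the gap against both the $O(T)$ information budget and the $\log T$ packing entropy gives $\Omega(\sqrt{dT\log T})$, with $m\le d$ being exactly the regime in which this dominates $d\sqrt{T}$. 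Here too the conversational channel only doubles the effective horizon (since $\mathcal{K}$ is taken inside the same packing), leaving the rate unchanged. The resulting bound matches the $\mathcal{O}(\sqrt{dT\log T})$ upper bounds of \Cref{theorem:conlinucbsk-regret} and \Cref{theorem:conlinucbme-regret} up to constants.

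The main obstacle I expect is the uniform control of the conversational information under \emph{adaptive} querying. Unlike the classical setting of \citet{chu2011contextual}, where only the arm-reward channel is present, here the learner may use its entire interaction history to decide which key term to probe next, so each $N_i$ is an adaptively determined, stopping-time-like count rather than a fixed allocation. Making the KL / chain-rule bound rigorous---showing that no adaptive querying strategy can concentrate more than $O(T)$ units of information about $\bm{\theta}^*$, and in particular cannot break the $\sqrt{dT}$ barrier nor the $\sqrt{dT\log T}$ barrier in the packing regime---is the crux of the argument and the step that genuinely separates this result from prior conversational and contextual lower bounds.
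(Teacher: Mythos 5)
Your first bound is essentially the paper's own argument, and it is sound. The paper proves a divergence-decomposition lemma (\Cref{lemma:divergence-decomposition}) showing that in the canonical model whose transcript contains both arm-level and key-term-level observations, the KL divergence between two environments is \(\tfrac{1}{2}\sum_{t=1}^{T}\bigl(\E_{\vec{\theta}}[\langle \vec{A}_t,\vec{\theta}-\vec{\theta}'\rangle^2]+\E_{\vec{\theta}}[\langle \vec{K}_t,\vec{\theta}-\vec{\theta}'\rangle^2]\bigr)\); the policy terms cancel in the likelihood ratio, so adaptive key-term querying is handled by the standard chain rule, and the ``crux'' you flag in your last paragraph is resolved exactly this way rather than being a new obstacle. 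The paper then runs an Assouad-style sign-testing argument (via Bretagnolle--Huber rather than Pinsker) with arm set equal to key-term set, and composes blocks over disjoint coordinates and disjoint time periods; your \(\pm\bm{e}_i\), two-point-per-block version is the same construction up to constants and does give \(\Omega(\sqrt{dT})\).

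The gap is in the second bound: a Fano argument over a packing of \(K=T\) \emph{unit-norm} directions does not yield \(\Omega(\sqrt{dT\log T})\). With unit-norm arms and gap \(\Delta\), the per-round KL is \(O(\Delta^2)\) (the key-term channel only doubles this), so identification among \(K\) hypotheses over horizon \(n\) requires \(n\Delta^2\gtrsim \log K\), and optimizing \(\Delta\) gives per-instance regret \(\Omega(\sqrt{n\log K})\). If you spend the whole horizon on one \(m\)-dimensional packing with \(\log K\asymp m\asymp\log T\), you get \(\Omega(\sqrt{T\log T})\) and lose the \(\sqrt{d}\). If instead you compose \(d/m\) such blocks of horizon \(Tm/d\), each block contributes \(\sqrt{(Tm/d)\cdot m}=m\sqrt{T/d}\), and the total is \((d/m)\cdot m\sqrt{T/d}=\sqrt{dT}\): the \(\log K\) gained inside each block is exactly cancelled by the block bookkeeping. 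This is the well-known difficulty that logarithmic factors do not fall out of plain packing/Fano arguments for bandits with unit-norm finite arm sets (obtaining them, as in the work of Li, Wang, and Zhou on nearly minimax linear bandits, requires a far more delicate construction). Also, your claim that \(m\le d\) is the regime where \(\sqrt{dT\log T}\) dominates \(d\sqrt{T}\) is backwards: when \(\log T\le d\) one has \(\sqrt{dT\log T}\le d\sqrt{T}\).

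The paper gets the logarithmic factor from a different source: inside each \(m\)-dimensional block it keeps the \emph{hypercube} arm/key-term set \([-1,1]^m\), whose arms have norm up to \(\sqrt{m}\), and the coordinatewise Assouad argument (\Cref{lemma:lower-bound-infinite-arms}) then gives per-block regret \(\Omega(m\sqrt{n})\) --- \emph{linear} in \(m\), because the product structure of the arm set lets every coordinate contribute \(\Omega(\sqrt{n})\) additively. Composing \(d/m\) blocks of horizon \(Tm/d\) yields \((d/m)\cdot m\sqrt{Tm/d}=\Omega(\sqrt{dTm})\), and taking \(m=\log_2 T\le d\) gives \(\Omega(\sqrt{dT\log T})\). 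In short, the extra \(\sqrt{\log T}\) is the factor \(\sqrt{m}\) by which the block arm norms exceed one, not an entropy factor from a large packing; by restricting your hard instances to the unit sphere you discard precisely the feature that makes the second bound go through.
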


\subsection{Discussion on Optimality}
\label{subsec:diss}
To the best of our knowledge, we are the first to propose algorithms for conversational contextual bandits that achieve the \emph{optimal} regret bound of order \(\widetilde{\mathcal{O}}(\sqrt{dT})\).
We summarize the regret bounds of our proposed algorithms and related algorithms in \Cref{table:regret-comparison} and discuss the theoretical improvements over existing methods.

\begin{table}[htb]
\centering
\setlength{\extrarowheight}{0pt}
\addtolength{\extrarowheight}{\aboverulesep}
\addtolength{\extrarowheight}{\belowrulesep}
\setlength{\aboverulesep}{0pt}
\setlength{\belowrulesep}{0pt}
\caption{Comparison of theoretical regret bounds.}
\label{table:regret-comparison}
\resizebox{\linewidth}{!}{%
\begin{threeparttable}[b]
\begin{tabular}{lll} 
\toprule
\textbf{Algorithm}                                                                        & \textbf{Conversational} & \textbf{Regret}                    \\ 
\midrule
LinUCB~\cite{abbasi2011improved}                                                          & \xmark                  & $\mathcal{O}(d\sqrt{T}\log{T})$    \\
ConUCB~\cite{zhang2020conversational}, ConLinUCB-MCR~\cite{wang2023efficient}             & \cmark                  & $\mathcal{O}(d\sqrt{T}\log{T})$    \\
ConLinUCB-BS~\cite{wang2023efficient}                                                     & \cmark                  & At least $\mathcal{O}(d\sqrt{T\log{T}})$\tnote{*}    \\
\rowcolor[rgb]{0.753,0.753,0.753} \conlinucbsk (Ours, \Cref{theorem:conlinucbsk-regret})  & \cmark                  & $\mathcal{O}(\sqrt{dT\log{T}}+d)$  \\
\rowcolor[rgb]{0.753,0.753,0.753} \conlinucbme (Ours, \Cref{theorem:conlinucbme-regret})  & \cmark                  & $\mathcal{O}(\sqrt{dT\log{T}})$    \\
\rowcolor[rgb]{0.753,0.753,0.753} \conlinucbskme (Ours, \Cref{corollary:combined-regret}) & \cmark                  & $\mathcal{O}(\sqrt{dT\log{T}})$    \\
\bottomrule
\end{tabular}
  \begin{tablenotes}
  \item [*] The original paper claims a regret of $\mathcal{O}(\sqrt{dT\log{T}})$ but its analysis is flawed.
  \end{tablenotes}
\end{threeparttable}
}
\end{table}

The regret upper bound of LinUCB~\cite{abbasi2011improved} is \(\mathcal{O}(d\sqrt{T}\log{T})\), which serves as a standard benchmark in contextual linear bandits.
The first algorithm for conversational bandits, ConUCB~\cite{zhang2020conversational}, offers the same regret upper bound as LinUCB, indicating that it does not offer a substantial theoretical improvement over the non-conversational algorithms.
Since then, improving regret through conversational interactions has remained an open problem in the field.
Under the assumption that the key term set \(\mathcal{K}\) spans \(\mathbb{R}^d\), ConLinUCB-BS~\cite{wang2023efficient} achieves a regret upper bound of \(\mathcal{O}(\frac{1}{\sqrt{\lambda_{\mathcal{B}}}}\sqrt{dT\log{T}})\), where \(\lambda_{\mathcal{B}} \coloneq \lambda_{\min} \ab(\E_{k\in\text{unif}(\mathcal{B})}\ab[\tilde{\bm{x}}_k\tilde{\bm{x}}_k^{\top}]) \) and \(\mathcal{B}\) is the \emph{barycentric spanner} of \(\mathcal{K}\).
The authors assume \(\lambda_{\mathcal{B}}\) is a constant, leading to a regret bound of \(\mathcal{O}(\sqrt{dT\log{T}})\).
However, this assumption is incorrect as \(\lambda_{\mathcal{B}}\) depends on the dimension \(d\) and is not a constant.
Specifically, denoting \(\bm{X} \coloneq \E_{k\in\text{unif}(\mathcal{B})}\ab[\tilde{\bm{x}}_k\tilde{\bm{x}}_k^{\top}]\) and \(\{\lambda_i\}_{i=1}^{d}\) as its eigenvalues, we use the fact that \(\|\tilde{\bm{x}}_k\|=1\) and obtain \(\Tr{\ab(\bm{X})} = \E_{k\in\text{unif}(\mathcal{B})}\ab[\Tr\ab(\tilde{\bm{x}}_k\tilde{\bm{x}}_k^{\top})] = 1\), thus \(\lambda_{\mathcal{B}} \leq \frac{\sum_{i=1}^{d}\lambda_i}{d} = \frac{\Tr{\ab(\bm{X})}}{d} = \frac{1}{d}\).
Consequently, by plugging this result back into the regret expression, the regret bound of ConLinUCB-BS cannot be better than \(\mathcal{O}(d\sqrt{T\log{T}})\).
These previous attempts underscore the significance of our work.
In contrast, with the smoothed key term context technique and with the adaptive conversation technique, our algorithms achieve a better regret bound of \(\mathcal{O}(\sqrt{dT\log{T}}+d)\) and \(\mathcal{O}(\sqrt{dT\log{T}})\), respectively.
These improvements successfully match the lower bound (\Cref{thm:lowerbound}) up to logarithmic factors in their dependence on the time horizon \(T\).

\section{Evaluation}
\label{sec:evaluation}
In this section, we evaluate the performance of our algorithms on both synthetic and real-world datasets.
All the experiments were conducted on a machine equipped with a 3.70 GHz Intel Xeon E5-1630 v4 CPU and 32GB RAM.

\subsection{Experiment Setups}
\subsubsection{Datasets}
Consistent with existing studies, we generate a synthetic dataset and use three real-world datasets: MovieLens-25M~\cite{harper-2015-movielens}, Last.fm~\cite{cantador-2011-second-workshop}, and Yelp\footnote{\url{https://www.yelp.com/dataset}}.

For the synthetic dataset, we set the dimension \(d=50\), the number of users \(N=200\), the number of arms \(|\mathcal{A}|=5,000\), and the number of key terms \(|\mathcal{K}|=1,000\).
We generate it following~\citet{zhang2020conversational}.
First, for each key term \(k \in \mathcal{K}\), we sample a pseudo feature vector \(\dot{\bm{x}}_k\) with each dimension drawn from a uniform distribution \(\mathcal{U}(-1,1)\). For each arm \(i \in \mathcal{A}\), we randomly select an integer \(n_i \in \{1,2, \dots, 5\}\) and uniformly sample a subset of key terms \(\mathcal{K}_i \subset \mathcal{K}\) with \(|\mathcal{K}_i|=n_i\). The weight is defined as \(w_{i,k}=1/n_i\) for each \(k \in \mathcal{K}_i\). For each arm \(i\), the feature vector \(\bm{x}_i\) is drawn from a multivariate Gaussian \(\mathcal{N}(\sum_{j \in \mathcal{K}_i} \dot{\bm{x}}_j/n_i, \bm{I})\). The feature vector for each key term \(k\), denoted by \(\tilde{\bm{x}}_k\), is computed as \(\tilde{\bm{x}}_k = \sum_{i \in \mathcal{A}} \frac{w_{i,k}}{\sum_{j \in \mathcal{A}}w_{j,k}}\bm{x}_i\). Finally, each user's preference vector \(\bm{\theta}_u \in \mathbb{R}^d\) is generated by sampling each dimension from \(\mathcal{U}(-1,1)\) and normalizing it to unit length.

For the real-world datasets, we regard movies\slash artists\slash businesses as arms.
To exclude unrepresentative or insufficiently informative data (such as users who have not submitted any reviews or movies with only a few reviews), we extract a subset of \(|\mathcal{A}|=5,000\) arms with the highest number of user-assigned ratings/tags, and a subset of \(N=200\) users who have assigned the most ratings/tags.
Key terms are identified by using the associated movie genres, business categories, or tag IDs in the MovieLens, Yelp, and Last.fm datasets, respectively.
For example, each movie is associated with a list of genres, such as ``action'' or ``comedy'', and each business (e.g., restaurant) is categorized by terms such as ``Mexican'' or ``Burgers''.
Using the data extracted above, we create a \emph{feedback matrix} \(\bm{R}\) of size \({N \times |\mathcal{A}|}\), where each element \(\bm{R}_{i,j}\) represents the user \(i\)'s feedback to arm \(j\).
We assume that the user's feedback is binary.
For the MovieLens and Yelp datasets, a user's feedback for a movie/business is 1 if the user's rating is higher than 3; otherwise, the feedback is 0.
For the Last.fm dataset, a user's feedback for an artist is 1 if the user assigns a tag to the artist.
 Next, we generate the feature vectors for arms \(\bm{x}_i\) and the preference vectors for users \(\bm{\theta}_u\).
Following existing works, we decompose the feedback matrix \(\bm{R}\) using truncated Singular Value Decomposition (SVD) as \(\bm{R} \approx \bm{\Theta} \bm{S} \bm{A}^{\top}\), where \(\bm{\Theta} \in \RR^{N \times d}\) and \(\bm{A} \in \RR^{|\mathcal{A}| \times d}\) contain the top-\(d\) left and right singular vectors, and \(\bm{S} \in \RR^{d\times d}\) is a diagonal matrix with the corresponding top-\(d\) singular values.
Then each \(\bm{\theta}_u^\top\) corresponds to the \(u\)-th row of \(\bm{\Theta}\bm{S}\) for all \(u \in [N]\), and each \(\bm{x}_i^\top\) corresponds to the \(i\)-th row of \(\bm{A}\) for all \(i \in \mathcal{A}\).
The feature vectors for key terms are generated similarly to those in the synthetic dataset, by assigning equal weights for all key terms corresponding to each arm.

\subsubsection{Baseline Algorithms}
We select the following baseline algorithms from existing studies:
(1) LinUCB~\cite{abbasi2011improved}: The standard linear contextual bandit algorithm, which does not consider the conversational setting and only has arm-level feedback.
(2) Arm-Con~\cite{christakopoulou2016towards}: An extension of LinUCB that initiates conversations directly from arm sets.
(3) ConUCB~\cite{zhang2020conversational}: The first algorithm proposed for conversational contextual bandits that queries key terms when conversations are allowed.
(4) ConLinUCB~\cite{wang2023efficient}: It consists of three algorithms with different key term selection strategies.
ConLinUCB-BS computes the \emph{barycentric spanner} of key terms as an exploration basis.
ConLinUCB-MCR selects key terms with the largest confidence radius.
ConLinUCB-UCB chooses key terms with the largest upper confidence bounds.
Since ConLinUCB-BS and ConLinUCB-MCR demonstrate superior performance, we focus our comparisons on these two variants.

\subsection{Evaluation Results}
\label{sec:evaluation-results}
\subsubsection{Cumulative Regret}
First, we compare our algorithms against all baseline algorithms in terms of cumulative regret over \(T=6,000\) rounds.
In each round, we randomly select \(|\mathcal{A}|=200\) arms from each dataset.
For the baseline algorithms, we adopt the conversation frequency function \(b(t) = 5\lfloor \log(T)\rfloor\), as specified in their original papers.
We present the results for all three checking functions ``Continuous'', ``Fixed Interval'', and "Exponential Phase'', for both \conlinucbme and \conlinucbskme.
For the ``Fixed Interval'' function, \texttt{UncertaintyChecking} is triggered every 100 rounds, whereas for the "Exponential Phase'' it is triggered whenever \(t\) is a power of 2. 
For \conlinucbsk, both the perturbation level \(\rho^2\) and the truncation limit \(R\) are set to 1.
The results are averaged over 20 trials, and the resulting confidence intervals are included in the figures.
Under the ``Continuous Checking'' function, as shown in \Cref{fig:regret}, our three algorithms consistently achieve the best performance (lowest regret) with an improvement of over 14.6\% compared to the best baseline.
Similar performance trends hold under the other two checking functions, as illustrated in \Cref{fig:regret-interval,fig:regret-exponential}.
These results confirm the validity of our theoretical advancements.

\begin{figure}[htb]
    \centering
    \includegraphics[width=\linewidth]{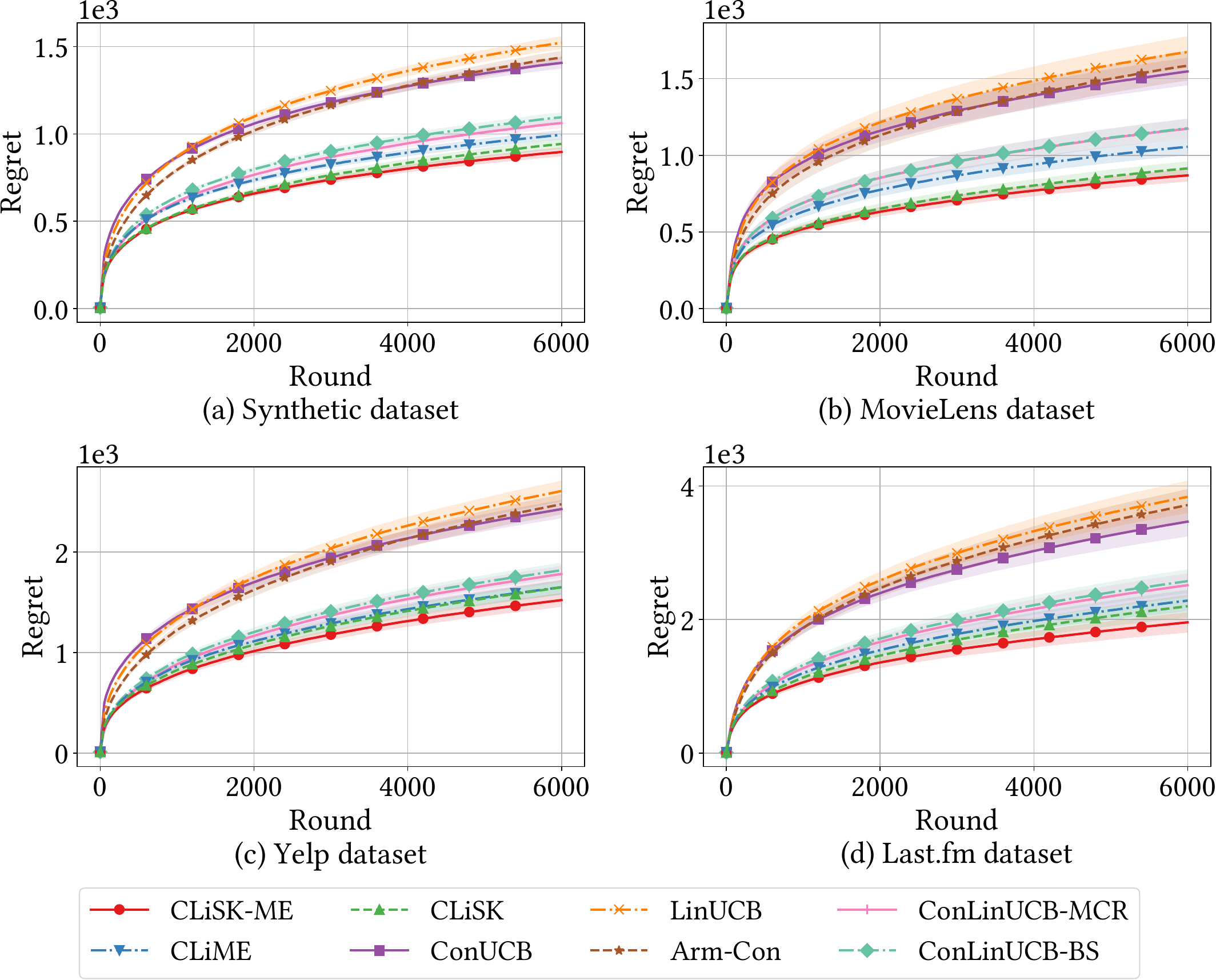}
    \caption{Comparison of cumulative regret where \conlinucbme and \conlinucbskme use the ``Continuous Checking'' function.}
    \label{fig:regret}
\end{figure}

\begin{figure}[htb]
    \centering
    \includegraphics[width=\linewidth]{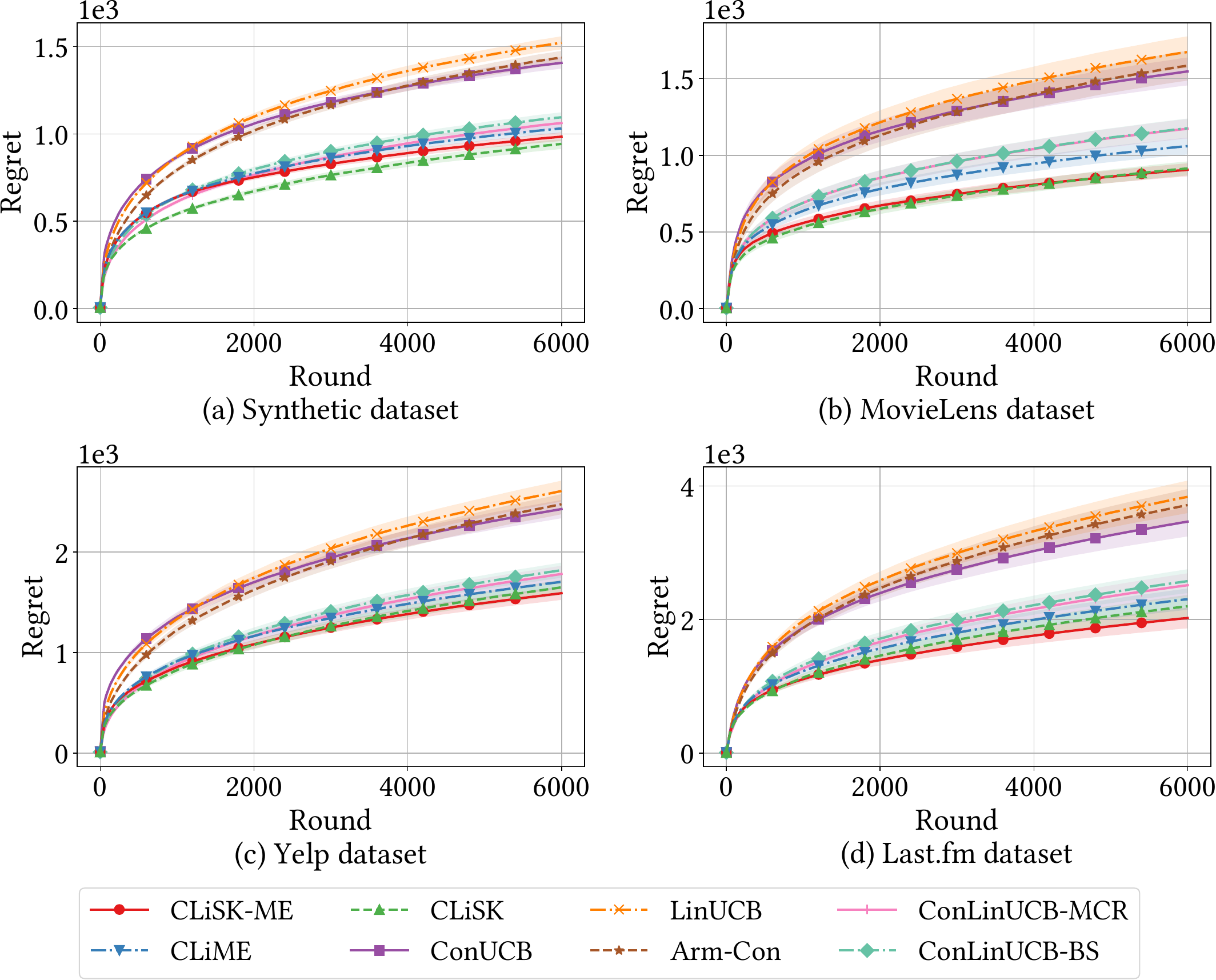}
    \caption{Comparison of cumulative regret where \conlinucbme and \conlinucbskme use the ``Fixed Interval'' function.}
    \label{fig:regret-interval}
\end{figure}

\begin{figure}[htb]
    \centering
    \includegraphics[width=\linewidth]{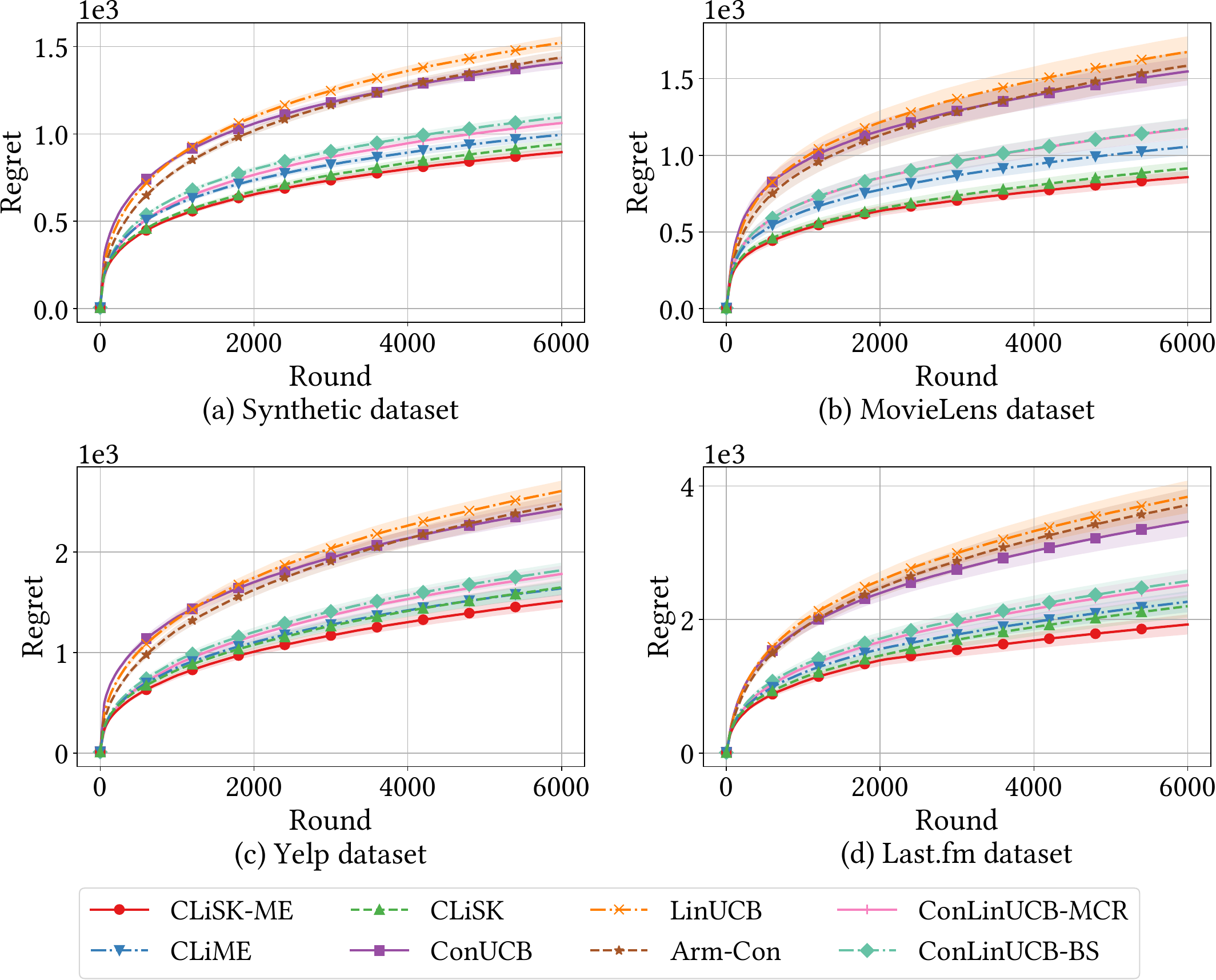}
    \caption{Comparison of cumulative regret where \conlinucbme and \conlinucbskme use the ``Exponential Phase'' function.}
    \label{fig:regret-exponential}
\end{figure}

\subsubsection{Precision of Estimated Preference Vectors}
To assess how accurately each algorithm learns the user's preferences over time, we measure the average distance between the estimated vector \(\widehat{\vec{\theta}}_t\) and the ground truth \(\vec{\theta}^{*}\) for all algorithms over 1000 rounds.
We present the results for the ``Continuous Checking'' function of \conlinucbme and \conlinucbskme, with results for other functions provided in \Cref{app:extra-experiments}.
As shown in \Cref{fig:theta-diff}, all algorithms exhibit a decreasing estimation error over time.
However, our three algorithms consistently achieve the lowest estimation error in all datasets.
This is because they leverage our novel conversational mechanism to gather more informative feedback, significantly accelerating the reduction of estimation error.
As a result, our algorithms estimate the user's preference vector more quickly and accurately than the baseline methods.

\begin{figure}[htb]
    \centering
    \includegraphics[width=\linewidth]{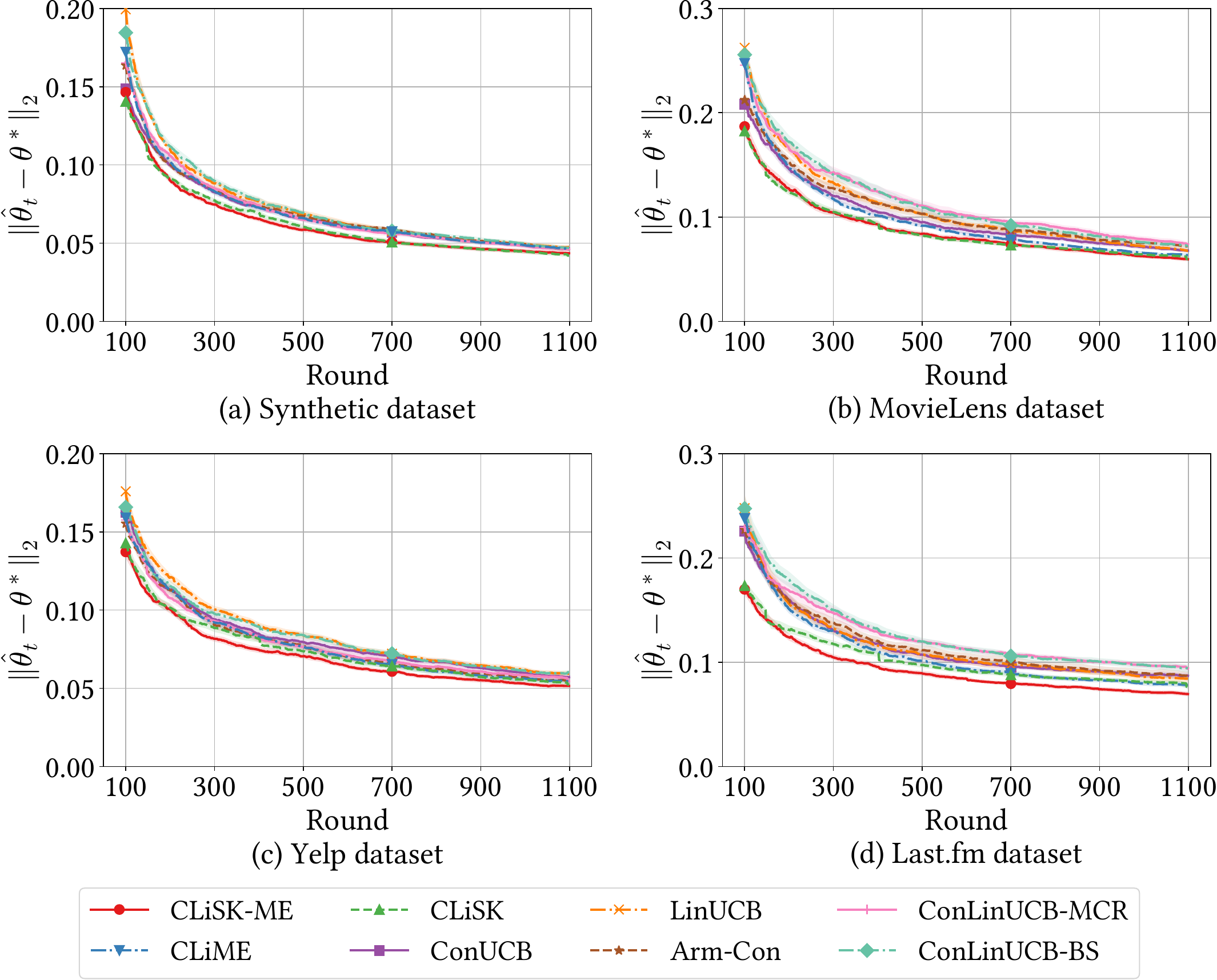}
    \caption{Comparison of estimation precision where \conlinucbme and \conlinucbskme use the ``Continuous Checking'' function.}
    \label{fig:theta-diff}
\end{figure}

\subsubsection{Number of Conversations}
Next, we evaluate the number of conversations initiated by \conlinucbme. Since \conlinucbsk and all baseline algorithms initiate conversations based on a deterministic function \(b(t)\), their results are consistent across all datasets. Therefore, we plots the scenarios for \(b(t) = 5\lfloor \log(t) \rfloor\) and \(b(t) = \lfloor t/50 \rfloor\) as in prior studies.
It is also important to note that although some existing studies employ a logarithmic \(b(t)\) in their experiments, their theoretical results require a linear \(b(t)\) to hold.
In contrast to the baselines, our algorithm \conlinucbme adaptively initiates conversations depending on the current uncertainty of user preferences, providing greater flexibility and enhancing the user experience. We plot the number of conversations initiated by \conlinucbme with different uncertainty checking functions across 4 datasets. As shown in \Cref{fig:keyterm-pulling-time}, the number of conversations increases only logarithmically with the number of rounds.

\begin{figure}[htb]
    \centering
    \includegraphics[width=\linewidth]{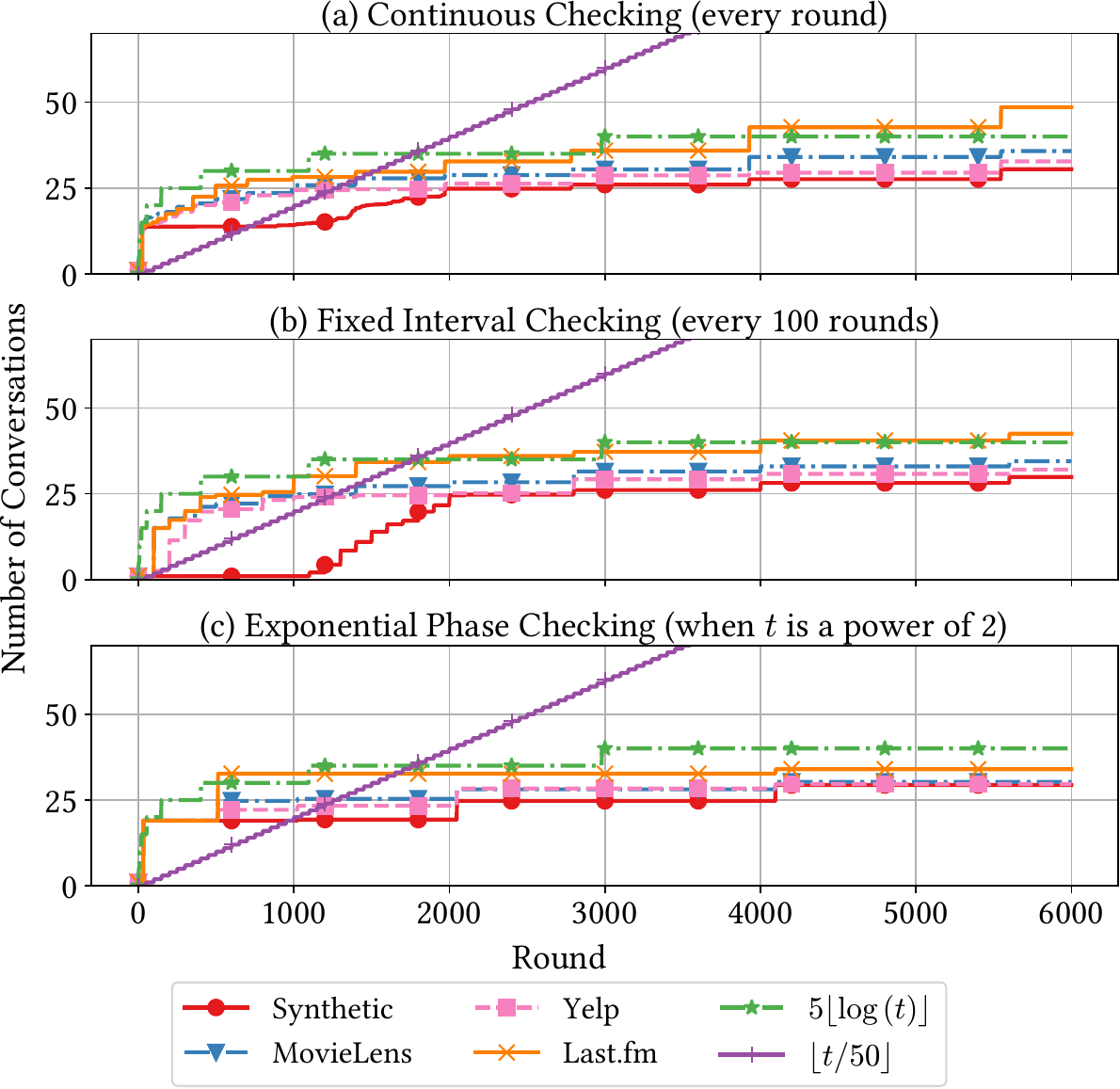}
    \caption{Number of conversations initiated by deterministic approaches and our adaptive approach \conlinucbme with different uncertainty checking functions.}
    \label{fig:keyterm-pulling-time}
\end{figure}

\subsubsection{Running Time}
To evaluate the computational efficiency, we compare the running times of our algorithms with other conversational methods using the MovieLens dataset across \(T=6,000\) rounds.
We separately report the total running times, as well as the times for picking arms and key terms.
The results are averaged over 20 runs.
As shown in \Cref{table:running-time-comparison},
our three algorithms show substantial improvements compared to ConUCB and exhibit performance comparable to the ConLinUCB family of algorithms.
For \conlinucbme and \conlinucbskme,  while matrix operations and eigenvalue computation introduce slight overhead, the algorithms remain efficient, particularly with interval and exponential checking strategies.

\begin{table}[htb]
\centering
\caption{Comparison of Running Times for Conversational Bandit Algorithms Using the Movielens dataset.}
\label{table:running-time-comparison}
\begin{tabular}{llrrr} 
\toprule
\multicolumn{2}{c}{\multirow{2}{*}{\textbf{Algorithms}}} & \multicolumn{3}{c}{\textbf{Running Time (s)}}  \\ 
\cmidrule(lr){3-5}
\multicolumn{2}{c}{}                                     & \multicolumn{1}{l}{Key terms} & Arms  & Total  \\ 
\midrule
\multirow{3}{*}{\conlinucbskme} & Continuous             & 1.169                         & 3.443 & 4.651  \\
                                & Interval               & 0.332                         & 3.361 & 3.723  \\
                                & Exponential            & 0.352                         & 3.344 & 3.724  \\ 
\cmidrule(lr){1-5}
\multirow{3}{*}{\conlinucbme}   & Continuous             & 0.803                         & 3.371 & 4.205  \\
                                & Interval               & 0.021                         & 3.341 & 3.390  \\
                                & Exponential            & 0.014                         & 3.334 & 3.375  \\ 
\cmidrule(lr){1-5}
\conlinucbsk                    &                        & 0.490                         & 3.339 & 3.857  \\ 
\cmidrule(lr){1-5}
ConUCB                          &                        & 0.011                         & 8.362 & 8.403  \\ 
\cmidrule(lr){1-5}
\multirow{3}{*}{ConLinUCB}       & UCB                    & 0.009                         & 3.354 & 3.392  \\
                                & MCR                    & 0.007                         & 3.337 & 3.371  \\
                                & BS                     & 0.006                         & 3.334 & 3.366  \\
\bottomrule
\end{tabular}
\end{table}

\subsubsection{Ablation Study}
We conduct an ablation study evaluating the effect of the truncation limit \(R\).
Specifically, we analyze how different values of  \(R\) affect algorithm performance by comparing the cumulative regrets at round 6,000 across all datasets, as shown in \Cref{fig:regret-vs-R}.
The results indicate that increasing \(R\) from 0.1 to 3.1 leads to a decrease in regret, with performance stabilizing when \(R>2\).
For the perturbation level \(\rho^2\), we observe that varying it from 0.1 to 3 results in no significant change in regret. 
Therefore, we do not include a separate figure for this parameter.

\begin{figure}[htb]
    \centering
    \includegraphics[width=\linewidth]{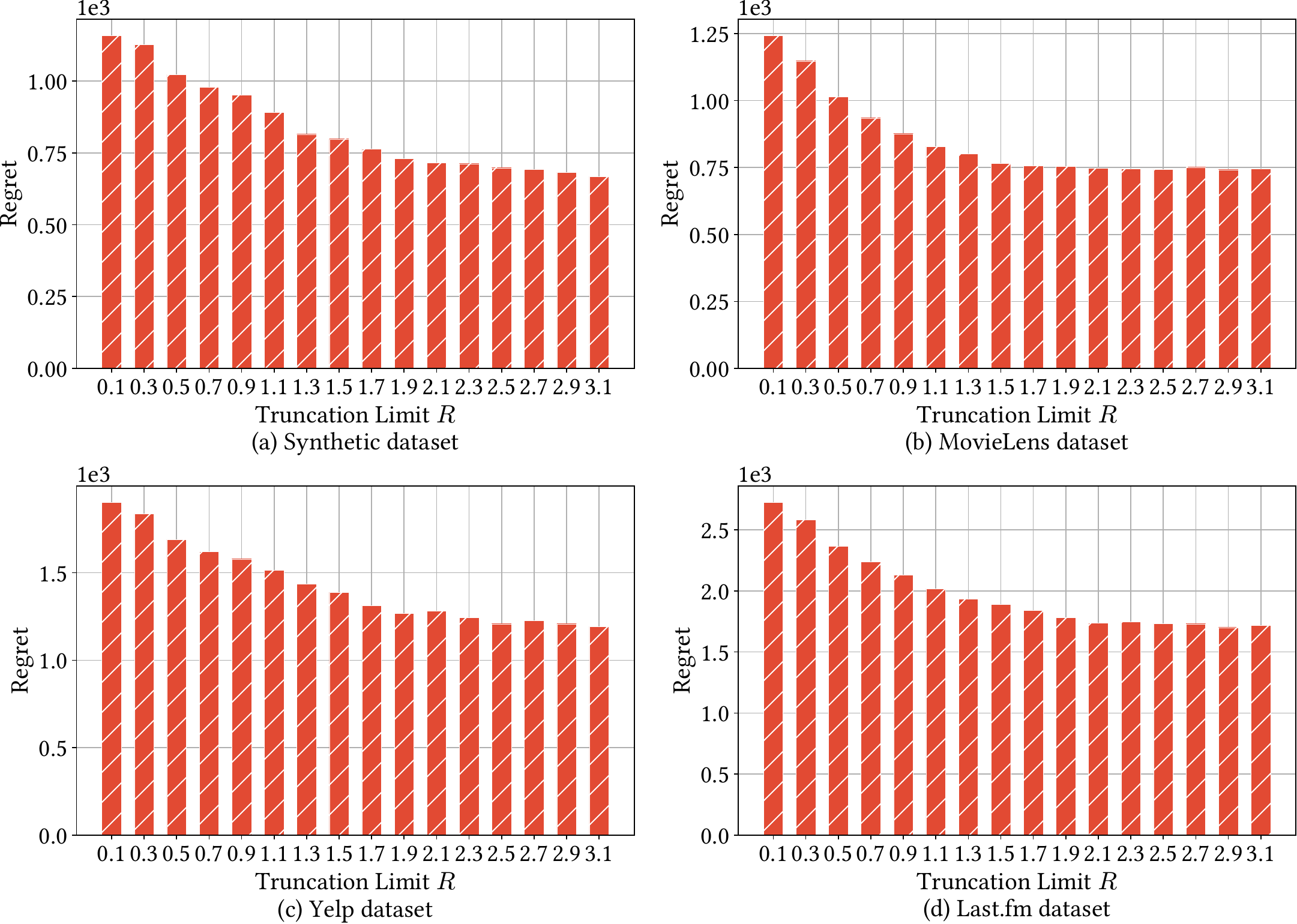}
    \caption{Effect of the truncation limit \(R\).}
    \label{fig:regret-vs-R}
\end{figure}

\section{Related Work}
\label{sec:related}
Our research is closely aligned with studies on conversational contextual bandits, particularly focusing on the problem of key term selection within this framework.

Contextual bandits serve as a fundamental framework for online sequential decision-making problems, covering applications like recommender systems~\cite{li2010contextual,chu2011contextual} and computer networking~\cite{gai2012combinatorial}.
Contextual bandit algorithms aim to maximize the cumulative reward in the long run while making the trade-off between exploitation and exploration.
Prominent algorithms include LinUCB~\cite{abbasi2011improved}  and Thompson Sampling (TS)~\cite{agrawal2012analysis}.

To address the cold start problem, conversational recommender systems (CRSs)~\cite{christakopoulou2016towards, sun2018conversational, zhang2018towards} are proposed to engage users in conversations to learn their preferences more effectively.
\citet{zhang2020conversational} extend the standard contextual bandits to model conversational interactions, and the pioneering ConUCB algorithm with a regret upper bound \(\mathcal{O}(d\sqrt{T}\log{T})\).
Following the foundational work of~\citet{zhang2020conversational}, a branch of research has advanced this field.
\citet{li2021seamlessly} design the first TS-type algorithm ConTS.
\citet{wu2021clustering} propose a clustering-based algorithm to automatically generate key terms.
\citet{zuo2022hierarchical} propose Hier-UCB and Hier-LinUCB, leveraging the hierarchical structures between key terms and items.
\citet{xie2021comparison} introduce a comparison-based conversation framework and propose RelativeConUCB.
\citet{zhao2022knowledge} integrate knowledge graphs into conversational bandits.
\citet{li2024fedconpe} investigate federated conversational bandits.
\citet{dai2024conversational,dai2024online} study the conversational bandits with misspecified/corrupted models.
To enhance learning efficiency,
\citet{dai-2025-multi-agent} consider multi-agent LLM response identification with a fixed arm set.
\citet{wang2023efficient} and~\citet{yang2024conversational} investigate the key term selection strategies and propose the ConLinUCB-BS and ConDuel algorithms, respectively. Both algorithms uniformly select key terms from the barycentric spanner of the key term set.

The smoothed analysis for contextual bandits has been widely studied recently~\cite{kannan2018smoothed, sivakumar2020structured, raghavan2023greedy, raghavan2018externalities,li-2025-towards,li-2025-demystifying}.
The smoothed setting bridges i.i.d. distributional and adversarial contexts.
\citet{kannan2018smoothed} first introduce the smoothed analysis for linear contextual bandits, showing that small perturbations can lead to sublinear regret with a greedy algorithm.
\citet{raghavan2018externalities} and \citet{raghavan2023greedy} show that the greedy algorithm achieves the best possible Bayesian regret in this setting.
\citet{sivakumar2020structured} extend the smoothed analysis to structured linear bandits.
Building on these insights, we apply the smoothed key term contexts in conversational contextual bandits.

\section{Conclusion} 
\label{sec:conclusion}

In this paper, we studied key term selection strategies for conversational contextual bandits and introduced three novel algorithms: \conlinucbsk, \conlinucbme, and \conlinucbskme.
\conlinucbsk leverages smoothed key term contexts to enhance exploration, while \conlinucbme adaptively initiates conversations with key terms that minimize uncertainty in the feature space.  \conlinucbskme integrates both techniques, further improving learning efficiency.
We proved that all three algorithms achieve tighter regret bounds than prior studies.
Extensive evaluations showed that our algorithms outperform other conversational bandit algorithms.

\begin{acks}
The work of John C.S. Lui was supported in part by the RGC GRF-14202923.
\end{acks}

\newpage

\bibliographystyle{ACM-Reference-Format}
\balance
\bibliography{references}


\begin{thebibliography}{32}


\ifx \showCODEN    \undefined \def \showCODEN     #1{\unskip}     \fi
\ifx \showISBNx    \undefined \def \showISBNx     #1{\unskip}     \fi
\ifx \showISBNxiii \undefined \def \showISBNxiii  #1{\unskip}     \fi
\ifx \showISSN     \undefined \def \showISSN      #1{\unskip}     \fi
\ifx \showLCCN     \undefined \def \showLCCN      #1{\unskip}     \fi
\ifx \shownote     \undefined \def \shownote      #1{#1}          \fi
\ifx \showarticletitle \undefined \def \showarticletitle #1{#1}   \fi
\ifx \showURL      \undefined \def \showURL       {\relax}        \fi
\providecommand\bibfield[2]{#2}
\providecommand\bibinfo[2]{#2}
\providecommand\natexlab[1]{#1}
\providecommand\showeprint[2][]{arXiv:#2}

\bibitem[Abbasi-Yadkori et~al\mbox{.}(2011)]%
        {abbasi2011improved}
\bibfield{author}{\bibinfo{person}{Yasin Abbasi-Yadkori}, \bibinfo{person}{D{\'a}vid P{\'a}l}, {and} \bibinfo{person}{Csaba Szepesv{\'a}ri}.} \bibinfo{year}{2011}\natexlab{}.
\newblock \showarticletitle{Improved algorithms for linear stochastic bandits}.
\newblock \bibinfo{journal}{\emph{Advances in neural information processing systems}}  \bibinfo{volume}{24} (\bibinfo{year}{2011}).
\newblock


\bibitem[Agrawal and Goyal(2012)]%
        {agrawal2012analysis}
\bibfield{author}{\bibinfo{person}{Shipra Agrawal} {and} \bibinfo{person}{Navin Goyal}.} \bibinfo{year}{2012}\natexlab{}.
\newblock \showarticletitle{Analysis of thompson sampling for the multi-armed bandit problem}. In \bibinfo{booktitle}{\emph{Conference on learning theory}}. JMLR Workshop and Conference Proceedings, \bibinfo{pages}{39--1}.
\newblock


\bibitem[Bretagnolle and Huber(1978)]%
        {bretagnolle-huber-1978}
\bibfield{author}{\bibinfo{person}{J. Bretagnolle} {and} \bibinfo{person}{C. Huber}.} \bibinfo{year}{1978}\natexlab{}.
\newblock \showarticletitle{Estimation des densit{\'e}s : Risque minimax}. In \bibinfo{booktitle}{\emph{S{\'e}minaire de Probabilit{\'e}s XII}}, \bibfield{editor}{\bibinfo{person}{C.~Dellacherie}, \bibinfo{person}{P.~A. Meyer}, {and} \bibinfo{person}{M.~Weil}} (Eds.). \bibinfo{publisher}{Springer Berlin Heidelberg}, \bibinfo{address}{Berlin, Heidelberg}, \bibinfo{pages}{342--363}.
\newblock
\showISBNx{978-3-540-35856-5}


\bibitem[Cantador et~al\mbox{.}(2011)]%
        {cantador-2011-second-workshop}
\bibfield{author}{\bibinfo{person}{Ivan Cantador}, \bibinfo{person}{Peter Brusilovsky}, {and} \bibinfo{person}{Tsvi Kuflik}.} \bibinfo{year}{2011}\natexlab{}.
\newblock \showarticletitle{Second Workshop on Information Heterogeneity and Fusion in Recommender Systems (HetRec2011)}. In \bibinfo{booktitle}{\emph{Proceedings of the Fifth ACM Conference on Recommender Systems}} \emph{(\bibinfo{series}{RecSys '11})}. \bibinfo{pages}{387–388}.
\newblock


\bibitem[Christakopoulou et~al\mbox{.}(2016)]%
        {christakopoulou2016towards}
\bibfield{author}{\bibinfo{person}{Konstantina Christakopoulou}, \bibinfo{person}{Filip Radlinski}, {and} \bibinfo{person}{Katja Hofmann}.} \bibinfo{year}{2016}\natexlab{}.
\newblock \showarticletitle{Towards conversational recommender systems}. In \bibinfo{booktitle}{\emph{Proceedings of the 22nd ACM SIGKDD international conference on knowledge discovery and data mining}}. \bibinfo{pages}{815--824}.
\newblock


\bibitem[Chu et~al\mbox{.}(2011)]%
        {chu2011contextual}
\bibfield{author}{\bibinfo{person}{Wei Chu}, \bibinfo{person}{Lihong Li}, \bibinfo{person}{Lev Reyzin}, {and} \bibinfo{person}{Robert Schapire}.} \bibinfo{year}{2011}\natexlab{}.
\newblock \showarticletitle{Contextual bandits with linear payoff functions}. In \bibinfo{booktitle}{\emph{Proceedings of the Fourteenth International Conference on Artificial Intelligence and Statistics}}. JMLR Workshop and Conference Proceedings, \bibinfo{pages}{208--214}.
\newblock


\bibitem[Dai et~al\mbox{.}(2024a)]%
        {dai2024conversational}
\bibfield{author}{\bibinfo{person}{Xiangxiang Dai}, \bibinfo{person}{Zhiyong Wang}, \bibinfo{person}{Jize Xie}, \bibinfo{person}{Xutong Liu}, {and} \bibinfo{person}{John~CS Lui}.} \bibinfo{year}{2024}\natexlab{a}.
\newblock \showarticletitle{Conversational Recommendation with Online Learning and Clustering on Misspecified Users}.
\newblock \bibinfo{journal}{\emph{IEEE Transactions on Knowledge and Data Engineering}} \bibinfo{volume}{36}, \bibinfo{number}{12} (\bibinfo{year}{2024}), \bibinfo{pages}{7825--7838}.
\newblock


\bibitem[Dai et~al\mbox{.}(2024b)]%
        {dai2024online}
\bibfield{author}{\bibinfo{person}{Xiangxiang Dai}, \bibinfo{person}{Zhiyong Wang}, \bibinfo{person}{Jize Xie}, \bibinfo{person}{Tong Yu}, {and} \bibinfo{person}{John~CS Lui}.} \bibinfo{year}{2024}\natexlab{b}.
\newblock \showarticletitle{Online Learning and Detecting Corrupted Users for Conversational Recommendation Systems}.
\newblock \bibinfo{journal}{\emph{IEEE Transactions on Knowledge and Data Engineering}} \bibinfo{volume}{36}, \bibinfo{number}{12} (\bibinfo{year}{2024}), \bibinfo{pages}{8939--8953}.
\newblock


\bibitem[Dai et~al\mbox{.}(2025)]%
        {dai-2025-multi-agent}
\bibfield{author}{\bibinfo{person}{Xiangxiang Dai}, \bibinfo{person}{Yuejin Xie}, \bibinfo{person}{Maoli Liu}, \bibinfo{person}{Xuchuang Wang}, \bibinfo{person}{Zhuohua Li}, \bibinfo{person}{Huanyu Wang}, {and} \bibinfo{person}{John C.~S. Lui}.} \bibinfo{year}{2025}\natexlab{}.
\newblock \bibinfo{title}{Multi-Agent Conversational Online Learning for Adaptive LLM Response Identification}.
\newblock
\showeprint[arxiv]{2501.01849}~[cs.HC]


\bibitem[Gai et~al\mbox{.}(2012)]%
        {gai2012combinatorial}
\bibfield{author}{\bibinfo{person}{Yi Gai}, \bibinfo{person}{Bhaskar Krishnamachari}, {and} \bibinfo{person}{Rahul Jain}.} \bibinfo{year}{2012}\natexlab{}.
\newblock \showarticletitle{Combinatorial network optimization with unknown variables: Multi-armed bandits with linear rewards and individual observations}.
\newblock \bibinfo{journal}{\emph{IEEE/ACM Transactions on Networking}} \bibinfo{volume}{20}, \bibinfo{number}{5} (\bibinfo{year}{2012}), \bibinfo{pages}{1466--1478}.
\newblock


\bibitem[Gao et~al\mbox{.}(2021)]%
        {gao2021advances}
\bibfield{author}{\bibinfo{person}{Chongming Gao}, \bibinfo{person}{Wenqiang Lei}, \bibinfo{person}{Xiangnan He}, \bibinfo{person}{Maarten de Rijke}, {and} \bibinfo{person}{Tat-Seng Chua}.} \bibinfo{year}{2021}\natexlab{}.
\newblock \showarticletitle{Advances and challenges in conversational recommender systems: A survey}.
\newblock \bibinfo{journal}{\emph{AI open}}  \bibinfo{volume}{2} (\bibinfo{year}{2021}), \bibinfo{pages}{100--126}.
\newblock


\bibitem[Harper and Konstan(2015)]%
        {harper-2015-movielens}
\bibfield{author}{\bibinfo{person}{F.~Maxwell Harper} {and} \bibinfo{person}{Joseph~A. Konstan}.} \bibinfo{year}{2015}\natexlab{}.
\newblock \showarticletitle{The MovieLens Datasets: History and Context}.
\newblock \bibinfo{journal}{\emph{ACM Trans. Interact. Intell. Syst.}} \bibinfo{volume}{5}, \bibinfo{number}{4}, Article \bibinfo{articleno}{19} (\bibinfo{date}{dec} \bibinfo{year}{2015}), \bibinfo{numpages}{19}~pages.
\newblock


\bibitem[Kannan et~al\mbox{.}(2018)]%
        {kannan2018smoothed}
\bibfield{author}{\bibinfo{person}{Sampath Kannan}, \bibinfo{person}{Jamie~H Morgenstern}, \bibinfo{person}{Aaron Roth}, \bibinfo{person}{Bo Waggoner}, {and} \bibinfo{person}{Zhiwei~Steven Wu}.} \bibinfo{year}{2018}\natexlab{}.
\newblock \showarticletitle{A smoothed analysis of the greedy algorithm for the linear contextual bandit problem}.
\newblock \bibinfo{journal}{\emph{Advances in neural information processing systems}}  \bibinfo{volume}{31} (\bibinfo{year}{2018}).
\newblock


\bibitem[Lattimore and Szepesvári(2020)]%
        {lattimore-2020-bandit-algorithms}
\bibfield{author}{\bibinfo{person}{Tor Lattimore} {and} \bibinfo{person}{Csaba Szepesvári}.} \bibinfo{year}{2020}\natexlab{}.
\newblock \bibinfo{booktitle}{\emph{Bandit Algorithms}}.
\newblock \bibinfo{publisher}{Cambridge University Press}.
\newblock


\bibitem[Li et~al\mbox{.}(2010)]%
        {li2010contextual}
\bibfield{author}{\bibinfo{person}{Lihong Li}, \bibinfo{person}{Wei Chu}, \bibinfo{person}{John Langford}, {and} \bibinfo{person}{Robert~E Schapire}.} \bibinfo{year}{2010}\natexlab{}.
\newblock \showarticletitle{A contextual-bandit approach to personalized news article recommendation}. In \bibinfo{booktitle}{\emph{Proceedings of the 19th international conference on World wide web}}. \bibinfo{pages}{661--670}.
\newblock


\bibitem[Li et~al\mbox{.}(2021)]%
        {li2021seamlessly}
\bibfield{author}{\bibinfo{person}{Shijun Li}, \bibinfo{person}{Wenqiang Lei}, \bibinfo{person}{Qingyun Wu}, \bibinfo{person}{Xiangnan He}, \bibinfo{person}{Peng Jiang}, {and} \bibinfo{person}{Tat-Seng Chua}.} \bibinfo{year}{2021}\natexlab{}.
\newblock \showarticletitle{Seamlessly unifying attributes and items: Conversational recommendation for cold-start users}.
\newblock \bibinfo{journal}{\emph{ACM Transactions on Information Systems (TOIS)}} \bibinfo{volume}{39}, \bibinfo{number}{4} (\bibinfo{year}{2021}), \bibinfo{pages}{1--29}.
\newblock


\bibitem[Li et~al\mbox{.}(2025a)]%
        {li-2025-demystifying}
\bibfield{author}{\bibinfo{person}{Zhuohua Li}, \bibinfo{person}{Maoli Liu}, \bibinfo{person}{Xiangxiang Dai}, {and} \bibinfo{person}{John~C.S. Lui}.} \bibinfo{year}{2025}\natexlab{a}.
\newblock \showarticletitle{Demystifying Online Clustering of Bandits: Enhanced Exploration Under Stochastic and Smoothed Adversarial Contexts}. In \bibinfo{booktitle}{\emph{The Thirteenth International Conference on Learning Representations}}.
\newblock


\bibitem[Li et~al\mbox{.}(2025b)]%
        {li-2025-towards}
\bibfield{author}{\bibinfo{person}{Zhuohua Li}, \bibinfo{person}{Maoli Liu}, \bibinfo{person}{Xiangxiang Dai}, {and} \bibinfo{person}{John~C.S. Lui}.} \bibinfo{year}{2025}\natexlab{b}.
\newblock \showarticletitle{Towards Efficient Conversational Recommendations: Expected Value of Information Meets Bandit Learning}. In \bibinfo{booktitle}{\emph{Proceedings of the ACM on Web Conference 2025}} (Sydney NSW, Australia) \emph{(\bibinfo{series}{WWW '25})}. \bibinfo{pages}{4226–4238}.
\newblock


\bibitem[Li et~al\mbox{.}(2024)]%
        {li2024fedconpe}
\bibfield{author}{\bibinfo{person}{Zhuohua Li}, \bibinfo{person}{Maoli Liu}, {and} \bibinfo{person}{John C.~S. Lui}.} \bibinfo{year}{2024}\natexlab{}.
\newblock \showarticletitle{FedConPE: Efficient Federated Conversational Bandits with Heterogeneous Clients}. In \bibinfo{booktitle}{\emph{Proceedings of the Thirty-Third International Joint Conference on Artificial Intelligence, {IJCAI-24}}}. \bibinfo{pages}{4533--4541}.
\newblock


\bibitem[Raghavan et~al\mbox{.}(2023)]%
        {raghavan2023greedy}
\bibfield{author}{\bibinfo{person}{Manish Raghavan}, \bibinfo{person}{Aleksandrs Slivkins}, \bibinfo{person}{Jennifer~Wortman Vaughan}, {and} \bibinfo{person}{Zhiwei~Steven Wu}.} \bibinfo{year}{2023}\natexlab{}.
\newblock \showarticletitle{Greedy algorithm almost dominates in smoothed contextual bandits}.
\newblock \bibinfo{journal}{\emph{SIAM J. Comput.}} \bibinfo{volume}{52}, \bibinfo{number}{2} (\bibinfo{year}{2023}), \bibinfo{pages}{487--524}.
\newblock


\bibitem[Raghavan et~al\mbox{.}(2018)]%
        {raghavan2018externalities}
\bibfield{author}{\bibinfo{person}{Manish Raghavan}, \bibinfo{person}{Aleksandrs Slivkins}, \bibinfo{person}{Jennifer~Vaughan Wortman}, {and} \bibinfo{person}{Zhiwei~Steven Wu}.} \bibinfo{year}{2018}\natexlab{}.
\newblock \showarticletitle{The externalities of exploration and how data diversity helps exploitation}. In \bibinfo{booktitle}{\emph{Conference on Learning Theory}}. PMLR, \bibinfo{pages}{1724--1738}.
\newblock


\bibitem[Sivakumar et~al\mbox{.}(2020)]%
        {sivakumar2020structured}
\bibfield{author}{\bibinfo{person}{Vidyashankar Sivakumar}, \bibinfo{person}{Steven Wu}, {and} \bibinfo{person}{Arindam Banerjee}.} \bibinfo{year}{2020}\natexlab{}.
\newblock \showarticletitle{Structured linear contextual bandits: A sharp and geometric smoothed analysis}. In \bibinfo{booktitle}{\emph{International Conference on Machine Learning}}. PMLR, \bibinfo{pages}{9026--9035}.
\newblock


\bibitem[Sun and Zhang(2018)]%
        {sun2018conversational}
\bibfield{author}{\bibinfo{person}{Yueming Sun} {and} \bibinfo{person}{Yi Zhang}.} \bibinfo{year}{2018}\natexlab{}.
\newblock \showarticletitle{Conversational recommender system}. In \bibinfo{booktitle}{\emph{The 41st international acm sigir conference on research \& development in information retrieval}}. \bibinfo{pages}{235--244}.
\newblock


\bibitem[Tropp(2011)]%
        {tropp-2011-user-friendly}
\bibfield{author}{\bibinfo{person}{Joel~A. Tropp}.} \bibinfo{year}{2011}\natexlab{}.
\newblock \showarticletitle{User-Friendly Tail Bounds for Sums of Random Matrices}.
\newblock \bibinfo{journal}{\emph{Foundations of Computational Mathematics}} \bibinfo{volume}{12}, \bibinfo{number}{4} (\bibinfo{date}{Aug.} \bibinfo{year}{2011}), \bibinfo{pages}{389–434}.
\newblock
\showISSN{1615-3383}


\bibitem[Wang et~al\mbox{.}(2023)]%
        {wang2023efficient}
\bibfield{author}{\bibinfo{person}{Zhiyong Wang}, \bibinfo{person}{Xutong Liu}, \bibinfo{person}{Shuai Li}, {and} \bibinfo{person}{John~CS Lui}.} \bibinfo{year}{2023}\natexlab{}.
\newblock \showarticletitle{Efficient explorative key-term selection strategies for conversational contextual bandits}. In \bibinfo{booktitle}{\emph{Proceedings of the AAAI Conference on Artificial Intelligence}}, Vol.~\bibinfo{volume}{37}. \bibinfo{pages}{10288--10295}.
\newblock


\bibitem[Wu et~al\mbox{.}(2021)]%
        {wu2021clustering}
\bibfield{author}{\bibinfo{person}{Junda Wu}, \bibinfo{person}{Canzhe Zhao}, \bibinfo{person}{Tong Yu}, \bibinfo{person}{Jingyang Li}, {and} \bibinfo{person}{Shuai Li}.} \bibinfo{year}{2021}\natexlab{}.
\newblock \showarticletitle{Clustering of conversational bandits for user preference learning and elicitation}. In \bibinfo{booktitle}{\emph{Proceedings of the 30th ACM International Conference on Information \& Knowledge Management}}. \bibinfo{pages}{2129--2139}.
\newblock


\bibitem[Xie et~al\mbox{.}(2021)]%
        {xie2021comparison}
\bibfield{author}{\bibinfo{person}{Zhihui Xie}, \bibinfo{person}{Tong Yu}, \bibinfo{person}{Canzhe Zhao}, {and} \bibinfo{person}{Shuai Li}.} \bibinfo{year}{2021}\natexlab{}.
\newblock \showarticletitle{Comparison-based conversational recommender system with relative bandit feedback}. In \bibinfo{booktitle}{\emph{Proceedings of the 44th International ACM SIGIR Conference on Research and Development in Information Retrieval}}. \bibinfo{pages}{1400--1409}.
\newblock


\bibitem[Yang et~al\mbox{.}(2024)]%
        {yang2024conversational}
\bibfield{author}{\bibinfo{person}{Shuhua Yang}, \bibinfo{person}{Hui Yuan}, \bibinfo{person}{Xiaoying Zhang}, \bibinfo{person}{Mengdi Wang}, \bibinfo{person}{Hong Zhang}, {and} \bibinfo{person}{Huazheng Wang}.} \bibinfo{year}{2024}\natexlab{}.
\newblock \showarticletitle{Conversational Dueling Bandits in Generalized Linear Models}.
\newblock \bibinfo{journal}{\emph{arXiv preprint arXiv:2407.18488}} (\bibinfo{year}{2024}).
\newblock


\bibitem[Zhang et~al\mbox{.}(2020)]%
        {zhang2020conversational}
\bibfield{author}{\bibinfo{person}{Xiaoying Zhang}, \bibinfo{person}{Hong Xie}, \bibinfo{person}{Hang Li}, {and} \bibinfo{person}{John CS~Lui}.} \bibinfo{year}{2020}\natexlab{}.
\newblock \showarticletitle{Conversational contextual bandit: Algorithm and application}. In \bibinfo{booktitle}{\emph{Proceedings of the web conference 2020}}. \bibinfo{pages}{662--672}.
\newblock


\bibitem[Zhang et~al\mbox{.}(2018)]%
        {zhang2018towards}
\bibfield{author}{\bibinfo{person}{Yongfeng Zhang}, \bibinfo{person}{Xu Chen}, \bibinfo{person}{Qingyao Ai}, \bibinfo{person}{Liu Yang}, {and} \bibinfo{person}{W~Bruce Croft}.} \bibinfo{year}{2018}\natexlab{}.
\newblock \showarticletitle{Towards conversational search and recommendation: System ask, user respond}. In \bibinfo{booktitle}{\emph{Proceedings of the 27th acm international conference on information and knowledge management}}. \bibinfo{pages}{177--186}.
\newblock


\bibitem[Zhao et~al\mbox{.}(2022)]%
        {zhao2022knowledge}
\bibfield{author}{\bibinfo{person}{Canzhe Zhao}, \bibinfo{person}{Tong Yu}, \bibinfo{person}{Zhihui Xie}, {and} \bibinfo{person}{Shuai Li}.} \bibinfo{year}{2022}\natexlab{}.
\newblock \showarticletitle{Knowledge-aware conversational preference elicitation with bandit feedback}. In \bibinfo{booktitle}{\emph{Proceedings of the ACM Web Conference 2022}}. \bibinfo{pages}{483--492}.
\newblock


\bibitem[Zuo et~al\mbox{.}(2022)]%
        {zuo2022hierarchical}
\bibfield{author}{\bibinfo{person}{Jinhang Zuo}, \bibinfo{person}{Songwen Hu}, \bibinfo{person}{Tong Yu}, \bibinfo{person}{Shuai Li}, \bibinfo{person}{Handong Zhao}, {and} \bibinfo{person}{Carlee Joe-Wong}.} \bibinfo{year}{2022}\natexlab{}.
\newblock \showarticletitle{Hierarchical conversational preference elicitation with bandit feedback}. In \bibinfo{booktitle}{\emph{Proceedings of the 31st ACM International Conference on Information \& Knowledge Management}}. \bibinfo{pages}{2827--2836}.
\newblock


\end{thebibliography}

\appendix
\allowdisplaybreaks %

\section{Appendix}

\subsection{\conlinucbskme Algorithm}
\label{app:conlincubskme}
In this section, we present the details of the \conlinucbskme algorithm (Algorithm~\ref{alg:conlinucbskme}), which integrates the smoothed key term contexts and the adaptive conversation technique.
\begin{algorithm}[htb]
    \DontPrintSemicolon
    \SetKwComment{Comment}{$\triangleright$\ }{}
    \SetKwInput{KwInit}{Initialization}
    \KwIn{\(\mathcal{A}\), \(\mathcal{K}\), \(b(t)\), \(\lambda\), \(\{\alpha_t\}_{t>0}\)}
    \KwInit{\(\bm{M}_1 = \lambda \bm{I}_d\), \(\bm{b}_1 = \bm{0}_d\)}
    \For{\(t = 1, \dots, T\) }{
    
            \If{\UncertaintyChecking{\(t\)}}{ \label{line-skme:check-uncertainty}
            Diagonalize \(\bm{M}_t = \sum_{i=1}^d \lambda_{\bm{v}_i} \bm{v}_i{\bm{v}_i}^{\top}\)\; \label{line-skme:diagonalize}
                \ForEach{\(\lambda_{\bm{v}_i} < \alpha t\)}{ \label{line-skme:check-lambda}
                \(n_{\bm{v}_i} = \lceil(\alpha t -\lambda_{\bm{v}_i})/c_0^2\rceil\)\; \label{line-skme:compute-n}
                \For{\(n_{\bm{v}_i} > 0\) }{
                Smooth the key term contexts to get \(\{\doubletilde{\bm{x}}_{k}\}_{k\in \mathcal{K}}\)\;
                \label{line-skme:smooth-key-term}
                \(k = \argmax_{k\in \mathcal{K}}|\doubletilde{\bm{x}}_k^{\top}\bm{v}_i|\)\;  \label{line-skme:select-k}
                Receive the key term-level feedback \(\tilde{r}_{k,t}\)\; \label{line-skme:receive-key-term-feedback}
                \(\bm{M}_t = \bm{M}_{t} + \doubletilde{\bm{x}}_{k,t}\doubletilde{\bm{x}}_{k,t}^{\top}\)\; \label{line-skme:update-M-key-term}
                \(\bm{b}_t = \bm{b}_{t} + \tilde{r}_{k,t}\doubletilde{\bm{x}}_{k,t}\)\; \label{line-skme:update-b-key-term}
                \(n_{\bm{v}_i} = n_{\bm{v}_i} -1\)\; \label{line_skme:update-n}
                }
                }
            }
        \(\bm{\theta}_t = \bm{M}_t^{-1}\bm{b}_t\)\; \label{line-skme:update-theta}
        Select \(a_t = \argmax_{a \in \mathcal{A}_t} \bm{x}_a^{\top} \bm{\theta}_t  + \alpha_t \|\bm{x}_a\|_{M_t^{-1}}\)\; \label{line-skme:select-arm}
        Ask the user's preference for arm \(a_t\)\; \label{line-skme:ask-arm}
        Observe the reward \(r_{a_t,t}\)\; \label{line-skme:observe-reward}
        \(\bm{M}_{t+1} = \bm{M}_t + \bm{x}_{a_t}\bm{x}_{a_t}^{\top}\)\; \label{line-skme:update-M-arm}
        \(\bm{b}_{t+1} = \bm{b}_t + r_{a_t,t}\bm{x}_{a_t}\)\; \label{line-skme:update-b-arm}
    }
    \caption{\conlinucbskme} \label{alg:conlinucbskme}
  \end{algorithm}

\subsection{Supplementary Experiment Results}
\label{app:extra-experiments}
We compare the estimation precision for the ``Fixed Interval'' and ``Exponential Phase'' uncertainty checking functions of \conlinucbme in \Cref{fig:theta-diff-interval,fig:theta-diff-exponential}. In the former, \texttt{UncertaintyChecking} is triggered every 100 rounds while in the latter it is triggered when \(t\) is a power of 2. Combined with the results presented in the evaluation results section, the experiments demonstrate that our algorithms consistently outperform the baselines.

\begin{figure}[htb]
    \centering
    \includegraphics[width=\linewidth]{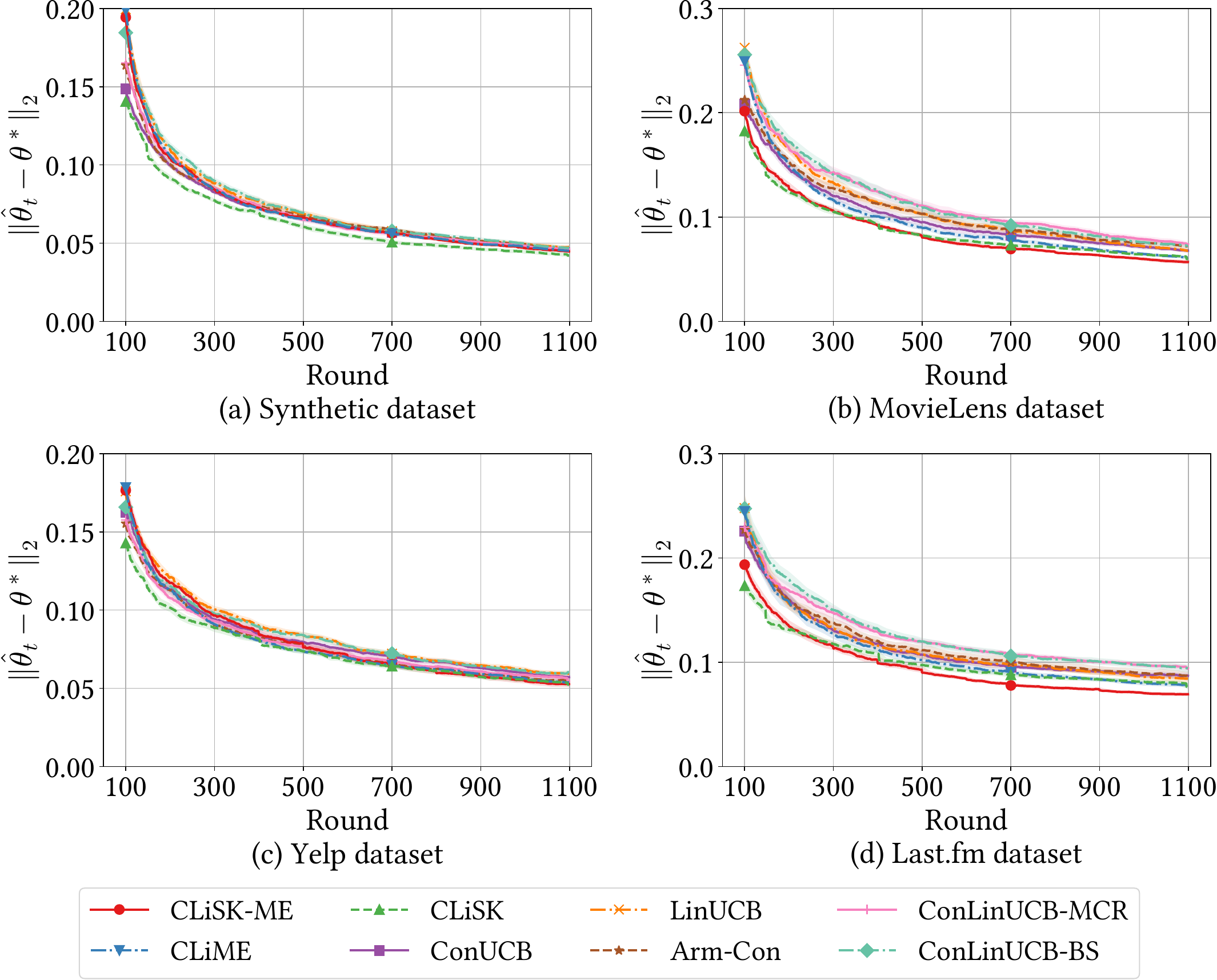}
    \caption{Comparison of estimation precision  where \conlinucbme and \conlinucbskme use the ``Fixed Interval'' function.}
    \label{fig:theta-diff-interval}
\end{figure}

\begin{figure}[htb]
    \centering
    \includegraphics[width=\linewidth]{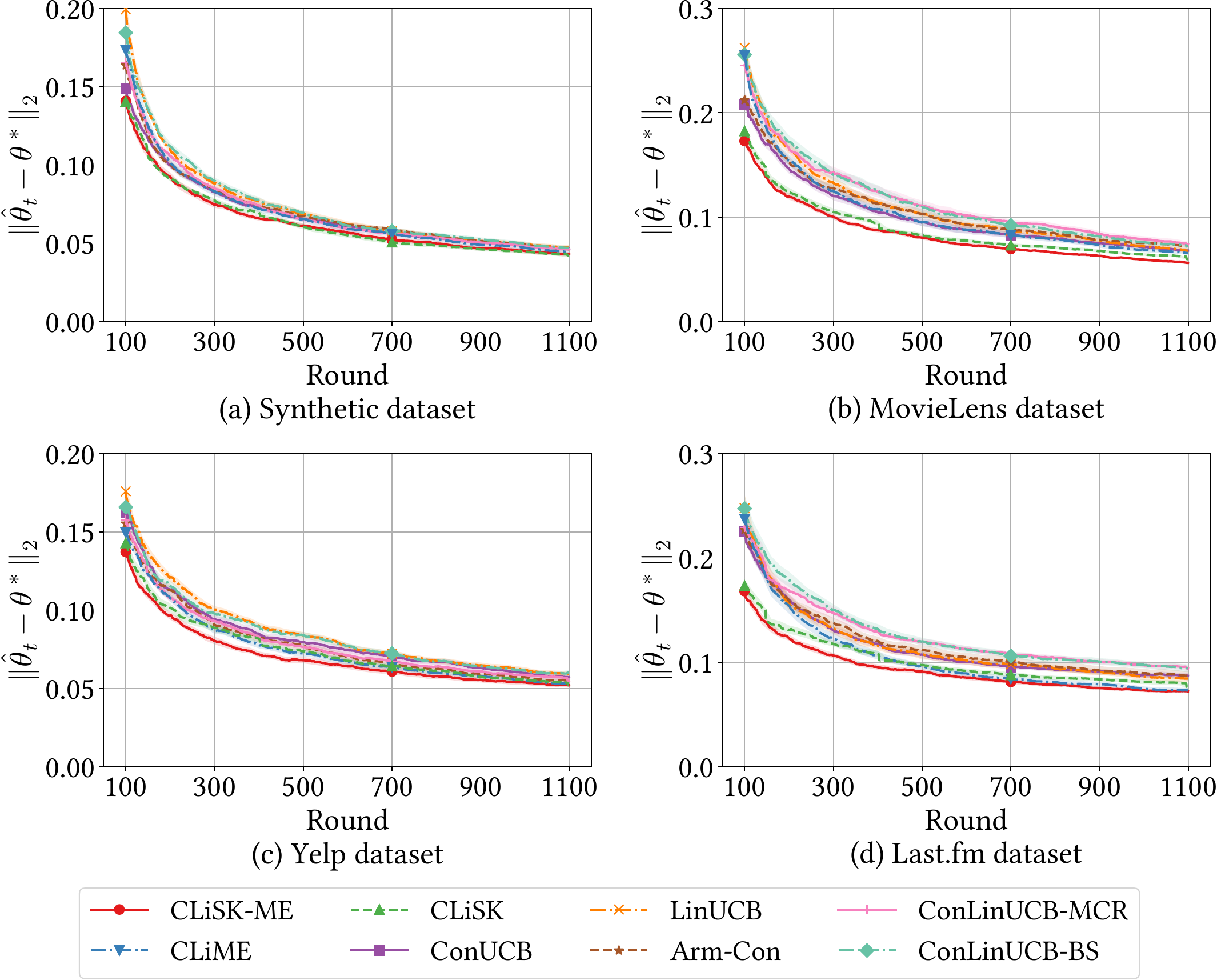}
    \caption{Comparison of estimation precision  where \conlinucbme and \conlinucbskme use the ``Exponential Phase'' function.}
    \label{fig:theta-diff-exponential}
\end{figure}

\subsection{Proof of~\Cref{lemma:smoothed-diff-estimate-true-reward}}
\restatelemmasmootheddiffestimatetruereward*

\begin{proof}
For any arm \(a\in \mathcal{A}\), from the definition of \(\bm{M}_t\) and \(\bm{b}_t\), and \(\bm{\theta}_t = \bm{M}_t^{-1} \bm{b}_t\), we have 
 \begin{equation*}
    \begin{aligned}
        & \bm{x}_a^{\top}\ab(\bm{\theta}_t  - \bm{\theta}^*) = \bm{x}_a^{\top}\ab(\bm{M}_t^{-1} \bm{b}_t - \bm{\theta}^* )\\
        =& \bm{x}_a^{\top}\ab(\bm{M}_t^{-1} \ab(\sum_{s=1}^{t-1} r_{a_s,s} \bm{x}_{a_s} + \sum_{s=1}^{t}\sum_{k \in \mathcal{K}_{s}}\tilde{r}_{k,s} \doubletilde{\bm{x}}_{k}) - \bm{\theta}^* ) \\
        =& \bm{x}_a^{\top}\ab(\bm{M}_t^{-1} \ab(\sum_{s=1}^{t-1} \bm{x}_{a_s} \ab(\bm{x}_{a_s}^{\top} \bm{\theta}^* + \eta_s) + \sum_{s=1}^{t}\sum_{k \in \mathcal{K}_{s}}\doubletilde{\bm{x}}_{k} \ab(\doubletilde{\bm{x}}_{k}^{\top} \bm{\theta}^* + \tilde{\eta}_s)) - \bm{\theta}^* ) \\
        =& \bm{x}_a^{\top}\ab(\bm{M}_t^{-1} \ab(\sum_{s=1}^{t-1} \bm{x}_{a_s}\bm{x}_{a_s}^{\top}  + \sum_{s=1}^{t}\sum_{k \in \mathcal{K}_{s}}\doubletilde{\bm{x}}_{k}\doubletilde{\bm{x}}_{k}^{\top} + \lambda \bm{I}_d - \lambda \bm{I}_d)\bm{\theta}^* - \bm{\theta}^* )\\
        &+ \bm{x}_a^{\top}\ab(\bm{M}_t^{-1} \ab(\sum_{s=1}^{t-1} \bm{x}_{a_s} \eta_s + \sum_{s=1}^{t}\sum_{k \in \mathcal{K}_{s}}\doubletilde{\bm{x}}_{k} \tilde{\eta}_s)) \\
        =& \lambda\bm{x}_a^{\top}\bm{M}_t^{-1} \bm{\theta}^* + \bm{x}_a^{\top}\ab(\bm{M}_t^{-1} \ab(\sum_{s=1}^{t-1} \bm{x}_{a_s} \eta_s + \sum_{s=1}^{t}\sum_{k \in \mathcal{K}_{s}}\doubletilde{\bm{x}}_{k} \tilde{\eta}_s)). 
    \end{aligned}
\end{equation*}

By the Cauchy-Schwarz inequality, we have
\begin{align*}
    \ab|\bm{x}_a^{\top}\ab(\bm{\theta}_t  - \bm{\theta}^*)|
    \leq&\ \lambda \ab\|\bm{x}_a\|_{\bm{M}_t^{-1}} \|\bm{\theta}^*\|_{\bm{M}_t^{-1}}\\
    &+ \ab\|\bm{x}_a\|_{\bm{M}_t^{-1}} \ab\| \sum_{s=1}^{t-1} \bm{x}_{a_s} \eta_s + \sum_{s=1}^{t}\sum_{k \in \mathcal{K}_{s}}\doubletilde{\bm{x}}_{k} \tilde{\eta}_s\|_{\bm{M}_t^{-1}}. \numberthis\label{eq:diff-estimate-true-reward-1}
\end{align*}

For the first term, by the fact that \(\lambda_{\min}(\bm{M}_t) \geq \lambda\), and by the property of the Rayleigh quotient, we have
\begin{align*}
    \frac{\ab\|\bm{\theta}^*\|_{\bm{M}_t^{-1}}^{2}}{\ab\|\bm{\theta}^*\|_2^2} = \frac{\bm{\theta}^{*\top}\bm{M}_t^{-1}\bm{\theta}^*}{\bm{\theta}^{*\top}\bm{\theta}^*} \leq \lambda_{\max}(\bm{M}_t^{-1}) \leq \frac{1}{\lambda_{\min}(\bm{M}_t)} \leq \frac{1}{\lambda}.
\end{align*}

Therefore, we have
\begin{align*}
    \lambda \ab\|\bm{x}_a\|_{\bm{M}_t^{-1}} \ab\|\bm{\theta}^*\|_{\bm{M}_t^{-1}} &\leq \lambda \ab\|\bm{x}_a\|_{\bm{M}_t^{-1}} \ab\|\bm{\theta}^*\|_2\\
    &\leq \lambda \ab\|\bm{x}_a\|_{\bm{M}_t^{-1}} \sqrt{\frac{1}{\lambda}} = \sqrt{\lambda} \ab\|\bm{x}_a\|_{\bm{M}_t^{-1}}. \numberthis\label{eq:bound-first-term}
\end{align*}

For the second term, from Theorem 1 in~\citet{abbasi2011improved}, for any \(\delta \in (0,1)\), with probability at least \(1-\delta\), for all \(t\geq 1\), we have
\begin{align*}
    \ab\|\sum_{s=1}^{t-1} \bm{x}_{a_s} \eta_s + \sum_{s=1}^{t}\sum_{k \in \mathcal{K}_{s}}\doubletilde{\bm{x}}_{k} \tilde{\eta}_s\|_{\bm{M}_t^{-1}} \leq 
    \sqrt{2\log\ab(\frac{\det\ab(\bm{M}_t)^{\frac{1}{2}} \det\ab(\lambda {\bm{I}_d})^{-\frac{1}{2}}}{\delta})}.  \numberthis\label{eq:bound-second-term-1}
\end{align*}

By adopting the determinant-trace inequality (\Cref{lemma:det-trace-inequality}), we have
\begin{align*}
    \Tr\ab(\bm{M}_t) &\leq d\lambda + \sum_{s=1}^{t-1} \Tr(\bm{x}_{a_s}\bm{x}_{a_s}^{\top}) + \sum_{s=1}^{t}\sum_{k \in \mathcal{K}_{s}}\Tr(\doubletilde{\bm{x}}_{k}\doubletilde{\bm{x}}_{k}^{\top})\\
    &\leq d\lambda + t + \ab(1 + \sqrt{d}R) bt,
\end{align*}
which is obtained because there are at most \(bt\) key terms selected by round \(t\) and \(\|\doubletilde{\bm{x}}_k \| \leq  1 + \sqrt{d}R\) for all \(k\in \mathcal{K}\), and therefore,
\begin{align*}
    \det(\bm{M}_t) \leq \ab(\frac{\Tr(\bm{M}_t)}{d})^d \leq \ab(\frac{d\lambda + t + \ab(1 + \sqrt{d}R) bt}{d})^d, \numberthis\label{eq:det-bound}
\end{align*}
where \(\Tr(\bm{X})\) denotes the trace of matrix \(\bm{X}\).

By substituting~\Cref{eq:det-bound} into~\Cref{eq:bound-second-term-1}, we have
\begin{align*}
    &\ab\|\sum_{s=1}^{t-1} \bm{x}_{a_s} \eta_s + \sum_{s=1}^{t}\sum_{k \in \mathcal{K}_{s}}\doubletilde{\bm{x}}_{k} \tilde{\eta}_s\|_{\bm{M}_t^{-1}} \leq \sqrt{2\log\ab(\frac{1}{\delta})+\log\ab(\frac{\det\ab(\bm{M}_t)}{\det(\lambda {\bm{I}_d})})}\\
    \leq\ & \sqrt{2\log\ab(\frac{1}{\delta})+d\log\ab(1 + \frac{t + \ab(1 + \sqrt{d}R)bt}{\lambda d})}. \numberthis\label{eq:bound-second-term-2}
\end{align*}

Plugging~\Cref{eq:bound-first-term} and~\Cref{eq:bound-second-term-2} into~\Cref{eq:diff-estimate-true-reward-1}, we have
\begin{align*}
    &\ab|\bm{x}_a^{\top}\ab(\bm{\theta}_t  - \bm{\theta}^*)| \\
    \leq\ & \ab\|\bm{x}_a\|_{\bm{M}_t^{-1}} \ab(\sqrt{\lambda} + \sqrt{2\log\ab(\frac{1}{\delta})+\log\ab(1 + \frac{t + (1 + \sqrt{d}R)bt}{\lambda d})}). \numberthis\label{eq:diff-estimate-true-reward-2}
\end{align*}
which completes the proof.
\end{proof}

\subsection{Proof of~\Cref{lemma:smoothed-eigenvalue-lower-bound}}
\restatelemmasmoothedeigenvaluelowerbound*

\begin{proof}
  Fix a time \(t\), and denote the key term selected at this time as \(k_t\). Although multiple key terms may be selected at each time step, they all satisfy the properties of this lemma. Therefore, we do not distinguish between them and use only a single subscript \(t\). Let \(\vec{Q}\) be a unitary matrix that rotates the estimated preference vector \(\vec{\theta}_{t}\) to align it with the \(x\)-axis, maintaining its length but zeroing out all components except the first component, i.e., \(\vec{Q}\vec{\theta}_{t} = (\|\vec{\theta}_{t}\|, 0, 0, \dots, 0)\).
  Note that such \(\vec{Q}\) always exists because it just rotates the space.
  According to \conlinucbsk's key term selection strategy \(\doubletilde{\vec{x}}_{k_t} = \argmax_{k \in \mathcal{K}}\vec{\theta}_{t}^{\top} \doubletilde{\vec{x}}_{k}\), we have
  \begin{align*}
    &\lambda_{\min}\ab(\E\ab[\doubletilde{\vec{x}}_{k_t}\doubletilde{\vec{x}}_{k_t}^{\top}])
    = \lambda_{\min}\ab(\E\ab[\vec{x}\vec{x}^{\top} \Bigm| \vec{x} = \argmax_{k \in \mathcal{K}} \vec{\theta}_{t}^{\top} \doubletilde{\vec{x}}_{k}])\\
    =& \min_{\vec{w}:\|\vec{w}\|=1} \vec{w}^{\top} \E\ab[\vec{x}\vec{x}^{\top} \Bigm| \vec{x} = \argmax_{k \in \mathcal{K}} \vec{\theta}_{t}^{\top} \doubletilde{\vec{x}}_{k}] \vec{w}\\
    =& \min_{\vec{w}:\|\vec{w}\|=1} \E\ab[(\vec{w}^{\top}\vec{x})^2 \Bigm| \vec{x} = \argmax_{k \in \mathcal{K}} \vec{\theta}_{t}^{\top} \doubletilde{\vec{x}}_{k}]\\
    \geq& \min_{\vec{w}:\|\vec{w}\|=1} \Var\ab[\vec{w}^{\top}\vec{x} \Bigm| \vec{x} = \argmax_{k \in \mathcal{K}} \vec{\theta}_{t}^{\top} \doubletilde{\vec{x}}_{k}]\\
    =& \min_{\vec{w}:\|\vec{w}\|=1} \Var\ab[(\vec{Q}\vec{w})^{\top}\vec{Q}\vec{x} \Bigm| \vec{x} = \argmax_{k \in \mathcal{K}} (\vec{Q}\vec{\theta}_{t})^{\top} \vec{Q}\doubletilde{\vec{x}}_{k}]\numberthis\label{eq:unitary-property}\\
    =& \min_{\vec{w}:\|\vec{w}\|=1} \Var\ab[\vec{w}^{\top}\vec{Q}\vec{x} \Bigm| \vec{x} = \argmax_{k \in \mathcal{K}} \ab\|\vec{\theta}_{t}\| (\vec{Q}\doubletilde{\vec{x}}_{k})_{1}]\numberthis\label{eq:apply-Q}\\
    =& \min_{\vec{w}:\|\vec{w}\|=1} \Var\ab[\vec{w}^{\top}\vec{Q}\vec{\varepsilon} \Bigm| \vec{\varepsilon} = \argmax_{\vec{\varepsilon}_{k}: k \in \mathcal{K}} (\vec{Q}\tilde{\vec{x}}_{k} + \vec{Q}\vec{\varepsilon}_{k})_{1}]\numberthis\label{eq:decompose-arm-to-mu-and-epsilon}\\
    =& \min_{\vec{w}:\|\vec{w}\|=1} \Var\ab[\vec{w}^{\top}\vec{\varepsilon} \Bigm| \vec{\varepsilon} = \argmax_{\vec{\varepsilon}_{k}: k \in \mathcal{K}} (\vec{Q}\tilde{\vec{x}}_{k} + \vec{\varepsilon}_{k})_{1}]\numberthis\label{eq:remove-Q-by-rotation-invariance}\\
  \end{align*}
  where \Cref{eq:unitary-property} uses the property of unitary matrices: \(\vec{Q}^{\top}\vec{Q}=\vec{I}_d\).
  \Cref{eq:apply-Q} applies matrix \(\vec{Q}\) so only the first component is non-zero and we use the fact that minimizing over \(\vec{Q}\vec{w}\) is equivalent to over \(\vec{w}\).
  \Cref{eq:decompose-arm-to-mu-and-epsilon} follows because each smoothed key term \(\doubletilde{\vec{x}}_k = \tilde{\vec{x}}_k + \vec{\varepsilon}_k\) by definition, and adding a constant a to a random variable does not change its variance.
  \Cref{eq:remove-Q-by-rotation-invariance} is due to the rotation invariance of symmetrically truncated Gaussian distributions.

  Since \(\vec{\varepsilon}_{k} \sim \mathcal{N}(0, \rho^2 \cdot \vec{I}_d)\) conditioned on \(|(\vec{\varepsilon}_k)_j| \leq R, \forall j \in [d]\), by the property of (truncated) multivariate Gaussian distributions, the components of \(\vec{\varepsilon}_{t,i}\) can be equivalently regarded as \(d\) independent samples from a (truncated) univariate Gaussian distribution, i.e., \((\vec{\varepsilon}_k)_j \sim \mathcal{N}(0,\rho^2)\) conditioned on \(|(\vec{\varepsilon}_{k})_j| \leq R, \forall j \in [d]\).
  Therefore, we have
  \begin{align*}
    \Var\left[\vec{w}^{\top}\vec{\varepsilon}\right]
    = \Var\left[\sum_{i=1}^{d} \vec{w}_i \vec{\varepsilon}_i\right]
    = \sum_{i=1}^{d} \vec{w}_i^2\Var\left[ \vec{\varepsilon}_i\right],
  \end{align*}
  where the exchanging of variance and summation is due to the independence of \(\vec{\varepsilon}_i\). Therefore, we can write
  \begin{align*}
    & \min_{\vec{w}:\|\vec{w}\|=1} \Var\ab[\vec{w}^{\top}\vec{\varepsilon} \Bigm| \vec{\varepsilon} = \argmax_{\vec{\varepsilon}_{k}: k \in \mathcal{K}}\ab((\vec{\varepsilon}_{k})_{1} + (\vec{Q}\tilde{\vec{x}}_{k})_{1})]\\
    =&\min_{\vec{w}:\|\vec{w}\|=1} \sum_{j=1}^{d} \vec{w}_j^2\Var\ab[(\vec{\varepsilon})_j \Bigm| \vec{\varepsilon} = \argmax_{\vec{\varepsilon}_{k}: k \in \mathcal{K}}\ab((\vec{\varepsilon}_{k})_{1} + (\vec{Q}\tilde{\vec{x}}_{k})_{1})]\\
    =&\min_{\vec{w}:\|\vec{w}\|=1} \left\{\vec{w}_1^2\Var\ab[(\vec{\varepsilon})_1 \Bigm| \vec{\varepsilon} = \argmax_{\vec{\varepsilon}_{k}: k \in \mathcal{K}}\ab((\vec{\varepsilon}_{k})_{1} + (\vec{Q}\tilde{\vec{x}}_{k})_{1})]\right.\\
    &\left.+ \sum_{j=2}^{d} \vec{w}_j^2 \Var\ab[(\vec{\varepsilon})_j \Bigm| \vec{\varepsilon} = \argmax_{\vec{\varepsilon}_{k}: k \in \mathcal{K}}\ab((\vec{\varepsilon}_{k})_{1} + (\vec{Q}\tilde{\vec{x}}_{k})_{1})]\right\}\\
    =&\min_{\vec{w}:\|\vec{w}\|=1} \left\{\vec{w}_1^2\Var\ab[(\vec{\varepsilon})_1 \Bigm| \vec{\varepsilon} = \argmax_{\vec{\varepsilon}_{k}: k \in \mathcal{K}}\ab((\vec{\varepsilon}_{k})_{1} + (\vec{Q}\tilde{\vec{x}}_{k})_{1})]\right.\\
    &\left.+ \sum_{j=2}^{d} \vec{w}_j^2 \Var\ab[(\vec{\varepsilon})_j]\right\}\\
    =&\min_{\vec{w}:\|\vec{w}\|=1} \left\{\vec{w}_1^2\Var\ab[(\vec{\varepsilon})_1 \Bigm| \vec{\varepsilon} = \argmax_{\vec{\varepsilon}_{k}: k \in \mathcal{K}}\ab((\vec{\varepsilon}_{k})_{1} + (\vec{Q}\tilde{\vec{x}}_{k})_{1})] + (1-\vec{w}_1^2) \rho^2\right\}\\
    =&\min \left\{\Var\ab[(\vec{\varepsilon})_1 \Bigm| \vec{\varepsilon} = \argmax_{\vec{\varepsilon}_{k}: k \in \mathcal{K}}\ab((\vec{\varepsilon}_{k})_{1} + (\vec{Q}\tilde{\vec{x}}_{k})_{1})], \rho^2\right\} \geq c_1 \frac{\rho^2}{\log |\mathcal{K}|},
  \end{align*}
  where in the last inequality, we use Lemma 15 and Lemma 14 in \citet{sivakumar2020structured} and get
  \begin{align*}
    &\Var\ab[(\vec{\varepsilon})_1 \Bigm| \vec{\varepsilon} = \argmax_{\vec{\varepsilon}_{k}: k \in \mathcal{K}}\ab((\vec{\varepsilon}_{k})_{1} + (\vec{Q}\tilde{\vec{x}}_{k})_{1})]\\
    \geq& \Var\ab[(\vec{\varepsilon})_1 \Bigm| \vec{\varepsilon} = \argmax_{\vec{\varepsilon}_{k}: k \in \mathcal{K}} (\vec{\varepsilon}_{k})_{1}] \geq c_1\frac{\rho^2}{\log |\mathcal{K}|}.\tag*{\qedhere}
  \end{align*}
\end{proof}

\subsection{Proof of~\Cref{lemma:smoothed-linear-t}}
\restatelemmasmoothedlineart*
\begin{proof}
  To apply the matrix Chernoff bound (\Cref{lemma:matrix-chernoff}), we first verify the required two conditions for the self-adjoint matrices \(\doubletilde{\vec{x}}_{k}\doubletilde{\vec{x}}_{k}^{\top}\) for any \(k \in \mathcal{K}_s\) and \(s \in [t]\).
  First, \(\doubletilde{\vec{x}}_{k}\doubletilde{\vec{x}}_{k}^{\top}\) is obviously positive semi-definite.
  Second, by the Courant-Fischer theorem,
  \begin{align*}
    \lambda_{\max}(\doubletilde{\vec{x}}_{k}\doubletilde{\vec{x}}_{k}^{\top}) &= \max_{\vec{w}: \|\vec{w}\|=1} \vec{w}^{\top}\doubletilde{\vec{x}}_{k}\doubletilde{\vec{x}}_{k}^{\top} \vec{w} = \max_{\vec{w}: \|\vec{w}\|=1} (\vec{w}^{\top}\doubletilde{\vec{x}}_{k})^2\\
    &\leq \max_{\vec{w}: \|\vec{w}\|=1}\|\vec{w}\|^2\|\doubletilde{\vec{x}}_{k}\|^2 \leq (1+\sqrt{d}R)^2.
  \end{align*}
  Next, by \Cref{lemma:smoothed-eigenvalue-lower-bound} and the super-additivity of the minimum eigenvalue (due to Weyl's inequality), we have
  \begin{align*}
    \mu_{\min} = \lambda_{\min}\ab(\sum_{s=1}^{t}\sum_{k \in \mathcal{K}_{s}} \E\ab[\doubletilde{\bm{x}}_k \doubletilde{\bm{x}}_k^{\top}]) \geq \sum_{s=1}^{t}\sum_{k \in \mathcal{K}_{s}} \lambda_{\min}\ab(\E\ab[\doubletilde{\bm{x}}_k \doubletilde{\bm{x}}_k^{\top}]) \geq \lambda_{\mathcal{K}}b t ,
  \end{align*}
  where the last inequality is because there are at most \(bt\) key terms selected by round \(t\), so the summation has at most \(bt\) terms.
  So by \Cref{lemma:matrix-chernoff}, we have for any \(\varepsilon \in (0,1)\),
  \begin{align*}
    &\Pr\left[\lambda_{\min}\ab(\sum_{s=1}^{t}\sum_{k \in \mathcal{K}_{s}} \doubletilde{\bm{x}}_k \doubletilde{\bm{x}}_k^{\top}) \leq (1-\varepsilon) \lambda_{\mathcal{K}} bt \right]\\
    \leq& \Pr\left[\lambda_{\min}\ab(\sum_{s=1}^{t}\sum_{k \in \mathcal{K}_{s}} \doubletilde{\bm{x}}_k \doubletilde{\bm{x}}_k^{\top}) \leq (1-\varepsilon)\mu_{\min}\right]\\
    \leq& d \ab[\frac{e^{-\varepsilon}}{(1-\varepsilon)^{1-\varepsilon}}]^{\mu_{\min}/(1+\sqrt{d}R)^2}\\
    \leq& d \ab[\frac{e^{-\varepsilon}}{(1-\varepsilon)^{1-\varepsilon}}]^{\frac{\lambda_{\mathcal{K}}bt}{(1+\sqrt{d}R)^2}},
  \end{align*}
  where the last inequality is because \(e^{-x}\) is decreasing.
  Choosing \(\varepsilon=\frac{1}{2}\), we get
  \begin{align*}
    \Pr\left[\lambda_{\min}\ab(\sum_{s=1}^{t}\sum_{k \in \mathcal{K}_{s}} \doubletilde{\bm{x}}_k \doubletilde{\bm{x}}_k^{\top}) \leq \frac{\lambda_{\mathcal{K}} bt}{2}\right] \leq d \ab(\sqrt{2}e^{-\frac{1}{2}})^{\frac{\lambda_{\mathcal{K}}bt}{(1+\sqrt{d}R)^2}}.
  \end{align*}
  Letting the RHS be \(\delta\), we get \(t = \frac{2(1+\sqrt{d}R)^2\log(\frac{d}{\delta})}{\lambda_{\mathcal{K}}b(1-\log(2))} \leq \frac{8(1+\sqrt{d}R)^2}{\lambda_{\mathcal{K}}b}\log\ab(\frac{d}{\delta})\).
  Therefore, \(\lambda_{\min}\ab(\sum_{s=1}^{t}\sum_{k \in \mathcal{K}_{s}} \doubletilde{\bm{x}}_k \doubletilde{\bm{x}}_k^{\top}) \geq \frac{\lambda_{\mathcal{K}}bt}{2}\) holds with probability at least \(1-\delta\) when \(t \geq \frac{8(1+\sqrt{d}R)^2}{\lambda_{\mathcal{K}}b}\log\ab(\frac{d}{\delta})\).
\end{proof}

\subsection{Proof of~\Cref{lemma:smoothed-bounded-x-norm}}
\restatelemmasmoothedboundedxnorm*
\begin{proof}
\begin{align*}
    \ab\|\bm{x}_a\|_{\bm{M}_t^{-1}} = \sqrt{\bm{x}_a^{\top}\bm{M}_t^{-1}\bm{x}_a} \leq \sqrt{\lambda_{\max}(\bm{M}_t^{-1}) \bm{x}_a^{\top}\bm{x}_a} = \sqrt{\frac{1}{\lambda_{\min}(\bm{M}_t)}}, \numberthis\label{eq:bound-x-norm}
\end{align*}
where the first inequality is due to the property of the Rayleigh quotient, and the second inequality is due to the fact that \(\bm{x}_a^{\top}\bm{x}_a = 1\).

By the definition of \(\bm{M}_t\), we have
\begin{align*}
    \lambda_{\min}(\bm{M}_t) & = \lambda_{\min}\ab(\sum_{s=1}^{t-1} \bm{x}_{a_s}\bm{x}_{a_s}^{\top}  + \sum_{s=1}^{t}\sum_{k \in \mathcal{K}_{s}}\doubletilde{\bm{x}}_{k}\doubletilde{\bm{x}}_{k}^{\top} + \lambda \bm{I}_d\ab) \\
    &\geq  \lambda_{\min}\ab(\sum_{s=1}^{t}\sum_{k \in \mathcal{K}_{s}}\doubletilde{\bm{x}}_{k}\doubletilde{\bm{x}}_{k}^{\top}) \\
    & \geq \frac{\lambda_{\mathcal{K}}bt}{2}, \numberthis\label{eq:lower-bound-Mt}
\end{align*}
where the first inequality follows the property of Loewner order that if \(\bm{A} \succeq \bm{B}\) then \(\lambda_{\min}(\bm{A}) \geq \lambda_{\min}(\bm{B})\), and the last inequality follows from \Cref{lemma:smoothed-linear-t} conditioned on \(t \geq T_0\).

Therefore, by plugging~\Cref{eq:lower-bound-Mt} into~\Cref{eq:bound-x-norm}, we have
\(\ab\|\bm{x}_a\|_{\bm{M}_t^{-1}} \leq \sqrt{\frac{2}{\lambda_{\mathcal{K}}bt}}\).
\end{proof}

\subsection{Proof of~\Cref{lemma:me-diff-estimate-true-reward}}
\label{app:lemma:me-diff-estimate-true-reward}
\restatelemmamediffestimatetruereward*

\begin{proof}
The proof of \Cref{lemma:me-diff-estimate-true-reward} is similar to that of \Cref{lemma:smoothed-diff-estimate-true-reward}. The only difference is the trace and determinant of the matrix \(\bm{M}_t\).

We first show that by round \(t\), at most \(\frac{\alpha dt}{c_0^2}\) key terms have been selected since the beginning of the algorithm for all three uncertainty checking functions.

Consider the case where \conlinucbme uses the ``Continuous Checking''  function, i.e., the agent checks the eigenvalues of the matrix \(\bm{M}_t\) at each round.
We first denote the covariance matrix before selecting key terms at round \(t\) by \(\bm{M}_{t}^{\prime}\), i.e., \(\bm{M}_{t} = \bm{M}_{t}^{\prime} + \sum_{k\in \mathcal{K}_t} \bm{x}_k\bm{x}_k^{\top}\).
For \(\bm{M}_{t}^{\prime}\), denote its eigenvectors by \(\{\bm{v}_i\}_{i=1}^d\) and corresponding eigenvalues by \(\{\lambda_{\bm{v}_i}\}_{i=1}^d\).
If some key term \(k\) is selected at round \(t\), then there must exist an eigenvector \(\bm{v}_i\) such that \(\lambda_{\bm{v}_i} < \alpha t\), and the corresponding key term context \(\tilde{\bm{x}}_k\) is close to \(\bm{v}_i\), i.e., \(\tilde{\bm{x}}_k^{\top} \bm{v}_i \ge c_0\).

We can write the vector \(\tilde{\bm{x}}_k = \sum_{i=1}^d \gamma_i \bm{v}_i\) for some coefficients \(\{\gamma_i\}_{i=1}^d\). 
Then, for \(j\in[d]\), Denote \( \bm{z}_j = \sum_{i=1, i\neq j}^d \gamma_i \bm{v}_i\).
For any \(j\in[d]\), we have \(\tilde{\bm{x}}_k^{\top} \bm{v}_j = \sum_{i=1}^d \gamma_i \bm{v}_i^{\top} \bm{v}_j = \gamma_j \ge c_0\) and 
\(\tilde{\bm{x}}_k  \tilde{\bm{x}}_k ^{\top} = (\gamma_j\bm{v}_j + \bm{z}_j)(\gamma_j\bm{v}_j + \bm{z}_j)^{\top} = \gamma_j^2 \bm{v}_j\bm{v}_j^{\top} + \bm{z}_j\bm{z}_j^{\top}\), and then, we have the following:
\begin{align*}
    &\bm{M}_{t}^{\prime} + \sum_{k\in \mathcal{K}_t} \bm{x}_k\bm{x}_k^{\top} =\bm{M}_{t}^{\prime} + \sum_{i\in[d]: \lambda_{\bm{v}_i} \leq \alpha t} \left\lceil\frac{\alpha t - \lambda_{\bm{v}_i}}{c_0^2}\right\rceil( \gamma_i^2 \bm{v}_i\bm{v}_i^{\top} + \bm{z}_i\bm{z}_i^{\top}) \\
    \succeq\ & \sum_{i=1}^d \lambda_{\bm{v}_i} \bm{v}_i\bm{v}_i^{\top} + \sum_{i\in[d]: \lambda_{\bm{v}_i} \leq \alpha t} \frac{\alpha t - \lambda_{\bm{v}_i}}{c_0^2}( \gamma_i^2 \bm{v}_i\bm{v}_i^{\top} + \bm{z}_i\bm{z}_i^{\top}) \\
    \succeq\ & \sum_{i=1}^d \lambda_{\bm{v}_i} \bm{v}_i\bm{v}_i^{\top} + \sum_{i\in[d]: \lambda_{\bm{v}_i} \leq \alpha t}  \ab(\alpha t - \lambda_{\bm{v}_i})\bm{v}_i\bm{v}_i^{\top} \\
    =\ & \sum_{i\in[d]:\lambda_{\bm{v}_i} < \alpha t} (\alpha t - \lambda_{\bm{v}_i} +  \lambda_{\bm{v}_i}) \bm{v}_i\bm{v}_i^{\top} + \sum_{i\in[d]: \lambda_{\bm{v}_i} > \alpha t} \lambda_{\bm{v}_i} \bm{v}_i\bm{v}_i^{\top} \\
    \succeq\ & \sum_{i=1}^d \alpha t \bm{v}_i\bm{v}_i^{\top}. \numberthis\label{eq:lower-bound-Mt-prime-continuous}
\end{align*}

Following from \Cref{eq:lower-bound-Mt-prime-continuous}, we have
\begin{align*}
    \lambda_{\min}(\bm{M}_t) \geq \alpha t. \numberthis\label{eq:lower-bound-Mt-prime-continuous-2}
\end{align*}

Denote the number of key terms selected at round \(t\) as \(K_t\). We have \(K_t = \sum_{i=1}^{d} \frac{\alpha t - \lambda_{\bm{v}_i}}{c_0^2}\). 
Since \(\lambda_{\bm{v}_i} \geq \alpha(t-1), \forall i \in [d]\) according to \Cref{eq:lower-bound-Mt-prime-continuous-2}, we have \(K_t \leq \frac{\alpha d}{c_0^2}\), and then \(\sum_{s=1}^{t} K_s \leq \frac{\alpha dt}{c_0^2}\).

For the ``Fixed Interval Checking'' function, at each uncertainty checking point \(t_j = jP\) where \(j\in\{1,2,\dots, \lfloor \frac{T}{P} \rfloor\} \), we have \(\lambda_{\min}(\bm{M}_{t_j}) \geq \alpha t_j \). 
For the \(j\)-th checking, there are \(\sum_{i=1}^{d} \frac{\alpha t_j - \lambda_{\bm{v}_i}}{c_0^2} \leq\sum_{i=1}^{d} \frac{\alpha t_j - \alpha t_{j-1}}{c_0^2} \leq \frac{\alpha d P}{c_0^2}\) conversations to be launched.
Thus, by round \(t\), the number of total conversations satisfies \( \sum_{j=1}^{\lfloor \frac{t}{P} \rfloor} \frac{\alpha d P}{c_0^2} \leq \frac{\alpha dt}{c_0^2}\).

For the ``Exponential Phase Checking'' function, by round \(t\), there are \(\lfloor\log_2(t)\rfloor\) uncertainty checking points. 
For the \(j\)-th checking, there are \(\sum_{i=1}^{d} \frac{\alpha t_j - \lambda_{\bm{v}_i}}{c_0^2} \leq\sum_{i=1}^{d} \frac{\alpha t_j - \alpha t_{j-1}}{c_0^2} \leq \frac{\alpha d 2^{j-1}}{c_0^2}\) conversations to be launched.
By round \(t\), the number of total conversations satisfies \( \sum_{j=1}^{\lfloor \log_2(t) \rfloor} \frac{\alpha d 2^{j-1}}{c_0^2} \leq \frac{\alpha dt}{c_0^2}\).

Therefore, we have
\begin{align*}
    \Tr(\bm{M}_t) \leq d\lambda + \sum_{s=1}^{t-1} \Tr(\bm{x}_{a_s}\bm{x}_{a_s}^{\top}) + \sum_{s=1}^{t}\sum_{k \in \mathcal{K}_{s}}\Tr(\tilde{\bm{x}}_{k}\tilde{\bm{x}}_{k}^{\top}) \leq d\lambda + t + \frac{\alpha dt}{c_0^2},
\end{align*}
and
\begin{align*}
    \det(\bm{M}_t) \leq \ab(\frac{\Tr(\bm{M}_t)}{d})^d \leq \ab(\frac{d\lambda + t + \frac{\alpha d t}{c_0^2}}{d})^d \leq \ab(\lambda +\frac{t+\alpha dt}{ c_0^2 d})^d,
\end{align*}
where the last inequality is obtained by the fact that \(c_0 < 1 \).

Following the same steps as in the proof of \Cref{lemma:smoothed-diff-estimate-true-reward}, we can obtain that
\begin{align*}
    \ab|\bm{x}_a^{\top}\ab(\bm{\theta}_t  - \bm{\theta}^*)| \leq  \ab\|\bm{x}_a\|_{\bm{M}_t^{-1}} \ab(\sqrt{\lambda} + \sqrt{2\log\ab(\frac{1}{\delta})+d\log\ab(1 + \frac{t + \alpha dt}{\lambda d c_0^2 })}),
\end{align*}
which concludes the proof.
\end{proof}

\subsection{Proof of~\Cref{lemma:me-bounded-x-norm}}
\label{app:lemma:me-bounded-x-norm}
\restatelemmameboundedxnorm*

\begin{proof}
We first consider the case where \conlinucbme uses the ``Continuous Checking''  function, i.e., the agent checks the eigenvalues of the matrix \(\bm{M}_t\) at each round.
By \Cref{eq:lower-bound-Mt-prime-continuous-2}, we have  \(\lambda_{\min}(\bm{M}_t) \geq \alpha t\).
Then, following from \Cref{eq:bound-x-norm} in the proof of \Cref{lemma:smoothed-bounded-x-norm}, we can obtain that \(\ab\|\bm{x}_a\|_{\bm{M}_t^{-1}} \leq \sqrt{\frac{1}{\alpha t}}\).

Next, we consider the case for the ``Fixed Interval Checking'' function.
In this case, the agent only checks the eigenvalues of the matrix \(\bm{M}_t\) every \(P\) rounds.
For the rounds \(t\) when the agent checks the uncertainty, we have the same results as \(\lambda_{\min}(\bm{M}_t) \geq \alpha t\);
For the rounds \(t\) when the agent does not check it, we have \(\lambda_{\min}(\bm{M}_t) \geq \alpha t^{\prime}\) where \(t^{\prime}\) is the last round that the agent conducts the check and \(t - t^{\prime} \leq P\).
When \(t\geq 2P\), \( t^{\prime} \geq t-P \ge \frac{t}{2}\), we can obtain that \(\ab\|\bm{x}_a\|_{\bm{M}_t^{-1}} \leq \sqrt{\frac{1}{\alpha t^{\prime}}} \leq \sqrt{\frac{2}{\alpha t}}\).

Finally, we consider the ``Exponential Phase Checking'' function.
At rounds \(t\) satisfying \(2^i \leq t < 2^{i+1}\)  for \(i=1,2,\dots\), the last checking point \(t^{\prime} = 2^i\), then we have \(\lambda_{\min}(\bm{M}_t) \geq \alpha \cdot 2^i\).
When \(t \geq 2\), we have \(\frac{t}{2} \leq 2^i\), and then  \(\ab\|\bm{x}_a\|_{\bm{M}_t^{-1}} \leq \sqrt{\frac{1}{\alpha 2^i}} \leq \sqrt{\frac{2}{\alpha t}}\).

Therefore, to generalize the bound,  we can conclude that when \(t\geq 2P\), \(\ab\|\bm{x}_a\|_{\bm{M}_t^{-1}} \leq \sqrt{\frac{2}{\alpha t}}\) for all three checking functions.
\end{proof}

\subsection{Proof of~\Cref{theorem:conlinucbsk-regret}}
\label{app:proof-conlinucbsk-regret}
\restateregretconlinucbsk*
\begin{proof}
Denote the instantaneous regret at round \(t\) by \(\text{reg}_t\).
We first decompose it as follows:
\begin{align*}
    &\text{reg}_t  = (\bm{x}_{a_t^*}^{\top}\bm{\theta}^* + \eta_t) - (\bm{x}_{a_t}^{\top}\bm{\theta}^* + \eta_t)  \\
    =& \bm{x}_{a_t^*}^{\top}(\bm{\theta}^* - \bm{\theta}_t) + (\bm{x}_{a_t^*}^{\top}\bm{\theta}_t + \alpha_t \| \bm{x}_{a_t^*} \|_{\bm{M}_t^{-1}}) - (\bm{x}_{a_t}^{\top}\bm{\theta}_t + \alpha_t \| \bm{x}_{a_t} \|_{\bm{M}_t^{-1}})\\
    &+ \bm{x}_{a_t}^{\top}( \bm{\theta}_t - \bm{\theta}^*) - \alpha_t \| \bm{x}_{a_t^*} \|_{\bm{M}_t^{-1}} +\alpha_t \| \bm{x}_{a_t} \|_{\bm{M}_t^{-1}} \\
    \leq& \bm{x}_{a_t^*}^{\top}(\bm{\theta}^* - \bm{\theta}_t)  + \bm{x}_{a_t}^{\top}( \bm{\theta}_t - \bm{\theta}^*) - \alpha_t \| \bm{x}_{a_t^*} \|_{\bm{M}_t^{-1}} + \alpha_t \| \bm{x}_{a_t} \|_{\bm{M}_t^{-1}}  \numberthis\label{eq:use-ucb} \\
    \leq& \alpha_t \| \bm{x}_{a_t^*} \|_{\bm{M}_t^{-1}} + \alpha_t \| \bm{x}_{a_t} \|_{\bm{M}_t^{-1}}  - \alpha_t \| \bm{x}_{a_t^*} \|_{\bm{M}_t^{-1}} + \alpha_t \| \bm{x}_{a_t} \|_{\bm{M}_t^{-1}} \numberthis \label{eq:use-lemma-smoothed-diff-estimate-true-reward} \\
    \leq& 2\alpha_t \| \bm{x}_{a_t} \|_{\bm{M}_t^{-1}},
\end{align*}
where  \Cref{eq:use-ucb} follows from the UCB strategy for arm selection, and \Cref{eq:use-lemma-smoothed-diff-estimate-true-reward} follows from \Cref{lemma:smoothed-diff-estimate-true-reward}.
Next, we have
\begin{align*}
    \R(T) & = \sum_{t=1}^{T_0} \text{reg}_t + \sum_{t=T_0+1}^{T} \text{reg}_t \\
    &\leq T_0 + \sum_{t=T_0+1}^{T} 2\alpha_t \| \bm{x}_{a_t} \|_{\bm{M}_t^{-1}} \numberthis \label{eq:regret_less-than-1}\\
    & \leq T_0 + 2\sum_{t=T_0+1}^{T} \alpha_t \sqrt{\frac{2}{\lambda_{\mathcal{K}}bt}} \numberthis\label{eq:use-smooth-bound-x} \\
    & \leq T_0 + 4\alpha_T \sqrt{\frac{2T}{\lambda_{\mathcal{K}}b}} \numberthis\label{eq:simple-calculation} \\
\end{align*}
where \Cref{eq:regret_less-than-1} is because the instantaneous regret \(\text{reg}_t\leq 1\) by \Cref{assumption:normalized-vector},
\Cref{eq:use-smooth-bound-x} follows from \Cref{lemma:smoothed-bounded-x-norm}, and \Cref{eq:simple-calculation} is because \(\alpha_t\) is non-decreasing and \(\sum_{t=1}^{T} \frac{1}{\sqrt{t}} \leq 2\sqrt{T}\).

Recall the definition of \(T_0\triangleq \frac{8(1+\sqrt{d}R)^2}{b\lambda_{\mathcal{K}}}\log\ab(\frac{d}{\delta})\) in \Cref{lemma:smoothed-linear-t} and the definition of \(\alpha_t\) in \Cref{lemma:smoothed-diff-estimate-true-reward}.
Plugging \(T_0\) and \(\alpha_t\) into \Cref{eq:simple-calculation}, we can obtain the regret bound.
\end{proof}

\subsection{Proof of~\Cref{theorem:conlinucbme-regret}}
\label{app:proof-conlinucbme-regret}
\restateregretconlinucbme*
\begin{proof}
    With the same decomposition as in the proof of \Cref{theorem:conlinucbsk-regret}, we have 
    \begin{align*}
        &\R(T) = \sum_{t=1}^{2P} \text{reg}_t +  \sum_{t=2P+1}^{T} \text{reg}_t \leq 2P + 2\sum_{t=2P+1}^{T} \alpha_t \| \bm{x}_{a_t} \|_{\bm{M}_t^{-1}} \\ 
        \leq\ &2P + 2 \sum_{t=2P+1}^{T} \alpha_t \sqrt{\frac{2}{\alpha t}} \numberthis\label{eq:me-use-me-diff} \\
        \leq\ &2P + 4\alpha_T\sqrt{\frac{2T}{\alpha}}. \numberthis\label{eq:me-regret-bound} \\
        =\ &2P + 4\sqrt{\frac{2T}{\alpha}} \ab(\sqrt{2\log{\ab(\frac{1}{\delta})}+d\log\ab(1+\frac{T + \alpha dT}{\lambda d c_0^2})} + \sqrt{\lambda}),\numberthis\label{eq:plug-in-alpha}
    \end{align*}
    where \Cref{eq:me-use-me-diff,eq:me-regret-bound} follow from \Cref{lemma:me-diff-estimate-true-reward} and analogous steps in \Cref{theorem:conlinucbsk-regret}. 
    Note that \(P>1\) is a given constant for the ``Fixed Interval Checking'' function.
    Plugging \(\alpha_T\) into the inequality, we can obtain the result and conclude that \(\R(T) = \mathcal{O}(\sqrt{dT\log(T)}). \)
\end{proof}

\subsection{Proof of \Cref{thm:lowerbound}}
\label{sec:lower-bound}
  Since any algorithms for conversational bandits must select both arms and key terms, we model a policy \(\pi\) as a tuple consisting of two components \(\pi = (\pi^{\text{arm}}, \pi^{\text{key}})\), where \(\pi^{\text{arm}}\) selects arms and \(\pi^{\text{key}}\) selects key terms.
  We assume that at each time step, the policy can select at most one key term; otherwise, the number of key terms could exceed the number of arms, which is impractical.
  Let \(\mathcal{H}_t = \set{a_1, x_1, k_1, \widetilde{x}_1, \dots,a_{t}, x_{t}, k_{t}, \widetilde{x}_{t}}\) denote the history of interactions between the policy and the environment up to time \(t\).
  We note that the presence of key terms at every time step in \(\mathcal{H}_t\) is without loss of generality because we allow \(k_t\) to be empty if no conversation is initiated at round \(t\).
  The noise terms associated with both arm-level and key term-level feedback, denoted by \(\eta_{t}\) and \(\widetilde{\eta}_{t}\), follow the standard Gaussian distribution \(\mathcal{N}(0,1)\).
  We also denote the feature vectors of selected arm and key term as random variables \(\vec{A}_t, \vec{K}_t \in \RR^d\), and the arm-level and key term-level rewards \(X_t = \inprod{\vec{A}_t}{\vec{\theta}} + \eta_{t}\) and \(\widetilde{X}_t = \inprod{\vec{K}_t}{\vec{\theta}} + \widetilde{\eta}_{t}\), follow \(\mathcal{N}(\inprod{\vec{A}_t}{\vec{\theta}},1)\) and \(\mathcal{N}(\inprod{\vec{K}_t}{\vec{\theta}},1)\), respectively.
  We denote by \(\PP_{\vec{\theta}}\) the probability measure induced by the environment \(\vec{\theta}\) and policy \(\pi\), and by \(\E_{\vec{\theta}}\) the expectation under  \(\PP_{\vec{\theta}}\).
  With these definitions, we present the following lemma.

\begin{lemma}\label{lemma:divergence-decomposition}
  Let \(D(P \parallel Q)\) denote the KL divergence between distributions \(P\) and \(Q\), and let \(\vec{\theta}\), \(\vec{\theta}'\) be two environments, then we have
  \[D(\PP_{\vec{\theta}} \parallel \PP_{\vec{\theta}'}) = \frac{1}{2}\sum_{t=1}^T \ab(\E\nolimits_{\vec{\theta}}\ab[\inprod{\vec{A}_t}{\vec{\theta}-\vec{\theta}'}^2] + \E\nolimits_{\vec{\theta}}\ab[\inprod{\vec{K}_t}{\vec{\theta}-\vec{\theta}'}^2]).\]
\end{lemma}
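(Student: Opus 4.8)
The plan is to prove the KL divergence decomposition (Lemma~\ref{lemma:divergence-decomposition}) using the \emph{chain rule for KL divergence} applied to the joint distribution over the full interaction history. The key observation is that under both environments $\vec{\theta}$ and $\vec{\theta}'$, the policy $\pi$ is identical; only the reward-generating distributions differ. Since the policy's arm and key term selections at round $t$ depend solely on the history $\mathcal{H}_{t-1}$ (and internal randomness that is environment-independent), the only source of divergence between $\PP_{\vec{\theta}}$ and $\PP_{\vec{\theta}'}$ comes from the conditional distributions of the observed rewards $X_t$ and $\widetilde{X}_t$ given the history.

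First I would set up the joint density of the history $\mathcal{H}_T$ under each environment by factorizing it as a telescoping product over rounds: at each round, the density splits into (i) the policy's choice of arm $\vec{A}_t$ given $\mathcal{H}_{t-1}$, (ii) the arm reward $X_t$ given $(\mathcal{H}_{t-1}, \vec{A}_t)$, (iii) the policy's choice of key term $\vec{K}_t$, and (iv) the key term reward $\widetilde{X}_t$. The policy factors (i) and (iii) are \emph{identical} across the two environments and hence cancel in the log-likelihood ratio. Applying the chain rule for relative entropy, $D(\PP_{\vec{\theta}} \parallel \PP_{\vec{\theta}'})$ reduces to a sum over $t$ of the expected conditional KL divergences between the reward distributions. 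Concretely, conditioned on the selected arm $\vec{A}_t$, the reward $X_t$ is $\mathcal{N}(\inprod{\vec{A}_t}{\vec{\theta}},1)$ under $\vec{\theta}$ versus $\mathcal{N}(\inprod{\vec{A}_t}{\vec{\theta}'},1)$ under $\vec{\theta}'$, and symmetrically for $\widetilde{X}_t$ with $\vec{K}_t$.

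The next step invokes the closed-form KL divergence between two unit-variance Gaussians, namely $D(\mathcal{N}(\mu,1) \parallel \mathcal{N}(\mu',1)) = \tfrac{1}{2}(\mu-\mu')^2$. Substituting $\mu = \inprod{\vec{A}_t}{\vec{\theta}}$ and $\mu' = \inprod{\vec{A}_t}{\vec{\theta}'}$ gives a per-round arm contribution of $\tfrac{1}{2}\inprod{\vec{A}_t}{\vec{\theta}-\vec{\theta}'}^2$, and likewise $\tfrac{1}{2}\inprod{\vec{K}_t}{\vec{\theta}-\vec{\theta}'}^2$ for the key term. Taking the expectation $\E_{\vec{\theta}}$ over the randomness of the selected feature vectors (which are random because they depend on the history generated under $\vec{\theta}$) and summing over $t \in [T]$ yields exactly the claimed expression. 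I would handle the case where no conversation is initiated at round $t$ by noting that then $\vec{K}_t$ is effectively null (or the corresponding term vanishes), so the decomposition remains valid as stated.

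The main obstacle is the careful bookkeeping of \emph{which} conditioning the expectation is taken under: the divergence decomposition is a statement about expectations under $\PP_{\vec{\theta}}$ (the first argument of the KL), and one must verify that the chain-rule expansion consistently evaluates all outer expectations under the measure $\PP_{\vec{\theta}}$ rather than $\PP_{\vec{\theta}'}$. This is the standard subtlety in divergence-decomposition lemmas for bandits (cf.\ the analysis in~\citet{chu2011contextual} and the lower-bound framework for linear bandits), and it requires treating the adaptive, history-dependent nature of $\vec{A}_t$ and $\vec{K}_t$ rigorously. A secondary technical point is ensuring the cancellation of the policy's selection kernels is fully justified even though $\pi$ may be randomized and history-dependent; this follows because the selection kernels are measurable functions of $\mathcal{H}_{t-1}$ alone and do not depend on $\vec{\theta}$, so their log-ratios are identically zero.
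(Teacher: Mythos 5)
Your proposal is correct and follows essentially the same route as the paper's proof: factorize the joint density of the history into policy kernels and reward densities, observe that the environment-independent policy factors cancel in the log-likelihood ratio, condition on the selected arm and key term to reduce each round's contribution to an expected KL divergence between unit-variance Gaussians, and apply \(D(\mathcal{N}(\mu,1)\parallel\mathcal{N}(\mu',1))=\tfrac{1}{2}(\mu-\mu')^2\). Your attention to the subtlety that all expectations are taken under \(\PP_{\vec{\theta}}\), and to the null key term case, matches how the paper handles these points via the canonical bandit model and the convention that \(k_t\) may be empty.
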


  \begin{proof}
    Given a bandit instance with parameter \(\vec{\theta}\) and a policy \(\pi\), according to Section 4.6 of \citet{lattimore-2020-bandit-algorithms}, we construct the canonical bandit model of our setting as follows.
    Let \((\Omega,\mathcal{F}, \PP_{\vec{\theta}})\) be a probability space and \(\mathcal{A}\) be the set of all possible arms, where \(\Omega=(\mathcal{A} \times \RR)^{T}\), \(\mathcal{F}=\mathcal{B}(\Omega)\), and the density function of the probability measure \(\PP_{\vec{\theta}}\) is defined by \(p_{\vec{\theta},\pi}: \Omega \to \RR\):
    \begin{align*}
      p_{\vec{\theta}}(\mathcal{H}_{T}) = \prod_{t=1}^{T} \pi_t^{\text{arm}}(a_t \mid \mathcal{H}_{t-1}) p_{a_t}(x_t) \cdot \pi_t^{\text{key}}(k_t \mid \mathcal{H}_{t-1}) \widetilde{p}_{k_t}(\widetilde{x}_t),
    \end{align*}
    where \(p_{a_t}\) and \(\widetilde{p}_{k_t}\) are the density functions of arm-level and key term-level reward distributions \(P_{a_t}\) and \(\widetilde{P}_{k_t}\), respectively.
    The definition of \(\PP_{\vec{\theta}'}\) is identical except that \(p_{a_t}\), \(\widetilde{p}_{k_t}\) are replaced by \(p'_{a_t}\), \(\widetilde{p}'_{k_t}\) and \(P_{a_t}\), \(\widetilde{P}_{k_t}\) are replaced by \(P'_{a_t}\), \(\widetilde{P}'_{k_t}\).
    
    By the definition of KL divergence \(D(P \parallel Q)=\int_{\Omega}\log\left(\odv{P}{Q}\right)\odif{P}\),
    \begin{align*}
      D(\PP_{\vec{\theta}} \parallel \PP_{\vec{\theta}'}) =\int_{\Omega}\log\left(\odv{\PP_{\vec{\theta}}}{\PP_{\vec{\theta}'}}\right)\odif{\PP_{\vec{\theta}}}= \E\nolimits_{\vec{\theta}}\left[ \log \odv{\PP_{\vec{\theta}}}{\PP_{\vec{\theta}'}} \right].
    \end{align*}
    
    Note that
    \begin{align*}
      &\log \ab(\odv{\PP_{\vec{\theta}}}{\PP_{\vec{\theta}'}}(\mathcal{H}_T))= \log \frac{p_{\vec{\theta},\pi}(\mathcal{H}_{T})}{p_{\vec{\theta}',\pi}(\mathcal{H}_{T})} \numberthis\label{eq:chain-rule-radon-nikodym}\\
      =& \log \frac{\prod_{t=1}^{T} \pi_t^{\text{arm}}(a_t \mid \mathcal{H}_{t-1}) p_{a_t}(x_t) \cdot \pi_t^{\text{key}}(k_t \mid \mathcal{H}_{t-1}) \widetilde{p}_{k_t}(\widetilde{x}_t)}{\prod_{t=1}^{T} \pi_t^{\text{arm}}(a_t \mid \mathcal{H}_{t-1}) p'_{a_t}(x_t) \cdot \pi_t^{\text{key}}(k_t \mid \mathcal{H}_{t-1}) \widetilde{p}'_{k_t}(\widetilde{x}_t)}\\
      =& \sum_{t=1}^{T} \ab(\log \frac{p_{a_t}(x_t)}{p'_{a_t}(x_t)} + \log \frac{\widetilde{p}_{k_t}(\widetilde{x}_t)}{\widetilde{p}'_{k_t}(\widetilde{x}_t)}).
    \end{align*}
    where in Equation~\ref{eq:chain-rule-radon-nikodym} we used the chain rule for Radon–Nikodym derivatives, and in the last equality, all the terms involving the policy \(\pi\) cancel.
    Therefore,
    \begin{align*}
      &D(\PP_{\vec{\theta}} \parallel \PP_{\vec{\theta}'})\\
      =& \sum_{t=1}^{T} \ab(\E\nolimits_{\vec{\theta}} \ab[\log \frac{p_{A_t}(X_t)}{p'_{A_t}(X_t)}] + \E\nolimits_{\vec{\theta}}\ab[\log \frac{\widetilde{p}_{K_t}(\widetilde{X}_t)}{\widetilde{p}'_{K_t}(\widetilde{X}_t)}])\\
      =& \sum_{t=1}^{T} \ab(\E\nolimits_{\vec{\theta}}\ab[\E\nolimits_{\vec{\theta}} \ab[\log \frac{p_{A_t}(X_t)}{p'_{A_t}(X_t)} \bigg\mid A_t]] + \E\nolimits_{\vec{\theta}}\ab[\E\nolimits_{\vec{\theta}}\ab[\log \frac{\widetilde{p}_{K_t}(\widetilde{X}_t)}{\widetilde{p}'_{K_t}(\widetilde{X}_t)}\bigg\mid K_t]])\\
      =& \sum_{t=1}^{T} \ab(\E\nolimits_{\vec{\theta}}\ab[D(P_{A_t} \parallel P'_{A_t})] + \E\nolimits_{\vec{\theta}}\ab[D(\widetilde{P}_{K_t} \parallel \widetilde{P}'_{K_t})])\\
      =& \frac{1}{2}\sum_{t=1}^T \ab(\E\nolimits_{\vec{\theta}}\ab[\inprod{\vec{A}_t}{\vec{\theta}-\vec{\theta}'}^2] + \E\nolimits_{\vec{\theta}}\ab[\inprod{\vec{K}_t}{\vec{\theta}-\vec{\theta}'}^2]).
    \end{align*}
    where the last equation uses \Cref{lemma:kl-divergence-of-gaussian} and the fact that \(P_{A_t}\sim \mathcal{N}(\inprod{A_t}{\vec{\theta}},1)\), \(P'_{A_t}\sim \mathcal{N}(\inprod{A_t}{\vec{\theta}'},1)\), \(\widetilde{P}_{K_t}\sim \mathcal{N}(\inprod{K_t}{\vec{\theta}},1)\), and \(\widetilde{P}'_{K_t}\sim \mathcal{N}(\inprod{K_t}{\vec{\theta}'},1)\), respectively.
  \end{proof}

Next, we present a lower bound for conversational bandits but \emph{without} imposing the constraint that the number of arms is \(K\).

\begin{lemma}\label{lemma:lower-bound-infinite-arms}
  Let the arm set and the key term set \(\mathcal{A} = \mathcal{K} = [-1,1]^d\) and \(\Theta = \ab\{\pm\sqrt{\frac{1}{T}}\}^d\), then for any policy, there exists an environment \(\vec{\theta} \in \Theta\) such that the expected regret satisfies:
  \(\E\nolimits_{\vec{\theta}}\ab[R(T)] \geq \frac{\exp(-4)}{4} d\sqrt{T}\).
\end{lemma}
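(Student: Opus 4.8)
The plan is to run an Assouad-type reduction over the $2^d$ hypotheses in $\Theta$, turning the $d$-dimensional problem into $d$ independent one-dimensional sign-detection problems, and then control the information available per coordinate via \Cref{lemma:divergence-decomposition}. First I would identify the optimal arm for a fixed $\vec{\theta}\in\Theta$: since $\mathcal{A}=[-1,1]^d$ decouples across coordinates, $\inprod{x}{\vec{\theta}}$ is maximized at $x^*_i=\sign(\theta_i)$, giving optimal value $\sum_{i}|\theta_i| = d/\sqrt{T}$. Writing $Z_i = \frac{1}{T}\sum_{t=1}^{T} A_{t,i}\in[-1,1]$ for the time-averaged $i$-th coordinate actually played, the pseudo-regret decomposes coordinatewise as $R(T)=\sqrt{T}\sum_{i=1}^{d}(1 - Z_i\sign(\theta_i))$, so that $\E_{\vec{\theta}}[R(T)] = \sqrt{T}\sum_{i=1}^{d}\E_{\vec{\theta}}[1 - Z_i\sign(\theta_i)]$.

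Next I would average $\E_{\vec{\theta}}[R(T)]$ over all $\vec{\theta}\in\Theta$ and, for each coordinate $i$, pair every environment $\vec{\theta}$ (with $\theta_i=+1/\sqrt{T}$) with the environment $\vec{\theta}'$ obtained by flipping only the sign of $\theta_i$. The combined contribution of such a pair is $\E_{\vec{\theta}}[1-Z_i] + \E_{\vec{\theta}'}[1+Z_i]$, and using the elementary inequalities $1-z\ge\1\{z\le 0\}$ and $1+z\ge\1\{z>0\}$ for $z\in[-1,1]$, this is at least $\PP_{\vec{\theta}}(Z_i\le 0)+\PP_{\vec{\theta}'}(Z_i>0)$, which is precisely the total error of the test that decides the sign of $\theta_i$ from the interaction history.

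The crux --- and the step where the conversational channel must be handled --- is bounding the KL divergence between the two paired measures. Since $\vec{\theta}-\vec{\theta}'=\pm\frac{2}{\sqrt{T}}\vec{e}_i$, \Cref{lemma:divergence-decomposition} gives $D(\PP_{\vec{\theta}}\parallel\PP_{\vec{\theta}'}) = \frac{2}{T}\sum_{t=1}^{T}(\E_{\vec{\theta}}[A_{t,i}^2]+\E_{\vec{\theta}}[K_{t,i}^2])$. The key observation is that every arm and key term lies in $[-1,1]^d$, so $A_{t,i}^2\le 1$ and $K_{t,i}^2\le 1$ (with the key-term term simply vanishing in rounds where no conversation is launched); hence each per-round contribution is at most $1$ and $D(\PP_{\vec{\theta}}\parallel\PP_{\vec{\theta}'})\le\frac{2}{T}\cdot T\cdot 2 = 4$, uniformly over all policies. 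I expect this to be the main obstacle: one must verify that the additional key term feedback cannot push the divergence below a constant, since otherwise conversations could in principle defeat the lower bound. It is exactly the presence of this second feedback channel that yields $4$ rather than $2$ in the exponent, and hence the constant $\exp(-4)/4$.

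Finally I would apply the Bretagnolle--Huber inequality, $\PP_{\vec{\theta}}(Z_i\le 0)+\PP_{\vec{\theta}'}(Z_i>0)\ge\frac{1}{2}\exp(-D(\PP_{\vec{\theta}}\parallel\PP_{\vec{\theta}'}))\ge\frac{1}{2}e^{-4}$, to lower bound each paired test error. Since there are $2^{d-1}$ such pairs for each of the $d$ coordinates and $|\Theta|=2^d$, averaging gives $\frac{1}{|\Theta|}\sum_{\vec{\theta}}\E_{\vec{\theta}}[R(T)] \ge \sqrt{T}\cdot d\cdot\frac{2^{d-1}}{2^d}\cdot\frac{1}{2}e^{-4} = \frac{\exp(-4)}{4}d\sqrt{T}$. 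Because the average over $\Theta$ is at least this quantity, there must exist some $\vec{\theta}\in\Theta$ attaining it, which establishes the claim.
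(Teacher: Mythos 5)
Your proof is correct, and it rests on the same skeleton as the paper's: an Assouad-type pairing over the sign hypercube \(\Theta\), the divergence decomposition of \Cref{lemma:divergence-decomposition}, the Bretagnolle--Huber inequality (\Cref{lemma:bretagnolle-huber}), and an averaging argument to extract a hard instance \(\vec{\theta}^*\). Both proofs also handle the conversational channel identically: the at-most-one-key-term-per-round constraint at worst doubles the per-round divergence contribution, which is exactly what turns \(\exp(-2)\) into \(\exp(-4)\). Where you genuinely differ is the reduction from regret to testing error. The paper's test statistic is the \emph{count} of wrong-sign rounds, via the event \(\mathcal{E}_{\vec{\theta},i}=\{\sum_{t=1}^T \1\{\sign(\vec{A}_{ti})\neq\sign(\vec{\theta}_i)\}\geq T/2\}\), and it converts regret into \(\PP_{\vec{\theta}}[\mathcal{E}_{\vec{\theta},i}]\) using Markov's inequality; you instead use the time-averaged coordinate \(Z_i\) together with the pointwise bounds \(1-z\geq\1\{z\leq 0\}\) and \(1+z\geq\1\{z>0\}\). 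Your route is in fact tighter and cleaner on two counts. First, the paper's step rewriting \((\sign(\vec{\theta}^*_i)-\vec{A}_{ti})\vec{\theta}^*_i\) as \(2\1\{\sign(\vec{A}_{ti})\neq \sign(\vec{\theta}^*_i)\}\sqrt{1/T}\) holds with equality only when \(\vec{A}_{ti}\in\{\pm 1\}\); for general \(\vec{A}_{ti}\in[-1,1]\) only the factor-\(1\) inequality is available, and combined with the factor-\(2\) loss from Markov, the paper's argument as written would rigorously give only \(\frac{e^{-4}}{8}d\sqrt{T}\). Your pointwise-inequality argument delivers the stated constant \(\frac{e^{-4}}{4}\) without any assumption on the played arms. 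Second, you correctly treat the divergence bound as an inequality \(D(\PP_{\vec{\theta}}\parallel\PP_{\vec{\theta}'})\leq 4\), whereas the paper asserts \(\inprod{\vec{A}_t}{\vec{\theta}-\vec{\theta}'}^2 = 4/T\) as an equality, which is only the worst case; since Bretagnolle--Huber needs an upper bound on the divergence, your direction is the right one.
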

\begin{proof}
  For any \(i \in [d]\) and \(\vec{\theta} \in \Theta\), define \(\mathcal{E}_{\vec{\theta},i}\) as the event that the sign of the \(i\)-th coordinate of at least half of \(\{\vec{A}_t\}_{t=1}^T\) does not agree with \(\vec{\theta}\):
  \(\mathcal{E}_{\vec{\theta},i} = \ab\{\sum_{t=1}^T \1\ab\{\sign(\vec{A}_{ti}) \neq \sign(\vec{\theta}_i)\} \geq \frac{T}{2}\}\).
  
  Let \(p_{\vec{\theta},i}=\PP_{\vec{\theta}}\ab[\mathcal{E}_{\vec{\theta},i}]\) and \(\vec{\theta}'=(\vec{\theta}_1, \dots, \vec{\theta}_{i-1}, -\vec{\theta}_i, \vec{\theta}_{i+1}, \dots, \vec{\theta}_d)^\top\), i.e., \(\vec{\theta}'\) is the same as \(\vec{\theta}\) except that the \(i\)-th coordinate is negated.
  It is easy to verify that \(\mathcal{E}_{\vec{\theta},i}^c = \mathcal{E}_{\vec{\theta}',i}\).
  Thus, applying \Cref{lemma:bretagnolle-huber} and \Cref{lemma:divergence-decomposition}, we obtain
  \begin{align*}
    &p_{\vec{\theta},i}+p_{\vec{\theta}',i}
    \geq\frac{1}{2} \exp\ab(D(\PP_{\vec{\theta}} \parallel \PP_{\vec{\theta}'}))\\
    =\ &\frac{1}{2}\exp\ab(-\frac{1}{2}\sum_{t=1}^T \ab(\E\nolimits_{\vec{\theta}}\ab[\inprod{\vec{A}_t}{\vec{\theta}-\vec{\theta}'}^2] + \E\nolimits_{\vec{\theta}}\ab[\inprod{\vec{K}_t}{\vec{\theta}-\vec{\theta}'}^2]))\\
    =\ &\frac{1}{2}\exp(-4),
    \end{align*}
    where the last equality follows from a straightforward calculation showing that \(\inprod{\vec{A}_t}{\vec{\theta}-\vec{\theta}'}=\inprod{\vec{A}_t}{\vec{\theta}-\vec{\theta}'}=4/T\).

    Since \(|\Theta|=2^d\), we have
    \begin{align*}
        \sum_{\vec{\theta} \in \Theta} \frac{1}{|\Theta|}\sum_{i=1}^d p_{\vec{\theta},i} &= \frac{1}{|\Theta|}\sum_{i=1}^d \sum_{\vec{\theta} \in \Theta} p_{\vec{\theta},i}\\
        &= \frac{1}{2^d} \cdot d \cdot \frac{2^d}{2} \cdot \frac{1}{2}\exp(-4) = \frac{d}{4}\exp(-4).
    \end{align*}
    This implies the existence of some \(\vec{\theta}^*\in \Theta\) such that
    \begin{equation}\label{eq:sum-of-p-theta-i}
      \sum_{i=1}^d p_{\vec{\theta}^*,i}\geq \frac{d}{4}\exp(-4).
    \end{equation}
    Choosing this \(\vec{\theta}^*\) and defining the optimal arm \(\vec{a}^*\) as:
    \[\vec{a}^* = \argmax_{\vec{a} \in \mathcal{A}}\inprod{\vec{a}}{\vec{\theta}^*} = \argmax_{\vec{a} \in \mathcal{A}} \sum_{i=1}^d \vec{a}^*_i \vec{\theta}^*_i.\]
    It is easy to verify that to maximize \(\sum_{i=1}^d \vec{a}^*_i \vec{\theta}^*_i\), we must have \(a_i^* = \sign(\vec{\theta}^*_i)\) for all \(i \in [d]\).
    Therefore, the expected regret is at least
    \begin{align*}
    &\E\nolimits_{\vec{\theta}^*}\ab[R(T)] = \E\nolimits_{\vec{\theta}^*}\ab[\sum_{t=1}^T \inprod{\vec{a}^*-\vec{A}_t}{\vec{\theta}^*}]\\
    =\ & \E\nolimits_{\vec{\theta}^*}\ab[\sum_{t=1}^T \sum_{i=1}^d(\vec{a}^*_i-\vec{A}_{ti})\vec{\theta}^*_i]\\
    =\ & \E\nolimits_{\vec{\theta}^*}\ab[\sum_{t=1}^T \sum_{i=1}^d(\sign(\vec{\theta}^*_i)-\vec{A}_{ti})\vec{\theta}^*_i]\\
    =\ & \E\nolimits_{\vec{\theta}^*}\ab[\sum_{t=1}^T \sum_{i=1}^d 2\1\ab\{\sign(\vec{A}_{ti})\neq \sign(\vec{\theta}^*_i)\}\sqrt{\frac{1}{T}}]\\
    =\ & 2\sqrt{\frac{1}{T}}\sum_{i=1}^d \E\nolimits_{\vec{\theta}^*}\ab[\sum_{t=1}^T\1\ab\{\sign(\vec{A}_{ti})\neq \sign(\vec{\theta}^*_i)\}]\\
    \geq\ & \sqrt{T} \sum_{i=1}^d \PP_{\vec{\theta}^*}\ab[\sum_{t=1}^T\1\ab\{\sign(\vec{A}_{ti})\neq \sign(\vec{\theta}^*_i)\} \geq T/2] \numberthis\label{eq:apply-markov}\\
    =\ & \sqrt{T}\sum_{i=1}^d p_{\vec{\theta}^*,i} \geq \frac{\exp(-4)}{4} d\sqrt{T},
  \end{align*}
  where \Cref{eq:apply-markov} uses Markov's inequality, and the last inequality follows from \Cref{eq:sum-of-p-theta-i}.
\end{proof}

\restatelowerbound*
\begin{proof}
  Suppose we have \(\beta=\frac{d}{m}\) smaller problem instances \(I_1. I_2, \dots, I_{\beta}\), each corresponding to an \(m\)-dimensional, \(K\)-armed bandit instance with a horizon of \(T/\beta\) and we assume they have preference vectors \(\vec{\theta}_1, \dots, \vec{\theta}_{\beta} \in \RR^m\), respectively.
  We denote the arm set for instance \(I_j\) as \(\mathcal{A}^{I_j} \subset \RR^m\), and the regret incurred by instance \(I\) under policy \(\pi\) as \(R_I^{\pi}(T)\) .
  Next, we construct a \(d\)-dimensional instance \(I=(I_1, I_2, \dots, I_{\beta})\) by leting the unknown preference vector for instance \(I\) be \(\vec{\theta} = (\vec{\theta}_1^\top, \dots, \vec{\theta}_{\beta}^\top)^\top\), and dividing the time horizon \(T\) into \(\beta\) consecutive periods, each of length \(T/\beta\).
  For each time step \(t \in [T]\), the feature vectors of arms \(\mathcal{A}_t\) are constructed from instance \(I_j\), where \(j=\lceil t\beta/T \rceil\).
  Specifically, \(\mathcal{A}_t=\ab\{(\vec{0}^\top, \dots, \vec{x}^\top, \dots, \vec{0}^\top)^\top\}_{\vec{x}\in \mathcal{A}^{I_j}}\), where the non-zero entry is located at the \(j\)-th block.
  This means that at time \(t\), the learner can only get information about the \(j\)-th block of the preference vector \(\vec{\theta}\).
  Therefore for any policy \(\pi\), there exists policies \(\pi_1, \dots, \pi_{\beta}\) such that \(R_{I}^{\pi}(T) = \sum_{j=1}^{\beta}R_{I_j}^{\pi_j}(\frac{T}{\beta})\).
  Applying \Cref{lemma:lower-bound-infinite-arms}, we can always find instances \(I_1, I_2, \dots, I_{\beta}\) such that
  \begin{align*}
      R_{I}^{\pi}(T) &= \sum_{j=1}^{\beta}R_{I_j}^{\pi_j}(\frac{T}{\beta}) \geq \sum_{j=1}^{\beta} \Omega\ab(m\sqrt{\frac{T}{\beta}}) = \Omega\ab(m\sqrt{T\beta})\\
      &= \Omega\ab(m\sqrt{T\frac{d}{m}}) = \Omega\ab(\sqrt{dTm}) = \Omega\ab(\sqrt{dT\log(T)}).\tag*{\qedhere}
  \end{align*}
\end{proof}

\subsection{Technical Inequalities}
We present the technical inequalities used throughout the proofs. We provide detailed references for readers' convenience.

  \begin{lemma}[\protect\citet{bretagnolle-huber-1978}]\label{lemma:bretagnolle-huber}
    Let \(P\) and \(Q\) be probability measures on the same measurable space \((\Omega, \mathcal{F})\), and let \(A \in \mathcal{F}\) be an arbitrary event. Then,
    \[P(A) + Q(A^c) \geq \frac{1}{2} \exp(-D(P \parallel Q)),\]
    where \(D(P \parallel Q)=\int_{\Omega}\log\left(\odv{P}{Q}\right)\odif{P} = \E_P\left[\log\odv{P}{Q}\right]\) is the KL divergence between \(P\) and \(Q\). \(A^c = \Omega \setminus A\) is the complement of \(A\).
  \end{lemma}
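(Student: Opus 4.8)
The plan is to prove the inequality through a short chain of three classical bounds that link $P(A)+Q(A^c)$ to the KL divergence via the total-variation affinity and the Bhattacharyya coefficient. First I would fix a common dominating measure, say $\mu = P + Q$, and write $p = dP/d\mu$ and $q = dQ/d\mu$ for the corresponding densities. If $P$ is not absolutely continuous with respect to $Q$, then $D(P \parallel Q) = \infty$, the right-hand side is zero, and the claim is trivial; so henceforth I assume $D(P \parallel Q) < \infty$. The first substantive step reduces the left-hand side to the affinity $\int_\Omega \min(p,q)\,d\mu$: writing $P(A)+Q(A^c) = \int_A p\,d\mu + \int_{A^c} q\,d\mu$ and bounding each integrand pointwise below by $\min(p,q)$ on its respective region, I obtain $P(A)+Q(A^c) \ge \int_\Omega \min(p,q)\,d\mu$ for every measurable $A$ (equality would hold for the Neyman--Pearson event $A = \{p < q\}$, but only the inequality is needed).

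Next I would lower-bound the affinity by half the squared Bhattacharyya coefficient. By the Cauchy--Schwarz inequality,
\[
\left(\int_\Omega \sqrt{pq}\,d\mu\right)^2 = \left(\int_\Omega \sqrt{\min(p,q)}\,\sqrt{\max(p,q)}\,d\mu\right)^2 \le \int_\Omega \min(p,q)\,d\mu \cdot \int_\Omega \max(p,q)\,d\mu,
\]
and since $\max(p,q) \le p + q$ gives $\int_\Omega \max(p,q)\,d\mu \le 2$, rearranging yields $\int_\Omega \min(p,q)\,d\mu \ge \tfrac12 \left(\int_\Omega \sqrt{pq}\,d\mu\right)^2$.

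The third step controls the Bhattacharyya coefficient by the KL divergence through Jensen's inequality. I would rewrite $\int_\Omega \sqrt{pq}\,d\mu = \E_P[\exp(\tfrac12 \log(q/p))]$ and apply Jensen to the convex exponential, obtaining $\int_\Omega \sqrt{pq}\,d\mu \ge \exp(\tfrac12 \E_P[\log(q/p)]) = \exp(-\tfrac12 D(P \parallel Q))$. Chaining the three bounds then gives $P(A)+Q(A^c) \ge \tfrac12 \exp(-D(P \parallel Q))$, which is exactly the stated inequality.

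I expect the only real subtlety — and thus the main obstacle — to be the measure-theoretic bookkeeping around absolute continuity: ensuring that the densities with respect to $\mu$ and the change of variables $\E_P[\,\cdot\,] = \int p\,(\cdot)\,d\mu$ are well defined, and disposing cleanly of the degenerate case $D(P \parallel Q) = \infty$ together with the null contribution of the set $\{p = 0,\, q > 0\}$ to $\int_\Omega \sqrt{pq}\,d\mu$. Once this is handled, each of the three steps is a one-line invocation of a standard inequality, so no heavy computation is required.
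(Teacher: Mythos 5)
The paper does not prove this lemma at all: it is imported by citation (to Bretagnolle--Huber, 1978) in the ``Technical Inequalities'' appendix, so there is no in-paper argument to compare against. Your proposal is a correct, complete, and self-contained proof, and it is the standard modern route (it appears, e.g., as Theorem 14.2 in Lattimore and Szepesv\'ari's \emph{Bandit Algorithms}): reduce $P(A)+Q(A^c)$ to the affinity $\int \min(p,q)\,d\mu$, lower-bound the affinity by $\tfrac12\bigl(\int\sqrt{pq}\,d\mu\bigr)^2$ via Cauchy--Schwarz applied to $\sqrt{\min(p,q)}\sqrt{\max(p,q)}$ together with $\int\max(p,q)\,d\mu\le 2$, and bound the Bhattacharyya coefficient below by $\exp(-\tfrac12 D(P\,\|\,Q))$ via Jensen. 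Each step checks out: $\min(p,q)\max(p,q)=pq$ pointwise, the Jensen direction is right ($\E[e^X]\ge e^{\E[X]}$ by convexity), and your dispatch of the degenerate case is clean --- if $P\not\ll Q$ then $D=\infty$ and the bound is vacuous, while if $D<\infty$ then $P(\{q=0\})=0$ and $\E_P[\log(q/p)]=-D$ is well defined (its positive part is integrable since $p\log(q/p)\le q-p$), so the change of measure $\int\sqrt{pq}\,d\mu=\E_P[\sqrt{q/p}]$ is legitimate. The only cosmetic remark is that the Cauchy--Schwarz detour through $\max(p,q)$ can be replaced by the elementary identity $\int\min(p,q)\,d\mu = 1 - \mathrm{TV}(P,Q)$ plus a bound on total variation, but your version is, if anything, shorter; nothing is missing.
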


  \begin{lemma}[KL divergence between Gaussian distributions]\label{lemma:kl-divergence-of-gaussian}
    If \(P \sim \mathcal{N}(\mu_1, \sigma^2)\) and \(Q \sim \mathcal{N}(\mu_2, \sigma^2)\), then
    \[D(P \parallel Q)=\frac{(\mu_1-\mu_2)^2}{2\sigma^2}.\]
  \end{lemma}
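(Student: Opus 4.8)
The plan is to compute the KL divergence directly from its definition by substituting the two Gaussian densities and simplifying. Writing \(p\) and \(q\) for the densities of \(P \sim \mathcal{N}(\mu_1,\sigma^2)\) and \(Q \sim \mathcal{N}(\mu_2,\sigma^2)\), I start from
\[
D(P \parallel Q) = \E\nolimits_P\ab[\log \frac{p(X)}{q(X)}],
\]
and the first key observation is that, since both densities share the same normalizing constant \(1/\sqrt{2\pi\sigma^2}\), that constant cancels inside the logarithm and only the exponents survive. This gives
\[
\log \frac{p(x)}{q(x)} = \frac{(x-\mu_2)^2 - (x-\mu_1)^2}{2\sigma^2}.
\]

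The crucial simplification comes next: expanding the difference of squares in the numerator makes the \(x^2\) terms cancel, leaving a quantity that is \emph{linear} in \(x\), namely \((x-\mu_2)^2 - (x-\mu_1)^2 = 2(\mu_1-\mu_2)x + (\mu_2^2 - \mu_1^2)\). This linearity is what makes the whole computation painless, because taking the expectation under \(P\) then requires only the first moment \(\E_P[X] = \mu_1\), with no need to compute \(\E_P[X^2]\) or invoke any properties of the variance. I would then substitute \(\E_P[X]=\mu_1\) into the linear expression, obtaining a numerator of \(2(\mu_1-\mu_2)\mu_1 + \mu_2^2 - \mu_1^2\).

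Finally, I would collect terms in the numerator: \(2\mu_1^2 - 2\mu_1\mu_2 + \mu_2^2 - \mu_1^2 = \mu_1^2 - 2\mu_1\mu_2 + \mu_2^2 = (\mu_1-\mu_2)^2\), which yields exactly \(D(P \parallel Q) = (\mu_1-\mu_2)^2/(2\sigma^2)\) as claimed. There is no genuine obstacle here; this is a standard closed-form identity whose only subtlety is the bookkeeping of signs when expanding the squares. The one conceptual point worth flagging is that the argument hinges on both distributions having \emph{equal variance}, so that the quadratic-in-\(x\) contributions cancel and the integrand reduces to an affine function of \(x\); were the variances unequal, a residual \(x^2\) term would force the second moment into the calculation and produce additional logarithmic and variance-ratio terms.
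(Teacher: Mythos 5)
Your computation is correct: since the two densities share the same normalizing constant, the log-ratio reduces to \(\bigl((x-\mu_2)^2-(x-\mu_1)^2\bigr)/(2\sigma^2)\), which is affine in \(x\) after the quadratic terms cancel, and substituting \(\E_P[X]=\mu_1\) gives exactly \((\mu_1-\mu_2)^2/(2\sigma^2)\). The paper states this lemma as a standard known identity without supplying a proof, and your direct calculation is precisely the canonical argument one would give, including the correct observation that equal variances are what keep the second moment out of the computation.
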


\begin{lemma}[Matrix Chernoff, Corollary 5.2 in \citet{tropp-2011-user-friendly}]\label{lemma:matrix-chernoff}
  Consider a finite sequence \(\set{\vec{X}_k}\) of independent, random, self-adjoint matrices with dimension \(d\).
  Assume that each random matrix satisfies
  \[\vec{X}_k \succeq \vec{0} \quad \text{and} \quad \lambda_{\max}(\vec{X}_k) \leq R \quad\text{almost surely.}\]
  Define
  \[\vec{Y}:=\sum_{k} \vec{X}_k \quad \text{and} \quad \mu_{\min} := \lambda_{\min}\ab(\E[\vec{Y}]) = \lambda_{\min}\ab(\sum_{k} \E[\vec{X}_{k}]).\]
  Then, for any \(\delta \in (0,1)\),
  \[\Pr\left[\lambda_{\min}\ab(\sum_{k} \vec{X}_k) \leq (1-\delta)\mu_{\min}\right] \leq d\ab[\frac{e^{-\delta}}{(1-\delta)^{1-\delta}}]^{\mu_{\min}/R}.\]
\end{lemma}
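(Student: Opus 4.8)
The plan is to prove this lower-tail bound by the matrix Laplace transform (Ahlswede--Winter--Tropp) method. First I would convert the eigenvalue event into an exponential moment. Fix any $\theta < 0$. Since $x \mapsto e^{\theta x}$ is decreasing, the event $\ab\{\lambda_{\min}(\vec{Y}) \le (1-\delta)\mu_{\min}\}$ is contained in $\ab\{e^{\theta\lambda_{\min}(\vec{Y})} \ge e^{\theta(1-\delta)\mu_{\min}}\}$, so Markov's inequality gives
\[
\Pr\ab[\lambda_{\min}(\vec{Y}) \le (1-\delta)\mu_{\min}] \le e^{-\theta(1-\delta)\mu_{\min}}\,\E\ab[e^{\theta\lambda_{\min}(\vec{Y})}].
\]
Because $\theta<0$, we have $\theta\lambda_{\min}(\vec{Y}) = \lambda_{\max}(\theta\vec{Y})$, and the top eigenvalue of a positive operator is dominated by its trace, so $e^{\theta\lambda_{\min}(\vec{Y})} = \lambda_{\max}(e^{\theta\vec{Y}}) \le \Tr e^{\theta\vec{Y}}$. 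This reduces the problem to controlling the matrix exponential moment $\E[\Tr e^{\theta\vec{Y}}]$ of the sum $\vec{Y}=\sum_k \vec{X}_k$.

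The central step is to break this moment over the independent summands. I would invoke Lieb's concavity theorem --- the map $\vec{H}\mapsto\Tr\exp(\vec{H}+\log\vec{A})$ is concave on positive definite matrices --- together with Jensen's inequality, applied iteratively over the $\vec{X}_k$, to obtain the master inequality
\[
\E\ab[\Tr e^{\theta\vec{Y}}] \le \Tr\exp\ab(\sum_k \log\E\ab[e^{\theta\vec{X}_k}]).
\]
This is the deep ingredient; it is the step that I expect to be the main obstacle, for the reason discussed below.

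Next I would bound each matrix moment generating function. Since $\vec{0}\preceq\vec{X}_k$ and $\lambda_{\max}(\vec{X}_k)\le R$, all eigenvalues of $\vec{X}_k$ lie in $[0,R]$, and convexity of $x\mapsto e^{\theta x}$ yields the secant bound $e^{\theta x}\le 1+\tfrac{e^{\theta R}-1}{R}x$ on $[0,R]$. By the transfer rule for functions of a single matrix this lifts to $e^{\theta\vec{X}_k}\preceq\vec{I}+g(\theta)\vec{X}_k$ with $g(\theta):=(e^{\theta R}-1)/R$; taking expectations and applying operator monotonicity of $\log$ together with the operator inequality $\log(\vec{I}+\vec{A})\preceq\vec{A}$ gives $\log\E[e^{\theta\vec{X}_k}]\preceq g(\theta)\E[\vec{X}_k]$. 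Summing, $\sum_k \log\E[e^{\theta\vec{X}_k}]\preceq g(\theta)\E[\vec{Y}]$, and since $\Tr\exp(\cdot)$ is monotone in the Loewner order (via Weyl's eigenvalue monotonicity),
\[
\Tr\exp\ab(\sum_k \log\E[e^{\theta\vec{X}_k}]) \le \Tr\exp\ab(g(\theta)\E[\vec{Y}]) \le d\,e^{g(\theta)\mu_{\min}},
\]
where the last inequality uses that $g(\theta)<0$ for $\theta<0$, so each of the $d$ eigenvalues $\lambda_i(\E[\vec{Y}])\ge\mu_{\min}$ contributes $e^{g(\theta)\lambda_i}\le e^{g(\theta)\mu_{\min}}$.

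Combining the pieces yields $\Pr[\lambda_{\min}(\vec{Y})\le(1-\delta)\mu_{\min}] \le d\exp\ab(-\theta(1-\delta)\mu_{\min}+g(\theta)\mu_{\min})$ for every $\theta<0$. Optimizing the exponent in $\theta$, its derivative vanishes when $e^{\theta R}=1-\delta$, i.e.\ $\theta=\log(1-\delta)/R<0$; substituting back collapses the exponent to $\tfrac{\mu_{\min}}{R}\ab[-\delta-(1-\delta)\log(1-\delta)]$, which equals $\tfrac{\mu_{\min}}{R}\log\ab(e^{-\delta}/(1-\delta)^{1-\delta})$, giving exactly the stated bound. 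The main obstacle is the subadditivity inequality for matrix cumulants in the second paragraph: unlike the scalar case, $e^{\theta(\vec{X}_1+\vec{X}_2)}\neq e^{\theta\vec{X}_1}e^{\theta\vec{X}_2}$ because the summands need not commute, so the naive product argument fails, and the lossy Golden--Thompson inequality does not extend cleanly beyond two terms; Lieb's concavity theorem is precisely what rescues the reduction, and correctly establishing or invoking it is the crux. The remaining ingredients --- the transfer rule, operator monotonicity of $\log$, and the scalar optimization --- are routine.
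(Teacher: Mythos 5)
Your proposal is correct, and it matches the source: the paper does not prove this lemma at all but imports it verbatim from \citet{tropp-2011-user-friendly} (Corollary 5.2), and your argument is a faithful reconstruction of Tropp's own derivation --- Markov on $e^{\theta\lambda_{\min}(\vec{Y})}$ for $\theta<0$, the Lieb--Jensen master bound $\E[\Tr e^{\theta\vec{Y}}]\le\Tr\exp(\sum_k\log\E[e^{\theta\vec{X}_k}])$, the secant estimate $e^{\theta x}\le 1+g(\theta)x$ lifted by the transfer rule with operator monotonicity of $\log$, and optimization at $e^{\theta R}=1-\delta$, which indeed collapses the exponent to $\frac{\mu_{\min}}{R}\log\bigl(e^{-\delta}/(1-\delta)^{1-\delta}\bigr)$. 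Every step checks out, including the points needing care (positivity of $\vec{I}+g(\theta)\E[\vec{X}_k]$ for $\theta<0$ and trace-exponential monotonicity via Weyl), so nothing further is required.
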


\begin{lemma}[Determinant-trace inequality, Lemma 10 in \citet{abbasi2011improved}]\label{lemma:det-trace-inequality}
Suppose \(\vec{X}_1, \vec{X}_2, \dots, \vec{X}_t \in \RR^d\) and for any \(1\leq s\leq t\), \(\|\vec{X}_s\|_{2} \leq L\). Let \(\overline{\vec{V}}_t = \lambda \vec{I} + \sum_{s=1}^{t} \vec{X}_s \vec{X}_s^{\top}\) for some \(\lambda > 0\). Then,
\[\det(\overline{\vec{V}}_t) \leq \ab(\lambda+\frac{tL^2}{d})^d.\]
\end{lemma}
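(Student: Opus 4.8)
The plan is to exploit the relationship between the determinant and the trace of a positive definite matrix via the arithmetic--geometric mean (AM--GM) inequality. First I would note that $\overline{\bm{V}}_t = \lambda \bm{I} + \sum_{s=1}^{t} \bm{X}_s \bm{X}_s^{\top}$ is symmetric and positive definite: each summand $\bm{X}_s \bm{X}_s^{\top}$ is positive semidefinite, and the term $\lambda \bm{I}$ with $\lambda > 0$ makes the whole matrix strictly positive definite. Hence $\overline{\bm{V}}_t$ has $d$ real eigenvalues $\mu_1, \dots, \mu_d$, each satisfying $\mu_i \geq \lambda > 0$. Its determinant equals the product of these eigenvalues, $\det(\overline{\bm{V}}_t) = \prod_{i=1}^{d} \mu_i$, and its trace equals their sum, $\Tr(\overline{\bm{V}}_t) = \sum_{i=1}^{d} \mu_i$.

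Since all $\mu_i$ are nonnegative, AM--GM gives $\prod_{i=1}^{d} \mu_i \leq \ab(\frac{1}{d}\sum_{i=1}^{d}\mu_i)^d = \ab(\frac{\Tr(\overline{\bm{V}}_t)}{d})^d$. It then remains only to control the trace. Using linearity of the trace together with the identity $\Tr(\bm{X}_s \bm{X}_s^{\top}) = \|\bm{X}_s\|_2^2$, I would compute $\Tr(\overline{\bm{V}}_t) = \lambda \Tr(\bm{I}) + \sum_{s=1}^{t} \|\bm{X}_s\|_2^2 = \lambda d + \sum_{s=1}^{t} \|\bm{X}_s\|_2^2$, and then invoke the hypothesis $\|\bm{X}_s\|_2 \leq L$ to obtain $\Tr(\overline{\bm{V}}_t) \leq \lambda d + t L^2$.

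Combining the two bounds yields $\det(\overline{\bm{V}}_t) \leq \ab(\frac{\lambda d + t L^2}{d})^d = \ab(\lambda + \frac{t L^2}{d})^d$, which is exactly the claim. I anticipate no genuine obstacle here, as this is a short, self-contained fact of linear algebra; the only points meriting care are verifying that AM--GM is applicable (guaranteed because the regularizer $\lambda \bm{I}$ forces every eigenvalue to be strictly positive) and recalling that the trace of the rank-one outer product $\bm{X}_s \bm{X}_s^{\top}$ equals $\|\bm{X}_s\|_2^2$, so that the norm constraint translates directly into a trace bound.
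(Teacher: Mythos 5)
Your proof is correct: the positive-definiteness of \(\overline{\vec{V}}_t\), the AM--GM bound \(\det(\overline{\vec{V}}_t) \leq (\Tr(\overline{\vec{V}}_t)/d)^d\), and the trace computation \(\Tr(\overline{\vec{V}}_t) = \lambda d + \sum_{s=1}^{t}\|\vec{X}_s\|_2^2 \leq \lambda d + tL^2\) together give exactly the stated bound. The paper states this lemma without proof, deferring to Lemma~10 of \citet{abbasi2011improved}, and your argument is precisely that standard proof, so there is nothing further to reconcile.
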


\end{document}